\documentclass{article}
\usepackage{multirow}
\usepackage{subfigure}
\usepackage[utf8]{inputenc}
\usepackage{amsmath, amsthm}
\usepackage{hyperref}
\usepackage{bbm}
\usepackage{url}
\usepackage{amssymb}
\usepackage{wrapfig}
\usepackage[overload]{empheq}
\usepackage{cleveref} 
\usepackage{CJK}
\usepackage{indentfirst}
\usepackage{mathtools}
\usepackage{amsmath}
\usepackage{amssymb}
\usepackage{mathtools}
\usepackage{amsthm}
\usepackage{comment}
\usepackage{color}
\usepackage{algorithm}
\usepackage{algorithmicx}
\usepackage{algpseudocode}
\usepackage{braket}
\usepackage{relsize}
\usepackage{adjustbox}
\usepackage{lmodern}
\theoremstyle{plain}
\usepackage{pifont}
\usepackage{float}
\usepackage{enumitem}
\usepackage{natbib}

\usepackage{microtype}
\usepackage{graphicx}
\usepackage{subfigure}
\usepackage{booktabs} 
\usepackage{tikz}

\usepackage[final]{neurips_2025}

\usepackage[utf8]{inputenc} 
\usepackage[T1]{fontenc}    
\usepackage{hyperref}       
\usepackage{url}            
\usepackage{booktabs}       
\usepackage{amsfonts}       
\usepackage{nicefrac}       
\usepackage{microtype}      
\usepackage{xcolor}         

\theoremstyle{plain}
\newtheorem{theorem}{Theorem}[section]

\newtheorem{lemma}[theorem]{Lemma}

\theoremstyle{definition}
\newtheorem{definition}[theorem]{Definition}
\newtheorem{assumption}[theorem]{Assumption}
\theoremstyle{remark}

\newcommand{\norm}[1]{\left \lVert #1 \right\rVert }

\newcommand{\E}{\mathbb{E}}

\DeclareMathOperator*{\cO}{\mathcal{O}}
\DeclareMathOperator*{\Rd}{\mathbb{R}^d}

\title{Global Convergence for Average Reward Constrained MDPs with Primal-Dual Actor Critic Algorithm}

\author{
  Yang Xu\thanks{Equal contribution.} \\
  Purdue University, USA\\
  \texttt{xu1720@purdue.edu} \\
  \And
  Swetha Ganesh\footnotemark[1] \\
  Purdue University, USA \\
  Indian Institute of Science, Bengaluru, India\\
  \texttt{swethaganesh@iisc.ac.in} \\
  \AND
  Washim Uddin Mondal \\
  Indian Institute of Technology Kanpur \\
  \texttt{wmondal@iitk.ac.in} \\
  \And
  Qinbo Bai \\
  Purdue University, USA \\
  \texttt{bai113@purdue.edu} \\
  \And
  Vaneet Aggarwal \\
  Purdue University, USA \\
  \texttt{vaneet@purdue.edu} \\
}

\begin{document}

\maketitle

\begin{abstract}
This paper investigates infinite-horizon average reward Constrained Markov Decision Processes (CMDPs) under general parametrized policies with smooth and bounded policy gradients. We propose a Primal-Dual Natural Actor-Critic algorithm that adeptly manages constraints while ensuring a high convergence rate. In particular, our algorithm achieves global convergence and constraint violation rates of $\Tilde{\mathcal{O}}(1/\sqrt{T})$ over a horizon of length $T$ when the mixing time, $\tau_{\mathrm{mix}}$, is known to the learner. In absence of knowledge of $\tau_{\mathrm{mix}}$, the achievable rates change to $\tilde{\mathcal{O}}(1/T^{0.5-\epsilon})$ provided that $T \geq \tilde{\cO}\left(\tau_{\mathrm{mix}}^{2/\epsilon}\right)$. Our results match the theoretical lower bound for Markov Decision Processes and establish a new benchmark in the theoretical exploration of average reward CMDPs.
\end{abstract}

\section{Introduction}

Reinforcement Learning (RL) is a paradigm where an agent learns to maximize its average reward in an unknown environment through repeated interactions. RL finds applications in diverse domains such as transportation, communication networks, robotics, and epidemic control \citep{al2019deeppool,haliem2021distributed,panju2021queueing,chen2023option,ling2023cooperating}. Among various RL settings, the infinite horizon average reward setup is of particular significance for modeling long-term objectives in practical scenarios. This setting is critical as it aligns with real-world applications requiring persistent and consistent performance over time.

In many applications, agents must operate under certain constraints. For instance, in transportation networks, delivery times must adhere to specified windows; in communication networks, resource allocations must stay within budget limits. Constrained Markov Decision Processes (CMDPs) effectively incorporate these constraints by introducing a cost function alongside the reward function. In average reward CMDPs, agents aim to maximize the average reward while ensuring that the average cost does not exceed a predefined threshold, making them crucial for scenarios where safety, budget, or resource constraints are paramount.

Finding optimal policies for average reward CMDPs is challenging, especially if the environment is unknown. The efficiency of a solution to the CMDP is measured by the rate at which the average global optimality error and the constraint violation diminish as a function of the length of the horizon $T$. Although many existing works focus on the tabular policies \cite{gattami2021reinforcement,agarwal2022regret,agarwal2022concave,chen2022learning}, these solutions do not apply to real-life scenarios where the state space is large or infinite.

General parametrization offers a useful approach for dealing with such scenarios. It indexes policies via finite-dimensional parameters, which makes them suitable for large state space CMDPs. However, as exhibited in Table 1, the current state-of-the-art algorithms for average reward CMDPs with general parametrization achieve a convergence rate of $\Tilde{\mathcal{O}}(1/T^{1/5})$ \cite{bai2024learning}, which is far from the theoretical lower bound of  $\Omega(1/\sqrt{T})$. This significant gap highlights the need for improved algorithms that can achieve theoretical optimality.

\begin{table*}[t]
    \centering
    \resizebox{1.0\textwidth}{!}
    {
    \begin{tabular}{|c|c|c|c|c|c|}
        \hline
        Algorithm & Global Convergence & Violation & Mixing Time Unknown & Model-free & Setting\\
	\hline
        Algorithm 1 in \citep{chen2022learning} & $\tilde{\mathcal{O}}\bigg(1/\sqrt{T}\bigg)$  & $\tilde{\mathcal{O}}\bigg(1/\sqrt{T}\bigg)$  & No & No & Tabular\\
        \hline
        Algorithm 3 in \citep{chen2022learning} & $\tilde{\mathcal{O}}\bigg(1/T^{1/3}\bigg)$ & $\tilde{\mathcal{O}}\bigg(1/T^{1/3}\bigg)$ & Yes & No & Tabular\\
        \hline
        UC-CURL and PS-CURL \citep{agarwal2022concave} & $\tilde{\mathcal{O}}\bigg(1/\sqrt{T}\bigg)$ & $0$ & Yes & No & Tabular\\
	\hline
        Algorithm 2 in \citep{ghosh2022achieving} & $\tilde{\mathcal{O}}\bigg(1/T^{1/4}\bigg)$ & $\tilde{\mathcal{O}}\bigg(1/T^{1/4}\bigg)$ & Yes & - & Linear MDP\\
	\hline
        Algorithm 3 in \citep{ghosh2022achieving} & $\tilde{\mathcal{O}}\bigg(1/\sqrt{T}\bigg)$ & $\tilde{\mathcal{O}}\bigg(1/\sqrt{T}\bigg)$ & No & - & Linear MDP\\
	\hline
        Triple-QA \citep{wei2022provably} & $\tilde{\mathcal{O}}\bigg(1/T^{1/6}\bigg)$ & $0$ & Yes & Yes & Tabular\\
	\hline
        Algorithm 1 in \citep{bai2024learning} & $\tilde{\mathcal{O}}\bigg(1/T^{1/5}\bigg)$ & $\tilde{\mathcal{O}}\bigg(1/T^{1/5}\bigg)$ & No & Yes & General Parameterization \\
	\hline
        This paper (Theorem \ref{thm_global_convergence_2}) & $\tilde{\mathcal{O}}\bigg(1/\sqrt{T}\bigg)$  & $\tilde{\mathcal{O}}\bigg(1/\sqrt{T}\bigg)$  & No & Yes & General Parameterization \\
	\hline
         This paper (Theorem \ref{thm_global_convergence_1}) & $\tilde{\mathcal{O}}\bigg(1/T^{0.5-\epsilon}\bigg)$  & $\tilde{\mathcal{O}}\bigg(1/T^{0.5-\epsilon}\bigg)$  & Yes \footnotemark  & Yes & General Parameterization \\
	\hline
        \multicolumn{6}{|c|}{\vspace{-0.1cm}} \\ 
        \hline
        \textbf{Lower bound} \citep{auer2008near} & $\Omega\bigg(1/\sqrt{T}\bigg)$  & N/A  & N/A & N/A & N/A\\
	\hline
	\end{tabular}}
        \caption{This table summarizes the different model-based and model-free state-of-the-art algorithms available in the literature for average reward CMDPs and their results for global convergence rate and average constraint violation.
        The bounds of global convergence describe convergence to the best performance permitted by the chosen features and policy class; the residual approximation floor is fixed (independent of $T$), and the listed rates govern how fast the statistical gap decays toward that floor.
 General parameterization refers to parameterizations whose policy score
$\nabla_{\theta}\log\pi_{\theta}(a|s)$ is uniformly bounded and Lipschitz in $\theta$, together with a Fisher non-degeneracy condition. }
	\label{table2}
\end{table*}


\footnotetext{The convergence result holds under the condition that $T\geq \tilde{\cO}\left(\tau_{\mathrm{mix}}^{2/\epsilon}\right)$, where $\tau_{\mathrm{mix}}$ is the mixing time.}
\textbf{Challenges and contributions:} In this paper, we propose a primal-dual-based natural actor-critic algorithm that achieves global convergence and constraint violation rates of $\Tilde{\mathcal{O}}(1/\sqrt{T})$ over a horizon of length $T$ if the mixing time, $\tau_{\mathrm{mix}}$, is known. On the other hand, in the absence of knowledge of $\tau_{\mathrm{mix}}$, the achievable rates change to $\tilde{\mathcal{O}}(1/T^{0.5-\epsilon})$, provided that the horizon length $T\geq \tilde{\cO}(\tau_{\mathrm{mix}}^{2/\epsilon})$. Therefore, even with unknown $\tau_{\mathrm{mix}}$, the achieved rates can be driven arbitrarily close to the optimal one by choosing small enough $\epsilon$, albeit at the cost of a large $T$. Both results significantly improve upon the state-of-the-art convergence rate of $\tilde{\mathcal{O}}(1/T^{1/5})$ in \cite{bai2024learning} for general policy parametrization and match the theoretical lower bound.

Since CMDP does not have strong convexity in the policy parameters with general parametrization, directly applying the primal-dual approach does not lead to optimal guarantees. The lack of strong convexity prevents the convergence result for the dual problem from automatically translating to the primal problem. This issue is reflected in the current state of art convergence rate in average reward CMDPs in \cite{bai2024learning}, which is $\tilde{\mathcal{O}}(1/{T}^{1/5})$ using a primal-dual approach, whereas the unconstrained counterpart in \cite{bai2023regret} on which the above is built upon achieves a convergence rate of $\tilde{\mathcal{O}}(1/{T}^{1/4})$. Thus, the previous literature for CMDP in general parametrized setups shows a gap in the results of the unconstrained and constrained setups. 

It is evident that directly adding a primal-dual structure to an existing actor-critic algorithm, such as in \cite{ganesh2024accelerated}, does not result in $\tilde{\mathcal{O}}(1/\sqrt{T})$ convergence rate for CMDP with general parametrization. This is specifically due to the existence of dual learning rate $\beta$, which eventually becomes one of the dominant terms in the convergence rate of the Lagrange function. If the dual learning rate is too low, the constraint violation converges very slowly. However, if $\beta$ is too high, the primal updates may exhibit high variance, slowing down convergence to the optimal policy. To navigate this tradeoff, it is crucial to carefully tune the problem parameters. 

Our algorithm has a nested loop structure where the outer loop of length $K$ updates the primal-dual parameters, and the inner loops of length $H$ run the natural policy gradient (NPG) and optimal critic parameter finding subroutines. We show that, to achieve the optimal rates, the parameter $H$ should be nearly constant while $K$ should be $\tilde{\Theta}(T)$. This adjustment makes our algorithm resemble a single-timescale algorithm. In contrast, for unconstrained MDPs, $K$ and $H$ are typically set to $\Theta(\sqrt{T})$ \cite{ganesh2024accelerated}. Achieving $\mathcal{O}(1/\sqrt{T})$ convergence in our setting requires ensuring that the bias remains sufficiently small despite a significantly low value of $H$. Achieving this challenging goal is made possible due to the use of Multi-level Monte Carlo (MLMC)-based estimates in the inner loop subroutines and a sharper analysis of these updates. One benefit of using MLMC-based estimates is that it allows us to design an algorithm that achieves near-optimal global convergence and constraint violation rates without knowledge of the mixing time. Related unconstrained average-reward results have also used MLMC to avoid mixing-time oracles \citep{suttle2023beyond, patel2024towards,ganesh2024accelerated}; here we apply this idea to CMDPs via a primal–dual natural actor–critic approach and provide global-rate and average-violation guarantees. We would like to emphasize that our work is the first in the general parameterized CMDP literature to achieve this feat.
\vspace{-.03in}


\section{Formulation}

\noindent Consider an infinite-horizon average reward constrained Markov Decision Process (CMDP) denoted as $\mathcal{M} = (\mathcal{S}, \mathcal{A}, r, c, P, \rho)$ where $\mathcal{S}$ is the state space, $\mathcal{A}$ is the action space of size $A$, $r: \mathcal{S} \times \mathcal{A} \rightarrow [0, 1]$ represents the reward function, $c: \mathcal{S} \times \mathcal{A} \rightarrow [-1, 1]$ denotes the constraint cost function, $P: \mathcal{S} \times \mathcal{A} \rightarrow \Delta({\mathcal{S}})$ is the state transition function where $\Delta(\mathcal{S})$ denotes the probability simplex over $\mathcal{S}$, and $\rho\in \Delta(\mathcal{S})$ indicates the initial distribution of states. A policy $\pi: \mathcal{S} \rightarrow \Delta(\mathcal{A})$ maps a state to an action distribution. The average reward and average constraint cost of a policy, $\pi$, denoted as $J_r^\pi$ and $J_c^\pi$ respectively, are defined as
\small
\begin{equation}
    J_{g}^\pi \triangleq \lim_{T \rightarrow \infty} \frac{1}{T} \mathbb{E}_{\pi} \left[ \sum_{t=0}^{T-1} g(s_t, a_t) \bigg| s_0 \sim \rho \right], g\in\{r,c\}
\end{equation}
\normalsize
The expectations computed are over the distribution of all $\pi$-induced trajectories $\{(s_t, a_t)\}_{t=0}^\infty$ where $a_t \sim \pi(s_t)$, and $s_{t+1} \sim P(s_t, a_t)$, $\forall t \in \{0, 1, \cdots \}$. For simplifying the notation, we drop the dependence on $\rho$ when there is no confusion. We aim to maximize the average reward while ensuring that the average cost exceeds a given threshold. Without loss of generality, we can formulate this as:
\begin{equation} \label{eq:unparametrized_formulation}
    \max_\pi J_{r}^\pi \text{ s.t. } J_c^\pi \geq 0
\end{equation}
When the underlying state space, $\mathcal{S}$ is large, the above problem becomes difficult to solve because the search space of the policy, $\pi$, increases exponentially with $|\mathcal{S}\times\mathcal{A}|$. We, therefore, consider a class of parametrized policies, $\{\pi_\theta | \theta \in \Theta\}$ that indexes the policies by a $d$-dimensional parameter, $\theta \in \mathbb{R}^d$ where $d \ll |\mathcal{S}||\mathcal{A}|$. The original problem in (2) can then be reformulated as follows.
\begin{equation}
    \label{eq:objective}
    \max_{\theta \in \Theta} {J}_r^{\pi_\theta} \text{ s.t. } J_{c}^{\pi_\theta} \geq 0
\end{equation}
In the remaining article, we use $J_g^{\pi_\theta} = J_g(\theta)$ for $g \in \{r, c\}$ for simplifying the notation. We assume the optimization problem in \eqref{eq:objective} obeys the Slater condition, which ensures the existence of an interior point solution. This assumption is commonly used in model-free average-reward CMDPs \cite{wei2022provably, bai2024learning}.
\begin{assumption}[Slater condition]
    \label{ass_slater}
    There exists a $\delta\in (0, 1)$ and $\bar{\theta} \in \Theta$ such that $J_{c}(\bar{\theta}) \geq \delta$.
\end{assumption}
We now make the following assumption on the CMDP.
\begin{assumption}
    \label{assump:ergodic_mdp}
    The CMDP $\mathcal{M}$ is ergodic, i.e., the Markov chain, $\{s_t\}_{t\geq0}$, induced under every policy $\pi$, is irreducible and aperiodic.
\end{assumption}

Note that most works on average reward MDPs and CMDPs with general parameterizations, to the best of our knowledge, assume ergodicity \cite{bai2024learning, ganesh2024variance,bai2023regret}. If CMDP $\mathcal{M}$ is ergodic, then $\forall \theta\in\Theta$, there exists a $\rho$-independent unique stationary distribution $d^{\pi_{\theta}}\in \Delta(\mathcal{S})$, that obeys $P^{\pi_{\theta}}d^{\pi_{\theta}}=d^{\pi_{\theta}}$ where $P^{\pi_\theta}(s, s')= \sum_{a\in\mathcal{A}}\pi_{\theta}(a|s)P(s'|s, a)$, $\forall s, s'\in\mathcal{S}$. Since $\forall \pi$, $J_g^{\pi} = \sum_{s, a} g(s, a)\pi(a|s) d^{\pi}(s)$, the assumption of ergodicity also ensures that $J_g^{\pi}$ is independent of $\rho$, $\forall g\in \{r, c\}$. Next, we define the mixing time of an ergodic CMDP.

\begin{definition}
The mixing time of a CMDP $\mathcal{M}$ with respect to a policy parameter $\theta$ is defined as, $\tau_{\mathrm{mix}}^{\theta}\coloneqq \min\left\lbrace t\geq 1\bigg| \|(P^{\pi_{\theta}})^t(s, \cdot) - d^{\pi_\theta}\|\leq \dfrac{1}{4}, \forall s\in\mathcal{S}\right\rbrace$. We also define $\tau_{\mathrm{mix}}\coloneqq \sup_{\theta\in\Theta} \tau^{\theta}_{\mathrm{mix}} $ as the overall mixing time. In this paper,  $\tau_{\mathrm{mix}}$ is finite due to ergodicity.
\end{definition}

The mixing time of an MDP measures how fast the MDP reaches its stationary distribution when executing a fixed policy. We use a primal-dual actor-critic approach to solve \eqref{eq:objective}. Before proceeding further, we define a few terms. The action-value function corresponding to a policy $\pi_\theta$ is given as 
\small
\begin{equation} \label{q_function}
    \begin{aligned}
    &Q_g^{\pi_{\theta}}(s,a) = \mathbb{E}_\pi\Bigg[\sum_{t=0}^{\infty} \bigg\{ g(s_t,a_t) - J_g(\theta)\bigg\}\bigg| s_0=s, a_0=a\Bigg]
    \end{aligned}
\end{equation}
\normalsize
where $g \in \{r,c\}$ and $(s, a)\in \mathcal{S}\times \mathcal{A}$. We further write their corresponding state value function as
\begin{equation}\label{v_function}
    \begin{aligned}
     V_g^{\pi_{\theta}}(s) = \mathbb E_{a \sim \pi_{\theta}(s)}[Q_g^{\pi_{\theta}}(s,a)]
    \end{aligned}
\end{equation}
The Bellman's equation can be expressed as follows \citep{puterman2014markov} $\forall (s, a)\in \mathcal{S}\times\mathcal{A}$, and $g\in\{r, c\}$.
\begin{equation}\label{bellman}
    \begin{aligned}
    Q_g^{\pi_{\theta}}(s, a)
    = g(s,a) - J_g(\theta) + \mathbb{E}_{s'\sim P(s, a)}\left[V_g^{\pi_{\theta}}(s')\right],
    \end{aligned}
\end{equation}

\begin{wrapfigure}{r}{0.5\textwidth}
  \vspace{-22pt}
  \begin{center}
  \fbox{%
    \begin{minipage}{0.5\textwidth}
    \vspace{-.21in}
    \begin{algorithm}[H]
    \caption{Primal-Dual Natural Actor-Critic (PDNAC)}
    \begin{algorithmic}[1]
        \State \textbf{Input:} Initial parameters $\theta_0$, $\omega_0=0$, $\xi_0=0$, $\lambda_0 = 0$, policy update stepsize $\alpha$, dual update stepsize $\beta$, NPG update stepsize $\gamma_{\omega}$, critic update stepsize $\gamma_{\xi}$, initial state $s_0 \sim \rho(\cdot)$, outer loop size $K$, inner loop size $H$,  $T_{\max}$
	  \For{$k = 0,\cdots, K-1$}
            \State $\omega_{g,0}^{k} \gets \omega_0$, ~ $\xi_{g,0}^{k} \gets \xi_0$ $\forall g \in \{r, c\}$;
            \State \textcolor{blue}{/* Critic Subroutine */}
            \For{$h = 0,\cdots,H-1$}
                \State $s_{kh}^0\gets s_0$,  $P_h^k \sim \text{Geom}(1/2)$ \State $l_{kh}\gets (2^{P_h^k}-1)\mathbf{1}(2^{P_h^k}\leq T_{\max})+1$
		          \For{$t = 0, \dots, l_{kh}-1 $}
		              \State Take action $a^t_{kh} \sim \pi_{\theta_k}(\cdot | s^t_{kh})$
		                \State Observe $s^{t+1}_{kh} \sim P(\cdot | s_{kh}^{t}, a^t_{kh})$
		              \State Observe $g(s^t_{kh}, a^t_{kh})$, $g \in \{r,c\} $
		          \EndFor
                \State  $s_0\gets s_{kh}^{l_{kh}}$
		      \State Update $\xi_{g,h}^k$ using \eqref{eq_critic_inner_loop_update} and \eqref{eq:MLMC_Critic}.
            \EndFor
            \State \textcolor{blue}{/* Actor Subroutine */}
            \For{$h = 0,\cdots,H-1$}
                \State $s_{kh}^0\gets s_0$, $Q_h^k  \sim \text{Geom}(1/2)$
                \State $l_{kh}\gets (2^{Q_h^k}-1)\mathbf{1}(2^{Q_h^k}\leq T_{\max})+1$
		      \For{$t = 0, \dots, l_{kh}-1 $}
		          \State Take action $a^t_{kh} \sim \pi_{\theta_k}(\cdot | s^t_{kh})$
		          \State Observe $s^{t+1}_{kh} \sim P(\cdot | s_{kh}^{t}, a^t_{kh})$
		          \State Observe $g(s^t_{kh}, a^t_{kh})$, $g \in \{r,c\} $
		      \EndFor
                \State $s_0\gets s_{kh}^{l_{kh}}$
		      \State Update $\omega_{g,h}^k$ using \eqref{eq_omeqa_h_update} and \eqref{eq_grad_f_estimate}
            \EndFor
            \State $\xi_{g}^k\gets \xi_{g,H}^k$, $\omega_g^k \gets \omega_{g,H}^k$, $ g\in \{r, c\}$
            \State $\omega_k\gets \omega_r^k + \lambda_{k} \omega_c^k$ 
           \State Update $(\theta_{k}, \lambda_{k})$ using \eqref{eq:update}
        \EndFor
    \end{algorithmic}
    \label{alg:pdranac}
\end{algorithm}
  \vspace{-.21in}
 \end{minipage}
   }
  \end{center}
  \vspace{-50pt}
\end{wrapfigure}

We define the advantage term for the reward and cost functions as follows $A_g^{\pi_{\theta}}(s, a) \triangleq Q_g^{\pi_{\theta}}(s, a) - V_g^{\pi_{\theta}}(s)$ where $g \in \{r, c\}$, $(s, a)\in \mathcal{S}\times \mathcal{A}$. With the above notations, we now present the commonly-used policy gradient theorem established by \cite{sutton1999policy} for $g\in\{r, c\}$.  
\begin{align}
\label{eq_policy_grad_theorem}
        \nabla_{\theta} J_g(\theta)\!=\!\mathbb{E}_{\substack{s\sim d^{\pi_\theta}\\ a\sim \pi_\theta(\cdot\mid s)}}\!\bigg[ \!A_g^{\pi_{\theta}}(s,a)\nabla_{\theta}\log\pi_{\theta}(a|s)\!\bigg]
\end{align}

The above term is useful in policy gradient-type algorithms where the learning direction of the parameter $\theta$ is given by $\{\nabla_{\theta} J_g(\theta)\}_{g\in\{r, c\}}$. In this paper, however, we will be interested in the Natural Policy Gradients $\{\omega^*_{g, \theta}\}_{g\in\{r, c\}}$ defined as follows.
\begin{align}\label{eq:optNPG}
        \omega^{*}_{g, \theta} \triangleq F(\theta)^{-1} \nabla_{\theta} J_g(\theta),
    \end{align}
where $F(\theta)\in \mathbb{R}^{d\times d}$ is the Fisher information matrix, which is formally defined as: $F(\theta) = \E_{(s, a)\sim \nu^{\pi_{\theta}}} \left[\nabla_{\theta}\log\pi_{\theta}(a|s)\otimes\nabla_{\theta}\log\pi_{\theta}(a|s) \right]$ where $\nu^{\pi_\theta}(s, a) \triangleq d^{\pi_\theta}(s)\pi_\theta(a|s)$, $\forall (s, a)$, and $\otimes$ defines the outer product. Note that NPG is similar to PG except modulated by the Fisher matrix, which accounts for the rate of change of policies with $\theta$. The NPG direction $\omega^*_{g, \theta}$ can also be expressed as a solution to the following strongly convex optimization problem. 
\begin{align} \label{eq:NPG}
    \min_{\omega\in \mathbb{R}^d} f_g(\theta, \omega) \coloneqq \frac{1}{2}\omega^\top  F(\theta) \omega - \omega^{\top}\nabla_{\theta} J_g(\theta)
\end{align}
The above formulation allows one to calculate the NPG in a gradient-based iterative procedure. In particular, the gradient of $f_g(\theta, \cdot)$ is obtained as $\nabla_{\omega} f_g(\theta, \omega) = F(\theta)\omega - \nabla_{\theta} J_g(\theta)$.


\section{Algorithm}

We solve $\eqref{eq:objective}$ via a primal-dual algorithm based on the following saddle point optimization.
\begin{align}
\label{eq:def_saddle_point_opt}
    \begin{split}
    \max_{\theta\in\Theta}\min_{\lambda\geq 0}&~ \mathcal{L}(\theta, \lambda) \triangleq J_r(\theta) +\lambda J_c(\theta)
    \end{split}
\end{align}
The term $\mathcal{L}(\cdot, \cdot)$ is defined as the Lagrange function, and $\lambda$ is the Lagrange multiplier. Our algorithm aims updating $(\theta, \lambda)$ by the policy gradient iteration $\forall k\in\{0,  \cdots, K-1\}$ as shown below from an arbitrary initial point $(\theta_0, \lambda_0=0)$. 

\begin{align}
    \theta_{k+1} = \theta_k +\alpha F(\theta_k)^{-1}\nabla_{\theta}\mathcal{L}(\theta_k, \lambda_k), ~~\lambda_{k+1} = \mathcal{P}_{[0, \frac{2}{\delta}]} \left[\lambda_k - \beta J_c(\theta_k)\right]
\end{align}
where $\alpha$ and $\beta$ denote primal and dual learning rates respectively, and $\delta$ indicates the Slater parameter introduced in Assumption \ref{ass_slater}. Moreover, for any $\Lambda\subset \mathbb{R}$,  $\mathcal{P}_{\Lambda}$ indicates the projection onto $\Lambda$. Since  $\nabla_{\theta}\mathcal{L}(\theta_k, \lambda_k)$, $F(\theta_k)$, and $J_c(\theta_k)$ are not exactly computable due to lack of knowledge of the exact transition kernel, $P$, and thus, that of the occupancy measure, $\nu^{\pi_{\theta_k}}$, in most RL scenarios, we use the following approximate updates.
\begin{align}\label{eq:update}
    \theta_{k+1} = \theta_k+\alpha \omega_k, \lambda_{k+1} = \mathcal{P}_{[0,\frac{2}{\delta}]}[\lambda_k -\beta \eta_{c}^k]
\end{align}
where $\omega_k$ is the estimate of the NPG $F(\theta_k)^{-1}\nabla_{\theta} \mathcal{L}(\theta_k, \lambda_k)$, while $\eta_{c}^k$ estimates $J_c(\theta_k)$. Below, we discuss the detailed procedure to compute these estimates. 

\subsection{Estimation Procedure}

We formally characterize our detailed algorithm in Algorithm \ref{alg:pdranac}, which utilizes a Multi-Level Monte Carlo (MLMC)-based Actor-Critic algorithm to compute the estimates stated above. For the exposition of our algorithm, it is beneficial to first discuss the critic estimation procedure, where the goal is to estimate the value functions and the average reward and cost functions. These estimates are then further used to obtain the NPG estimates, which are, in turn, used to update the policy parameter. The algorithm runs in $K$ epochs (also called \textit{outer loops}). At the start of $k$th epoch, the primal and dual parameters are denoted as $\theta_{k}, \lambda_{k}$ respectively.

\subsubsection{Critic Estimation}
\label{sec_critic_estimate}

At the $k$th epoch, one of the tasks in critic estimation is to obtain an estimate of $J_g(\theta_k)$, $g\in \{r, c\}$. Note that $J_g(\theta_k)$ can be written as a solution to the following optimization.
\begin{align}
	\min_{\eta\in \mathbb{R}} R_g(\theta_k, \eta) \coloneqq \frac{1}{2} \sum_{s\in \mathcal{S}} \sum_{a\in \mathcal{A}} \nu^{\pi_{\theta_k}}_g(s, a) \left\{ \eta - g(s, a)\right\}^2
\end{align}
The second task is to estimate the value function $V_g^{\pi_{\theta_k}}$. To facilitate this objective, it is assumed that, $\forall g\in \{r, c\}$, the value function $V_g^{\pi_{\theta_k}}(\cdot)$ is well-approximated by a linear critic function $\hat{V}_g(\zeta_g^{\theta_k}, \cdot)\coloneqq \langle \phi_g(\cdot), \zeta_{g}^{\theta_k}\rangle$ where $\phi_g:\mathcal{S}\rightarrow \mathbb{R}^m$ is a feature mapping with the property that $\|\phi_g(s)\|\leq 1$, $\forall s\in \mathcal{S}$, and $\zeta^{\theta_k}_{g}\in \mathbb{R}^m$ is a solution of the following optimization.
\begin{align}\label{value_function_problem}
	\min_{\zeta\in\mathbb{R}^m}  E_g(\theta_k, \zeta) \coloneqq \frac{1}{2}\sum_{s \in \mathcal{S}} d^{\pi_{\theta_k}}(s) (V_g^{\pi_{\theta_k}}(s) - \hat{V}_g(\zeta, s))^2
\end{align} 
Note that the gradients of the functions $R_g(\theta_k, \cdot)$, $E_g(\theta_k, \cdot)$ can be obtained as follows.
\begin{align}
    \label{eq_15_washim}
    \nabla_{\eta} R_g(\theta_k, \eta) &= \sum_{s \in \mathcal{S}} \sum_{a\in \mathcal{A}} \nu_{g}^{\pi_{\theta_k}}(s, a) \left\{ \eta - g(s, a) \right\}, ~\forall \eta\in \mathbb{R},\\
    \nabla_{\zeta} E_g(\theta_k, \zeta) &= \sum_{s \in \mathcal{S}} \sum_{a\in \mathcal{A}} \nu_g^{\pi_{\theta_k}}(s, a) \left( \zeta^{\top} \phi_g(s) - Q_g^{\pi_{\theta_k}}(s, a)\right)\phi_g(s), ~\forall \zeta\in \mathbb{R}^{m} \nonumber\\
    \label{eq_16_washim}
    &\hspace{-1.5cm}\overset{(a)}{=}\sum_{s \in \mathcal{S}} \sum_{a\in \mathcal{A}} \nu_g^{\pi_{\theta_k}}(s, a) \left( \zeta^{\top} \phi_g(s) - g(s, a) + J_g(\theta_{k}) - \mathbb{E}_{s'\sim P(s, a)}[V_g^{\pi_{\theta_{k}}}(s')]\right)\phi_g(s)
\end{align}
where $(a)$ results from Bellman's equation \eqref{bellman}. Since the above gradients cannot be exactly obtained due to lack of knowledge of the transition model $P$, and thus that of $\nu_{g}^{\pi_{\theta_{k}}}$, we execute the following $H$ \textit{inner loop} steps to compute approximations of $J_g(\theta_{k})$ and $\zeta_{g}^{\theta_{k}}$ from $\eta_{g, 0}^k =0$ and $\zeta_{g, 0}^k = 0$.
\begin{align}
    \eta_{g,h+1}^k = \eta_{g,h}^k- c_{\gamma} \gamma_{\xi} \hat{\nabla}_{\eta} R_g(\theta_{k}, \eta_{g,h}^k), ~~\zeta_{g,h+1}^k = \zeta_{g,h}^k-\gamma_{\xi}\hat{\nabla}_{\zeta} E_g(\theta_{k}, \xi_{g, h}^k) 
    \label{eq:critic-split}
\end{align}
where $c_{\gamma}$ is a constant, $\gamma_{\xi}$ defines a learning parameter, and $h\in\{0, \cdots, H-1\}$. Moreover, the terms $\hat{\nabla}_{\eta} R_g(\theta_{k}, \eta_{g,h}^k)$ and $\hat{\nabla}_{\zeta} E_g(\theta_{k}, \xi_{g, h}^k)$ indicate estimates of $\nabla_{\eta} R_g(\theta_{k}, \eta_{g,h}^k)$ and $\nabla_{\zeta} E_g(\theta_{k}, \zeta_{g, h}^k)$ respectively where $\xi_{g, h}^k \triangleq [\eta_{g,h}^k, (\zeta_{g, h}^k)^{\top}]^{\top}$. Because the expression of $\nabla_{\zeta} E_g(\theta_{k}, \zeta_{g, h}^k)$ comprises $\zeta_{g, h}^k$ and $J_g(\theta_{k})$ (refer \eqref{eq_16_washim}), its estimate is a function of $\eta_{g, h}^k$ and $\zeta_{g, h}^k$. At the end of inner loop iterations, we obtain $\eta^k_{g, H}$ and $\zeta_{g, H}^k$ which are estimates of $J_g(\theta_{k})$ and $\zeta_{g}^{\theta_{k}}$ respectively. It remains to see how the gradient estimates used in \eqref{eq:critic-split} can be obtained. Note that \eqref{eq:critic-split} can be compactly written as
\begin{align}
    \label{eq_critic_inner_loop_update}
    \xi_{g,h+1}^k &= \xi_{g,h}^k - \gamma_{\xi} \mathbf{v}_g(\theta_{k}, \xi_{g,h}^k),~~\mathbf{v}_g(\theta_{k}, \xi_{g,h}^k) \triangleq [c_{\gamma}\hat{\nabla}_{\eta} R_g(\theta_{k}, \eta_{g,h}^k), (\hat{\nabla}_{\zeta} E_g(\theta_{k}, \xi_{g, h}^k))^{\top}]^{\top}
\end{align}
For a given pair $(k,h)$, let $\mathcal{T}_{kh} = \{(s_{kh}^t, a_{kh}^t, s_{kh}^{t+1})\}_{t=0}^{l_{kh}-1}$ indicate a $\pi_{\theta_{k}}$-induced trajectory of length $l_{kh}=2^{Q_h^k}$ where $Q_{h}^k\sim \mathrm{Geom}(1/2)$. Following \eqref{eq_15_washim} and \eqref{eq_16_washim}, an estimate of $\mathbf{v}_g(\theta_k, \xi_{g,h}^k)$ based on a single state transition sample $z_{kh}^j = (s_{kh}^j, a_{kh}^j, s_{kh}^{j+1})$ can be obtained as
\begin{align}
    \label{eq_19_washim}
    \mathbf{v}_g&(\theta_k,\xi_{g,h}^k;z_{kh}^j) \nonumber\\
    &= \underbrace{
    \begin{pmatrix}
	c_{\gamma} & 0 \\
        \phi_g(s_{kh}^{j}) & \phi_g(s_{kh}^{j}(\phi_g(s_{kh}^{j})-\phi_g(s_{kh}^{j+1}))^\top  
    \end{pmatrix}}_{\triangleq~ \mathbf{A}_g(\theta_{k}; z_{kh}^j)} \xi_{g,h}^k - \underbrace{\begin{pmatrix}
		c_{\gamma} g(s_{kh}^{j},a_{kh}^{j}) \\
		g(s_{kh}^j,a_{kh}^j)\phi_g(s_{kh}^{j})
		\end{pmatrix}}_{\triangleq~ \mathbf{b}_g(\theta_{k}; z_{kh}^j)} 
\end{align}
Applying MLMC-based estimation, the term $\mathbf{v}_g(\theta_{k}, \xi_{g, h}^{k})$ is finally computed as
\begin{align}
    \label{eq:MLMC_Critic}
    \mathbf{v}_{g}(\theta_k, \xi_{g, h}^k) &= \mathbf{v}_{g,kh}^0 +
    \begin{cases}
        2^{Q_h^k}(\mathbf{v}_{g,kh}^{Q_h^k} - \mathbf{v}_{g,kh}^{Q_h^k - 1}),& \text{if } 2^{Q_h^k}\leq T_{\max}\\
	0, & \text{otherwise}
    \end{cases}    
\end{align}
where $T_{\max}$ is a constant, $\mathbf{v}^j_{g, kh} = 2^{-j}\sum_{t=0}^{2^j-1} \mathbf{v}_g(\theta_k, \xi_{g, h}^k; z_{kh}^t)$, $j\in \{0,Q_h^k-1, Q_h^k\}$.
Note that the maximum number of state transition samples utilized in the MLMC estimate is $T_{\max}$. Moreover, it can be demonstrated that the samples used \textit{on average} is $\tilde{O}(\log T_{\max})$. The advantage of the MLMC estimator is that it achieves the same bias as averaging \(T_{\max}\) samples, but requires only \(\Tilde{\cO} (\log T_{\max})\) samples. In addition, since drawing from a geometric distribution does not require knowledge of the mixing time, we can eliminate the mixing time knowledge assumption used in previous works \cite{bai2023regret, ganesh2024variance, bai2024learning}. Furthermore, these previous works utilize policy gradient methods and require saving the trajectories of length $H$ for gradient estimations, while our approach does not. Therefore, our algorithm reduces the memory complexity by a factor of $H$, which is significant since the choice of $H$ in these works scales with mixing and hitting times of the MDP.

\subsubsection{Natural Policy Gradient (NPG) Estimator}
\label{sec_npg_estimate}

Recall that the outcome of the critic estimation at the $k$th epoch is $\xi_{g,H}^k = [\eta_{g,H}^k, (\zeta_{g, H}^k)^{\top}]^{\top}$ where $\eta_{g,H}^k$, $\zeta_{g, H}^k$ estimate $J_g(\theta_{k})$ and the critic parameter $\zeta_g^{\theta_{k}}$ respectively. For simplicity, we will denote $\xi_{g,H}^k$ as $\xi_{g}^k = [\eta_{g}^k, (\zeta_{g}^k)^{\top}]^{\top}$. We estimate the NPG $\omega_{g, \theta_k}^*$ (refer to the definition \eqref{eq:optNPG}) using a $H$ step inner loop as stated below $\forall h\in \{0, \cdots, H-1\}$ starting from $\omega_{g, 0}^k=0$.
\begin{align}
    \label{eq_omeqa_h_update}
    \omega_{g,h+1}^k = \omega_{g,h}^k - \gamma_{\omega} \hat{\nabla}_{\omega} f_g(\theta_{k}, \omega_{g, h}^k, \xi_{g}^k)
\end{align}
where $\hat{\nabla}_{\omega} f_g(\theta_{k}, \omega_{g, h}^k, \xi_{g}^k)$ is an MLMC-based estimate of $\nabla_{\omega} f_g(\theta_{k}, \omega_{g, h}^k)$ where $f_g$ is given in \eqref{eq:NPG}. To obtain this estimate, a $\pi_{\theta_k}$-induced trajectory $\mathcal{T}_{kh} = \{(s_{kh}^t, a_{kh}^t, s_{kh}^{t+1})\}_{t=0}^{l_{kh}-1}$ of length $l_{kh} = 2^{P_h^k}$ is considered where $P_h^k\sim \mathrm{Geom}(1/2)$. For a certain transition $z_{kh}^j = (s_{kh}^j, a_{kh}^j, s_{kh}^{j+1})$, define the following estimate.
\begin{align}
    \label{eq_22_washim_new}
    &\hat{\nabla}_{\omega} f_g(\theta_{k}, \omega_{g,h}^k, \xi_{g}^k ; z_{kh}^j) = \hat{F}(\theta_{k}; z_{kh}^j)\omega_{g, h}^k - \hat{\nabla}_{\theta} J_g(\theta_{k}, \xi_{g}^k; z_{kh}^j)~\text{where}~\\
    &\hat{F}(\theta_{k}; z_{kh}^j) = \nabla_{\theta} \log \pi_{\theta_{k}}(a_{kh}^j|s_{kh}^j) \otimes \nabla_{\theta} \log \pi_{\theta_{k}}(a_{kh}^j|s_{kh}^j) \nonumber \\
    & \hat{\nabla}_{\theta} J_g(\theta_{k}, \xi_{g}^k; z_{kh}^j) = \hat{A}_g^{\pi_{\theta_{k}}}(\xi_{g}^k; z_{kh}^j) \nabla_{\theta} \log \pi_{\theta_{k}}(a_{kh}^j|s_{kh}^j)\nonumber\\
    & \hspace{2.3cm} \overset{(a)}{=}\left[ g(s_{kh}^j, a_{kh}^j) - \eta_g^k + \zeta_{g}^k\left(\phi_g(s_{kh}^{j+1}) - \phi_g(s_{kh}^j)\right) \right] \nabla_{\theta} \log \pi_{\theta_{k}}(a_{kh}^j|s_{kh}^j)\nonumber
\end{align}
where the advantage estimate used in $(a)$ is essentially a temporal difference (TD) error. Notice that the estimate of the policy gradient $\nabla_{\theta} J_g(\theta_k)$ depends on $\xi_g^k$ obtained in the critic estimation process. The MLMC-based estimate, therefore, can be obtained as follows.
\begin{align}
    \label{eq_grad_f_estimate}
    \hat{\nabla}_{\omega} f_g(\theta_{k}, \omega_{g, h}^k, \xi_{g}^k)  &= \mathbf{u}_{g,kh}^0 +
    \begin{cases}
        2^{P_h^k}(\mathbf{u}_{g,kh}^{P_h^k} - \mathbf{u}_{g,kh}^{P_h^k - 1}),& \text{if } 2^{P_h^k}\leq T_{\max}\\
	0, & \text{otherwise}
    \end{cases}    
\end{align}
where $\mathbf{u}_{g, kh}^j = 2^{-j}\sum_{t=0}^{2^j-1} \hat{\nabla}_{\omega} f_g(\theta_{k}, \omega_{g, h}^k, \xi_{g}^k ; z_{kh}^t)$, $j\in \{0, P_h^k-1, P_h^k\}$. The above estimate is applied in \eqref{eq_omeqa_h_update}, which finally yields the NPG estimate $\omega_{g, H}^k$. For simplicity, we denote $\omega_{g, H}^k$ as $\omega_g^k$.

\subsection{Primal and Dual Updates}

The estimates $\omega_{g}^k$, $g\in\{r, c\}$ obtained in section  \ref{sec_npg_estimate} can be combined to compute $\omega_k = \omega_{r}^k+\lambda_k \omega_c^k$. Moreover, section \ref{sec_critic_estimate} provides $\eta_{c}^k$ which is an estimate of $J_c(\theta_{k})$. At the $k$th epoch, these estimates can be used to update the policy parameter $\theta_{k}$ and the dual parameter $\lambda_{k}$ following \eqref{eq:update}.

\section{Global Convergence Analysis}
\label{sec_global_conv_ananalysis}

We first state some assumptions that we will be using before proceeding to the main results. Define $\mathbf{A}_{g}(\theta) = \E_{\theta} \mathbf{A}_g(\theta;z)$ where the expectation is over the distribution of $z=(s, a, s')$ where $(s, a)\sim \nu_{g}^{\pi_{\theta}}$, $s'\sim P(s, a)$, and the term $\mathbf{A}_g(\theta; z)$ is defined in \eqref{eq_19_washim}. Similarly, $\mathbf{b}_g(\theta) \triangleq \E_{\theta}\left[\mathbf{b}_g(\theta; z)\right]$. Let $\xi_{g}^*(\theta) = [\mathbf{A}_g(\theta)]^{-1}\mathbf{b}_g(\theta)=[\eta_{g}^*(\theta), \zeta_{g}^*(\theta)]$. With these notations, we are ready to state the following assumptions regarding critic approximations.

\begin{assumption}
    \label{assump:critic-error}
    For $g \in \{r, c\}$, we define the worst-case critic approximation error to be $\epsilon_g^{\mathrm{app}} = \sup_{\theta}  \E_{s \sim d^{\pi_\theta}} \left[ (\zeta_{g}^*(\theta))^{\top}\phi_g(s) - V_g^{\pi_{\theta}}(s) \right]^2$. We assume  $\epsilon^{\mathrm{app}} \triangleq \max \{ \epsilon_r^{\mathrm{app}} , \epsilon_c^{\mathrm{app}}\}$ to be finite.
\end{assumption}

\begin{assumption} 
    \label{assum:critic_positive_definite}
    There exists  $\lambda > 0$ such that $\E_{\theta}[\phi_g(s)\left(\phi_g(s) - \phi_g(s')\right)]-\lambda I$ is positive semi-definite $\forall\theta \in \Theta$ and $g \in \{r, c\}$.
\end{assumption}

Both Assumptions \ref{assump:critic-error} and \ref{assum:critic_positive_definite} are frequently used in analyzing actor-critic methods \cite{bhandari2018, zou2019finite, qiu2021finite, xu2020improved, suttle2023beyond}. Assumption \ref{assump:critic-error} intuitively relates to the quality of the feature mapping where $\epsilon_{\mathrm{app}}$ serves as a measure of this quality: well-crafted features result in small $\epsilon_{\mathrm{app}}$, whereas poorly designed features lead to a larger worst-case error. On the other hand, Assumption \ref{assum:critic_positive_definite}, is essential for ensuring the convergence of the critic updates. It can be shown that Assumption \ref{assum:critic_positive_definite} ensures that the matrix $\mathbf{A}_g(\theta)$ is invertible for sufficiently large $c_{\gamma}$ (details in the appendix).
 
\begin{assumption}
    \label{assump:function_approx_error}
    Define the following function for $\theta, \omega\in\Rd$, $\lambda\geq 0$, and $\nu\in \Delta(\mathcal{S}\times \mathcal{A})$.
    \begin{align}
        \label{equ: minimal compatible function approximation error}
        \begin{split}
            &L_{\nu}(\omega,\theta,\lambda) =\E_{(s, a)\sim \nu}\bigg[\bigg(\nabla_\theta\log\pi_{\theta}(a\vert s)\cdot\omega -A_{r}^{\pi_\theta}(s,a) - \lambda A_c^{\pi_\theta}(s,a) \bigg)^2\bigg]
        \end{split}
    \end{align} 
    Let $\omega_{\theta, \lambda}^* = \arg\min_{\omega\in \Rd} L_{\nu^{\pi_{\theta}}}(\omega, \theta, \lambda)$. It is assumed that $L_{\nu^{\pi^*}}(\omega_{\theta, \lambda}^*, \theta, \lambda)\leq \epsilon_{\mathrm{bias}}$ for $\theta\in \Theta$ and $\lambda\in [0, 2/\delta]$ where $\pi^*$ is the solution to the optimization \eqref{eq:unparametrized_formulation}, and $\epsilon_{\mathrm{bias}}$ is a positive constant. The LHS of the above inequality is called the \textit{transferred compatible function approximation error}. Note that it can be easily verified that $\omega^*_{\theta, \lambda}$ is the NPG update direction $\omega^*_{\theta, \lambda} = F(\theta)^{-1}\nabla_\theta \mathcal{L}(\theta, \lambda)$.
\end{assumption}

\begin{assumption}
    \label{assump:score_func_bounds}
    For all $\theta, \theta_1,\theta_2 \in\Theta$ and $(s,a)\in\mathcal{S}\times\mathcal{A}$, the following holds for some $G_1, G_2>0$.
    \begin{align*}
        \begin{split}
            &(a) \Vert\nabla_{\theta}\log\pi_\theta(a\vert s)\Vert\leq G_1 \quad \\
            &(b) \Vert \nabla_{\theta}\log\pi_{\theta_1}(a\vert s)-\nabla_\theta\log\pi_{\theta_2}(a\vert s)\Vert\leq G_2\Vert \theta_1-\theta_2\Vert.  
        \end{split}
    \end{align*}
\end{assumption}

\begin{assumption}[Fisher non-degenerate policy]
    \label{assump:FND_policy}
    There exists a constant $\mu>0$ such that $F(\theta)-\mu I_{d}$ is positive semidefinite, where $I_{d}$ denotes an identity matrix of dimension $d$.
\end{assumption}

{\bf Comments on Assumptions \ref{assump:function_approx_error}-\ref{assump:FND_policy}:}  We emphasize that these assumptions are commonly applied in the policy gradient (PG) literature \cite{liu2020improved,agarwal2021theory, papini2018stochastic, xu2019sample,fatkhullin2023stochastic}. The term $\epsilon_{\mathrm{bias}}$ reflects the parametrization capacity of $\pi_{\theta}$. For $\pi_{\theta}$ using the softmax parametrization, we directly have $\epsilon_{\mathrm{bias}}=0$ \cite{agarwal2021theory}. However, when $\pi_{\theta}$ employs a restricted parametrization that does not encompass all stochastic policies, $\epsilon_{\mathrm{bias}}$ is greater than zero. It is known that $\epsilon_{\mathrm{bias}}$ remains very small when utilizing rich neural parametrizations \cite{wang2019neural}. Assumption \ref{assump:score_func_bounds} requires that the score function is both bounded and Lipschitz continuous, which is a condition frequently assumed in the analysis of PG-based methods \cite{liu2020improved,agarwal2021theory, papini2018stochastic, xu2019sample,fatkhullin2023stochastic}. Assumption \ref{assump:FND_policy} ensures that the function $f_g(\theta, \cdot)$ is $\mu$-strongly convex by mandating that the eigenvalues of the Fisher information matrix are bounded from below. This is also a standard assumption for deriving global complexity bounds for PG based algorithms \cite{liu2020improved,zhang2021on,Bai_Bedi_Agarwal_Koppel_Aggarwal_2022,fatkhullin2023stochastic}. Recent studies have shown that Assumptions \ref{assump:score_func_bounds}-\ref{assump:FND_policy} hold true in various examples, including Gaussian policies with linearly parameterized means and certain neural parametrizations \cite{liu2020improved, fatkhullin2023stochastic}.

Before proving results for the convergence rate, we first provide the following lemma to associate the global convergence rate of the Lagrange function to the convergence of the actor and critic parameters. Similar ideas have been explored in \cite{ganesh2024accelerated,bai2024learning}. 
\begin{lemma}
    \label{lemma:local_global}
    If the policy parameters, $\{(\theta_k, \lambda_k)\}_{k=1}^K$ are updated via \eqref{eq:update}  and assumptions \ref{assump:function_approx_error}-\ref{assump:FND_policy} hold, then the following inequality is satisfied
    \begin{equation}
        \label{eq:general_bound}
	\begin{split}
            \frac{1}{K}&\E\sum_{k=0}^{K-1}\bigg(\mathcal{L}(\pi^*, \lambda_k)-\mathcal{L}(\theta_k,\lambda_k)\bigg)\leq \sqrt{\epsilon_{\mathrm{bias}}} +\frac{G_1}{K}\sum_{k=0}^{K-1}\E\Vert(\E_k\left[\omega_k\right]-\omega^*_k)\Vert \\
            &+\frac{\alpha G_2}{2K}\sum_{k=0}^{K-1}\E\Vert \omega_k\Vert^2+\frac{1}{\alpha K}\E_{s\sim d^{\pi^*}}[KL(\pi^*(\cdot\vert s)\Vert\pi_{\theta_0}(\cdot\vert s))],
        \end{split}
    \end{equation}
    where $KL(\cdot \|\cdot)$ is the Kullback-Leibler divergence, $\pi^*$ is the optimal policy for \eqref{eq:unparametrized_formulation} and $\omega^*_k \coloneqq \omega^*_{\theta_k,\lambda_k}$ is the exact NPG direction $F(\theta_k)^{-1}\nabla_{\theta}\mathcal{L}(\theta_k, \lambda_k)$. Finally, $\E_k$ denotes conditional expectation given history up to the $k$th iteration.
\end{lemma}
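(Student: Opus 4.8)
The plan is to use the state-weighted Kullback--Leibler divergence $D_k \coloneqq \E_{s\sim d^{\pi^*}}[KL(\pi^*(\cdot\vert s)\Vert\pi_{\theta_k}(\cdot\vert s))]$ as a potential function and to lower-bound its one-step drift by the inner product of the score with the update direction, so that summing over $k$ telescopes. First I would expand the drift exactly as $D_k - D_{k+1} = \E_{(s,a)\sim\nu^{\pi^*}}[\log\pi_{\theta_{k+1}}(a\vert s) - \log\pi_{\theta_k}(a\vert s)]$ with $\nu^{\pi^*}(s,a)=d^{\pi^*}(s)\pi^*(a\vert s)$. Since Assumption \ref{assump:score_func_bounds}(b) makes $\theta\mapsto\log\pi_\theta(a\vert s)$ have $G_2$-Lipschitz gradient, the descent-lemma lower bound combined with the update $\theta_{k+1}-\theta_k=\alpha\omega_k$ from \eqref{eq:update} yields $D_k - D_{k+1} \geq \alpha\,\E_{\nu^{\pi^*}}[\nabla_\theta\log\pi_{\theta_k}(a\vert s)\cdot\omega_k] - \tfrac{\alpha^2 G_2}{2}\Vert\omega_k\Vert^2$, which after rearranging isolates $\E_{\nu^{\pi^*}}[\nabla_\theta\log\pi_{\theta_k}\cdot\omega_k]$ up to the telescoping term $\tfrac1\alpha(D_k-D_{k+1})$ and the $\Vert\omega_k\Vert^2$ penalty.

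Second, I would connect the Lagrangian gap to this inner product. Applying the average-reward performance difference lemma to both $r$ and $c$ and combining with weights $1$ and $\lambda_k$ gives $\mathcal{L}(\pi^*,\lambda_k) - \mathcal{L}(\theta_k,\lambda_k) = \E_{\nu^{\pi^*}}[A_r^{\pi_{\theta_k}}(s,a) + \lambda_k A_c^{\pi_{\theta_k}}(s,a)]$. I would then introduce the exact NPG direction $\omega^*_k = F(\theta_k)^{-1}\nabla_\theta\mathcal{L}(\theta_k,\lambda_k)$, which by Assumption \ref{assump:function_approx_error} equals $\arg\min_\omega L_{\nu^{\pi_{\theta_k}}}(\omega,\theta_k,\lambda_k)$, and split the gap into $\E_{\nu^{\pi^*}}[\nabla_\theta\log\pi_{\theta_k}\cdot\omega^*_k]$ plus the residual $\E_{\nu^{\pi^*}}[A_r^{\pi_{\theta_k}}+\lambda_k A_c^{\pi_{\theta_k}} - \nabla_\theta\log\pi_{\theta_k}\cdot\omega^*_k]$. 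By Jensen and Cauchy--Schwarz the residual is bounded by $\sqrt{L_{\nu^{\pi^*}}(\omega^*_k,\theta_k,\lambda_k)}\leq\sqrt{\epsilon_{\mathrm{bias}}}$; the key observation is that Assumption \ref{assump:function_approx_error} states the transferred-approximation bound precisely under $\nu^{\pi^*}$, which is exactly the measure the performance difference lemma produces.

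Third, I would pass from the exact direction to the estimate by writing $\E_{\nu^{\pi^*}}[\nabla_\theta\log\pi_{\theta_k}\cdot\omega^*_k] = \E_{\nu^{\pi^*}}[\nabla_\theta\log\pi_{\theta_k}\cdot\omega_k] + \E_{\nu^{\pi^*}}[\nabla_\theta\log\pi_{\theta_k}\cdot(\omega^*_k-\omega_k)]$, then substitute the drift bound from the first paragraph into the first summand. The delicate point, which I expect to be the main obstacle, is that the statement demands the \emph{bias} $\E_k[\omega_k]-\omega^*_k$ rather than the full error $\omega_k-\omega^*_k$. To extract it I would apply the conditional expectation $\E_k$ \emph{before} Cauchy--Schwarz: since $\theta_k,\lambda_k,\omega^*_k$ are measurable with respect to the history while the inner-loop randomness resides entirely in $\omega_k$, and $\nu^{\pi^*}$ is a fixed (algorithm-independent) measure, we get $\E_k\E_{\nu^{\pi^*}}[\nabla_\theta\log\pi_{\theta_k}\cdot(\omega^*_k-\omega_k)] = \E_{\nu^{\pi^*}}[\nabla_\theta\log\pi_{\theta_k}\cdot(\omega^*_k-\E_k[\omega_k])]$, which is at most $G_1\Vert\E_k[\omega_k]-\omega^*_k\Vert$ via $\Vert\nabla_\theta\log\pi_{\theta_k}\Vert\leq G_1$ from Assumption \ref{assump:score_func_bounds}(a).

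Finally, collecting the three contributions into a single per-$k$ inequality for $\mathcal{L}(\pi^*,\lambda_k)-\mathcal{L}(\theta_k,\lambda_k)$, taking the full expectation, summing over $k=0,\dots,K-1$, and dividing by $K$ gives the result: the telescoping sum satisfies $\tfrac1{\alpha K}\sum_k(D_k-D_{k+1}) = \tfrac{D_0-D_K}{\alpha K}\leq\tfrac{D_0}{\alpha K}$ after dropping the nonnegative $D_K$, and $D_0 = \E_{s\sim d^{\pi^*}}[KL(\pi^*(\cdot\vert s)\Vert\pi_{\theta_0}(\cdot\vert s))]$ is exactly the last term of \eqref{eq:general_bound}, while the $\sqrt{\epsilon_{\mathrm{bias}}}$, $G_1$-bias, and $\tfrac{\alpha G_2}{2}\Vert\omega_k\Vert^2$ terms reproduce the remaining three. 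Beyond the conditional-expectation maneuver, the only other care-point is keeping the occupancy measures aligned throughout ($\nu^{\pi^*}$ everywhere), so that the compatible-approximation bound and the performance difference lemma combine cleanly.
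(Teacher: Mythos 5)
Your proposal is correct and follows essentially the same route as the paper's proof: the smoothness (descent-lemma) lower bound on the KL drift under $\nu^{\pi^*}$, the performance-difference lemma to express the Lagrangian gap, Jensen/Cauchy--Schwarz to extract $\sqrt{\epsilon_{\mathrm{bias}}}$ from the transferred compatible-approximation error, conditioning before Cauchy--Schwarz to isolate the bias $\E_k[\omega_k]-\omega^*_k$, and telescoping the KL potential. The only difference is organizational (you bound the drift first and then assemble, while the paper runs one chain of inequalities), which does not change the argument.
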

Observe the presence of $\epsilon_{\mathrm{bias}}$ in \eqref{eq:general_bound}. It dictates that due to the incompleteness of the policy class, the global convergence bound cannot be driven to zero. Note that the last term in \eqref{eq:general_bound} is $\cO(1/(\alpha K))$ because the term $\E_{s\sim d^{\pi^*}}[KL(\pi^*(\cdot\vert s)\Vert\pi_{\theta_0}(\cdot\vert s))]$ is a constant. The term related to $\E\|\omega_k\|^2$ can be further decomposed as follows.
\begin{align}
\label{eq:omegadeconposition}
    \begin{split}
        \dfrac{\alpha}{K}\sum_{k=0}^{K-1}\E\Vert \omega_k \Vert^2
        &\leq \dfrac{\alpha}{K}\sum_{k=0}^{K-1}\E\Vert \omega_k -\omega_k^*\Vert^2 + \dfrac{\alpha}{K}\sum_{k=0}^{K-1}\E\Vert \omega_k^* \Vert^2\\
        &\overset{(a)}{\leq} \dfrac{\alpha}{K}\sum_{k=0}^{K-1}\E\Vert \omega_k -\omega_k^*\Vert^2 + \dfrac{\alpha\mu^{-2}}{K}\sum_{k=0}^{K-1}\E\Vert \nabla_{\theta} \mathcal{L}(\theta_k, \lambda_k) \Vert^2
    \end{split}
\end{align}
where $(a)$ follows from Assumption \ref{assump:FND_policy} and the definition that $\omega_k^*=F(\theta_k)^{-1}\nabla_\theta \mathcal{L}(\theta_k,\lambda_k)$. We can obtain a global convergence bound by bounding the terms $\E\|\omega_k-\omega^*_k\|^2$, $\E\Vert(\E\left[\omega_k|\theta_k\right]-\omega^*_k)\Vert$ and $\E\Vert \nabla_{\theta}\mathcal{L}(\theta_k,\lambda_k)  \Vert^2$. The first two terms are the variance and bias of the NPG estimator, $\omega_k$, and the third term indicates the local convergence rate. Further, $\E\Vert \nabla_{\theta} \mathcal{L}(\theta_k,\lambda_k) \Vert^2$ can be upper bounded by a constant (Lemma \ref{lemma_grad_JL_bound} in the appendix). We now provide the convergence result for the actor and critic parameters. For brevity, we use $\lesssim$ to denote $\leq \tilde{\cO}(\cdot)$.
\begin{theorem}
    \label{thm:npg-main}
    Consider Algorithm \ref{alg:pdranac} and let Assumptions \ref{assump:ergodic_mdp}-\ref{assump:FND_policy} hold. If $J_g$ is $L$-smooth, $g \in \{r, c\}$, $\gamma_{\omega} = \frac{2 \log T}{\mu H}$ is such that $\gamma_\omega \leq \frac{\mu}{4(6 G_1^4 \tau_{\mathrm{mix}} \log T_{\max} + 2 G_1^2  \tau_{\mathrm{mix}}^2 \log T_{\max})}$, and $T_{\max}$ obeys $T_{\max} \geq \frac{8G_1^4 \tau_{\mathrm{mix}}}{\mu}$ where $\mu$ is defined in Assumption \ref{assump:FND_policy}, the following inequalities hold $\forall k\in \{0, \cdots, K-1\}$.
    \begin{align} 
        \label{eq:actorbound}
        \left\|\E_k[\omega_{g}^k] - \omega^{*}_{g,k}\right\|^2 &\lesssim \epsilon_{\mathrm{app}}+\dfrac{\tau_{\mathrm{mix}}^2}{T^2} \nonumber \\
        &+ \dfrac{\tau^2_{\mathrm{mix}}}{T_{\max}} \bigg\{\E_k\left[\norm{\xi_{g}^k-\xi_g^{*}(\theta_k)}^2\right] + \tau_{\mathrm{mix}}^2 \bigg\}+ \norm{\E_k[\xi_{g}^k]-\xi_g^{*}(\theta_k)}^2 ,\\
        \label{eq:actorbound2}
        \E_k\left[\|\omega_{g}^k -\omega^{*}_{g,k}\|^2\right] &\lesssim   \epsilon_{\mathrm{app}}+\left(\dfrac{1}{T^2}+\frac{\tau_{\mathrm{mix}}}{H} + \frac{\tau_{\mathrm{mix}}}{T_{\max}}\right)\tau_{\mathrm{mix}}^2+\E_k\left[\norm{\xi_{g}^k-\xi_g^{*}(\theta_k)}^2\right]
    \end{align}
    where $\omega^{*}_{g,k}$ is the NPG direction $F(\theta_k)^{-1}\nabla_{\theta}J_g(\theta_k)$, $\xi_g^*(\theta_k)$ is defined in section \ref{sec_global_conv_ananalysis}, and $\E_k$ denotes conditional expectation given history up to the $k$th iteration.
\end{theorem}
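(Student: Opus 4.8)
The inner loop \eqref{eq_omeqa_h_update} is stochastic gradient descent on the quadratic objective $f_g(\theta_k,\cdot)$ from \eqref{eq:NPG}, whose Hessian is the Fisher matrix $F(\theta_k)$ and whose unique minimizer is $\omega^*_{g,k}=F(\theta_k)^{-1}\nabla_\theta J_g(\theta_k)$. By Assumption \ref{assump:FND_policy}, $f_g(\theta_k,\cdot)$ is $\mu$-strongly convex, so the exact gradient map $\omega\mapsto F(\theta_k)\omega-\nabla_\theta J_g(\theta_k)$ is affine and satisfies $\norm{(I-\gamma_\omega F(\theta_k))v}\leq(1-\gamma_\omega\mu)\norm{v}$ for $\gamma_\omega$ small enough. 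The plan is to analyze the bias $\E_k[\omega_g^k]-\omega^*_{g,k}$ and the mean-square error $\E_k[\norm{\omega_g^k-\omega^*_{g,k}}^2]$ through two separate recursions driven by the statistics of the MLMC gradient estimate \eqref{eq_grad_f_estimate}.

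\textbf{Statistics of the MLMC estimate.} First I would establish the two defining properties of \eqref{eq_grad_f_estimate}. The telescoping identity for the geometric level index gives that, conditioned on $(\theta_k,\xi_g^k,\omega^k_{g,h})$, the expectation of the MLMC estimate equals the expectation of the $T_{\max}$-sample Markov average of the per-sample gradient \eqref{eq_22_washim_new}. I then decompose the deviation of this average from the exact gradient $F(\theta_k)\omega^k_{g,h}-\nabla_\theta J_g(\theta_k)$ into (i) a Markovian mixing bias, bounded by $\wo{\tau_{\mathrm{mix}}/T_{\max}}$ via geometric mixing and the score bound in Assumption \ref{assump:score_func_bounds}(a); and (ii) a critic-induced bias, arising because the advantage in \eqref{eq_22_washim_new} is the TD error built from the estimated critic $\xi_g^k$ rather than the true value function. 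Part (ii) is controlled by $\norm{\E_k[\xi_g^k]-\xi_g^*(\theta_k)}$ together with the linear-approximation residual, which the definition of $\xi_g^*(\theta_k)$ and Assumption \ref{assump:critic-error} tie to $\epsilon_{\mathrm{app}}$. For the second moment, the standard MLMC variance bound applies: the level-difference variances decay geometrically by mixing, so the MLMC variance is inflated only by a $\log T_{\max}$ factor and scales as $\wo{\tau_{\mathrm{mix}}^3+\tau_{\mathrm{mix}}\norm{\xi_g^k-\xi_g^*(\theta_k)}^2}$.

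\textbf{Bias recursion.} Because the gradient is affine in $\omega$, taking $\E_k$ of \eqref{eq_omeqa_h_update} yields the deterministic recursion $\E_k[\omega^k_{g,h+1}]-\omega^*_{g,k}=(I-\gamma_\omega F(\theta_k))(\E_k[\omega^k_{g,h}]-\omega^*_{g,k})-\gamma_\omega b_h$, where $b_h$ is the gradient bias characterized above. Unrolling over $h=0,\dots,H-1$, the homogeneous part contracts as $(1-\gamma_\omega\mu)^H$; with $\gamma_\omega=2\log T/(\mu H)$ this is at most $e^{-2\log T}=T^{-2}$, and bounding the initial error $\norm{\omega^*_{g,k}}$ by a $\tau_{\mathrm{mix}}$-dependent constant produces the $\tau^2_{\mathrm{mix}}/T^2$ term. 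The geometrically weighted sum of the forcing terms $\gamma_\omega b_h$ contributes the remaining pieces: $\epsilon_{\mathrm{app}}$ and $\norm{\E_k[\xi_g^k]-\xi_g^*(\theta_k)}^2$ from the critic bias, and $\tau^2_{\mathrm{mix}}(\E_k[\norm{\xi_g^k-\xi_g^*(\theta_k)}^2]+\tau^2_{\mathrm{mix}})/T_{\max}$ from the mixing bias. Squaring the resulting norm bound gives \eqref{eq:actorbound}.

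\textbf{Mean-square recursion and main obstacle.} For \eqref{eq:actorbound2} I would use the strongly-convex one-step inequality $\E_k[\norm{\omega^k_{g,h+1}-\omega^*_{g,k}}^2]\leq(1-\gamma_\omega\mu)\E_k[\norm{\omega^k_{g,h}-\omega^*_{g,k}}^2]+\gamma_\omega^2\sigma^2+\text{(bias cross-terms)}$, with $\sigma^2$ the MLMC second-moment bound. Summing the geometric series, the initial error again contributes $T^{-2}$, the variance contributes $\gamma_\omega\sigma^2/\mu=\wo{(\tau_{\mathrm{mix}}/H)\tau^2_{\mathrm{mix}}}$ through the step-size choice, and the bias terms contribute $\epsilon_{\mathrm{app}}$, $\wo{\tau_{\mathrm{mix}}^3/T_{\max}}$, and $\E_k[\norm{\xi_g^k-\xi_g^*(\theta_k)}^2]$, matching \eqref{eq:actorbound2}. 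The hard part is the second step: proving the MLMC telescoping identity and variance bound when the $2^j$ samples are nested along a single Markov trajectory rather than i.i.d., and simultaneously separating the critic-error contribution cleanly, since the per-sample gradient \eqref{eq_22_washim_new} depends on $\xi_g^k$ through the TD error. A further subtlety is that $\omega^k_{g,h}$ is correlated with the state carried into inner loop $h$, so $\E_k[\hat F(\theta_k;z)\omega^k_{g,h}]$ is not exactly $F(\theta_k)\E_k[\omega^k_{g,h}]$; this residual must be absorbed by a mixing argument on the initial-state dependence.
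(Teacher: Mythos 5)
Your proposal follows essentially the same route as the paper: the paper abstracts the inner loop as a stochastic linear recursion with biased estimates (its Theorem \ref{thm:generalrecursion}), bounds the MLMC gradient's conditional bias and variance via a Markov-chain concentration lemma and the same $T_0/T_1/T_2$ decomposition of the critic-induced error into $\norm{\xi_g^k-\xi_g^*(\theta_k)}$, $\epsilon_{\mathrm{app}}$, and a Bellman-cancelling term (its Lemma \ref{lem:F-J-bias}), and then unrolls separate mean-square and bias recursions exactly as you describe. The only difference worth noting is ordering: the correlation between $\hat F(\theta_k;z)$ and $\omega^k_{g,h}$ that you flag at the end is handled in the paper by bounding the cross term with the \emph{conditional} bias $\delta_P$ times $\E_k[\norm{\omega^k_{g,h}-\omega^*_{g,k}}^2]$, so the mean-square bound must be established first and then fed into the bias recursion, not the other way around.
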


Theorem \ref{thm:npg-main} bounds the NPG bias $\|\E_k[\omega_g^k]-\omega_{g,k}^*\|$ and the second-order error $\E_k[\|\omega_{g}^k -\omega^{*}_{g,k}\|^2]$ in terms of the critic approximation error $\epsilon_{\mathrm{app}}$, and the bias $\|\E_k[\xi_g^k]-\xi_g^*(\theta_k)\|$ and the second-order error $\E_k [\|\xi_g^k - \xi_g^*(\theta_k)\|^2]$ in the critic parameter estimation. The following theorem provides bounds on these latter quantities.

\begin{theorem}
    \label{th:acc_td}
    Consider Algorithm \ref{alg:pdranac} and let Assumptions \ref{assump:ergodic_mdp}-\ref{assump:FND_policy} hold and $g \in \{r, c\}$. If we choose $\gamma_{\xi} = \frac{2\log T}{\lambda H}$ such that $\gamma_\xi \leq \frac{\lambda}{24 c_\gamma^2\tau_{\mathrm{mix}}  \log T_{\max}}$ while $T_{\max}$ obeys $T_{\max} \geq \frac{8c_\gamma^2 \tau_{\mathrm{mix}}}{\lambda}$ where $\lambda$ is defined in Assumption \ref{assum:critic_positive_definite}, the following inequalities hold $\forall k\in \{0, \cdots, K-1\}$.
    \begin{align} 
        \label{eq:criticbound}
        \left\|\E_k[\xi_{g}^k] - \xi^{*}_{g}(\theta_k)\right\|^2
        &\lesssim \dfrac{1}{T^2} + \dfrac{\tau^2_{\mathrm{mix}}}{T_{\max}}, \\    \label{eq:criticbound2} \E_k[\|\xi_{g}^k - \xi^{*}_g(\theta_k)\|^2] &\lesssim \dfrac{1}{T^2} + \frac{\tau_{\mathrm{mix}}}{H}+\frac{\tau_{\mathrm{mix}}}{T_{\max}} 
    \end{align}
    where $\xi^{*}_g(\theta_k)$ is defined in section \ref{sec_global_conv_ananalysis} and $\E_k$ denotes conditional expectation given history up to the $k$th iteration.
\end{theorem}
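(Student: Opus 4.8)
The plan is to treat the critic recursion \eqref{eq_critic_inner_loop_update} as a \emph{biased linear stochastic approximation} scheme and to control the first and second moments of the error $e_h \coloneqq \xi_{g,h}^k - \xi_g^*(\theta_k)$ through a contraction argument. The starting observation is that the per-sample increment \eqref{eq_19_washim} is affine in $\xi$, namely $\mathbf{v}_g(\theta_k,\xi;z) = \mathbf{A}_g(\theta_k;z)\xi - \mathbf{b}_g(\theta_k;z)$, so the MLMC estimate \eqref{eq:MLMC_Critic} is itself affine, $\hat{\mathbf{v}}_g^{\mathrm{MLMC}}(\xi) = \hat{\mathbf{A}}\,\xi - \hat{\mathbf{b}}$, and the target $\xi_g^*(\theta_k) = \mathbf{A}_g(\theta_k)^{-1}\mathbf{b}_g(\theta_k)$ is the (perturbed) fixed point of the deterministic drift. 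By Assumption \ref{assum:critic_positive_definite}, for $c_\gamma$ large enough $\mathbf{A}_g(\theta_k)$ is positive definite with smallest eigenvalue at least $\lambda$, which supplies the contractivity I exploit. Throughout I condition on the $k$th epoch (so $\theta_k$ is fixed) and on the state $s_{kh}^0$ seeding each inner trajectory.

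The first key step is to pin down the conditional mean and second moment of the MLMC increment. Because $Q_h^k \sim \mathrm{Geom}(1/2)$, the geometric weights cancel the $2^{Q_h^k}$ factors in \eqref{eq:MLMC_Critic}, so a telescoping identity gives $\E[\hat{\mathbf{v}}_g^{\mathrm{MLMC}}] = \E[\mathbf{v}_{g,kh}^{J}]$ with $J = \lfloor\log_2 T_{\max}\rfloor$; that is, the estimator is in expectation the empirical average of $\approx T_{\max}$ consecutive Markovian samples. Invoking the mixing-time definition together with ergodicity (Assumption \ref{assump:ergodic_mdp}), the bias of this average relative to the stationary quantities is $\tilde{\cO}(\tau_{\mathrm{mix}}/T_{\max})$, so $\E[\hat{\mathbf{v}}_g^{\mathrm{MLMC}}(\xi)] = \bar{\mathbf{A}}\,\xi - \bar{\mathbf{b}}$ with $\|\bar{\mathbf{A}} - \mathbf{A}_g(\theta_k)\|, \|\bar{\mathbf{b}}-\mathbf{b}_g(\theta_k)\| \lesssim \tau_{\mathrm{mix}}/T_{\max}$. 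For the variance, the geometric-weighted sum $\sum_j 2^{j}\,\E\|\mathbf{v}_{g,kh}^{j}-\mathbf{v}_{g,kh}^{j-1}\|^2$ combined with the $\cO(\tau_{\mathrm{mix}}2^{-j})$ bound on the variance of level-$j$ differences for a chain mixing in time $\tau_{\mathrm{mix}}$ collapses to $\E\|\hat{\mathbf{v}}_g^{\mathrm{MLMC}}(\xi)\|^2 \lesssim \tau_{\mathrm{mix}}(1+\|\xi\|^2)$ up to a $\log T_{\max}$ factor, where boundedness of the summands is inherited from $\|\phi_g\|\le 1$, the bounded rewards/costs, and the score bound in Assumption \ref{assump:score_func_bounds}.

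With these two estimates I unroll the error recursion. Taking conditional expectations, $\E[e_{h+1}] = (I - \gamma_\xi\bar{\mathbf{A}})\E[e_h]$, and since $\bar{\mathbf{A}} = \mathbf{A}_g(\theta_k) + \cO(\tau_{\mathrm{mix}}/T_{\max})$, the lower bound $T_{\max}\geq 8c_\gamma^2\tau_{\mathrm{mix}}/\lambda$ ensures the perturbation is absorbed into the $\lambda$ margin, keeping the drift contractive with factor at most $1-\gamma_\xi\lambda/2$. Unrolling $H$ steps with $\gamma_\xi = 2\log T/(\lambda H)$ makes the transient term $(1-\gamma_\xi\lambda/2)^H = \tilde{\cO}(1/T)$, while the residual offset of the perturbed fixed point $\|\bar{\mathbf{A}}^{-1}\bar{\mathbf{b}} - \xi_g^*(\theta_k)\|\lesssim\tau_{\mathrm{mix}}/T_{\max}$ supplies the second term; squaring yields \eqref{eq:criticbound}. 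For the second moment I expand $\E\|e_{h+1}\|^2$, using the conditional-mean contraction for the cross term and the variance bound for the quadratic term, to obtain $\E\|e_{h+1}\|^2 \le (1-\gamma_\xi\lambda)\E\|e_h\|^2 + \gamma_\xi^2\,\tilde{\cO}(\tau_{\mathrm{mix}}) + \tilde{\cO}(\tau_{\mathrm{mix}}^2/T_{\max}^2)$; its fixed point contributes the variance floor $\gamma_\xi\tilde{\cO}(\tau_{\mathrm{mix}})/\lambda = \tilde{\cO}(\tau_{\mathrm{mix}}/H)$ and the bias floor $\tilde{\cO}(\tau_{\mathrm{mix}}/T_{\max})$, and the transient again gives $\tilde{\cO}(1/T^2)$, establishing \eqref{eq:criticbound2}.

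The hard part lies in the subtleties hidden in the two preceding paragraphs. First, the variance bound scales with $\|\xi_{g,h}^k\|^2$, so I must first establish a uniform a priori bound on $\E\|\xi_{g,h}^k\|^2$ along the inner loop (a self-bounding argument driven by the same contraction), else the second-moment recursion is not closed. Second, the increments are \emph{not} martingale differences: each inner trajectory is seeded by the terminal state of the previous inner iteration, carrying across-iteration Markovian dependence; controlling this forces me to condition on $s_{kh}^0$ and apply the mixing-time bound uniformly over the seeding state rather than assuming a stationary start. Third, and most delicate, is verifying that the random matrix $\hat{\mathbf{A}}$ and the biased drift together still yield a genuine contraction after the $\cO(\tau_{\mathrm{mix}}/T_{\max})$ perturbation of $\mathbf{A}_g(\theta_k)$ — this is exactly where the joint constraints on $T_{\max}$, $c_\gamma$, and $\gamma_\xi$ enter, and aligning the constants so that the $\lambda$-strong-monotonicity survives is the crux of the argument.
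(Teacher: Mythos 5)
Your proposal is correct and follows essentially the same route as the paper: positive-definiteness of $\mathbf{A}_g(\theta_k)$ from Assumption \ref{assum:critic_positive_definite} for large $c_\gamma$, MLMC bias/variance control via the telescoping identity and Markov-chain mixing bounds, and a biased linear stochastic-approximation contraction unrolled over $H$ steps with $\gamma_\xi = 2\log T/(\lambda H)$ killing the transient at rate $1/T^2$ and leaving the $\tau_{\mathrm{mix}}/H$ variance floor and $\tau_{\mathrm{mix}}/T_{\max}$ bias floor. The only organizational difference is that the paper packages the recursion analysis into a standalone general theorem (Theorem \ref{thm:generalrecursion}) and handles the seed-state dependence you flag by keeping the cross term with the conditional bias bound rather than asserting a clean factorization of $\E[e_{h+1}]$ — exactly the fix you anticipate in your final paragraph.
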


Invoking the bounds provided by Theorem \ref{thm:npg-main} and \ref{th:acc_td} into Lemma \ref{lemma:local_global}, we can obtain a convergence rate of the Lagrange function. Our next goal is to segregate the objective convergence and constraint violation rates from this Lagrange error. This is achieved by the following theorems. Depending on whether we have access to the mixing time, two similar but slightly different results can be obtained.

\begin{theorem}
    \label{thm_global_convergence_2}
    Consider the same setup and parameters as in Theorem \ref{th:acc_td} and Theorem \ref{thm:npg-main} and set $\alpha=T^{-1/2}$, $\beta =T^{-1/2}$. If $\tau_{\mathrm{mix}}$ is known, set $H=\tilde{\Theta}(\tau_{\mathrm{mix}}^2) $ with $K =T/H$. We have:
    \begin{align}
        & \frac{1}{K}\sum_{k=0}^{K-1}\E[J_r^{\pi^*}-J_r(\theta_k)]\lesssim \sqrt{\epsilon_{\mathrm{bias}}}  + \sqrt{\epsilon_{\mathrm{app}}} + \frac{1}{\sqrt{T}},\\
        &\frac{1}{K}\sum_{k=0}^{K-1}\E[-J_c(\theta_k)] \lesssim \sqrt{\epsilon_{\mathrm{bias}}}  + \sqrt{\epsilon_{\mathrm{app}}} + \frac{1}{\sqrt{T}}
    \end{align}
\end{theorem}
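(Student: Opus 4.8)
The plan is to convert the per-iteration actor/critic error bounds of Theorems \ref{thm:npg-main}--\ref{th:acc_td} into a bound on the averaged Lagrangian gap, and then split that gap into an objective-suboptimality bound and a constraint-violation bound by a primal--dual (online-gradient-on-the-dual) argument, taking extra care because the dual step uses the estimate $\eta_c^k$ rather than $J_c(\theta_k)$. First I would instantiate the parameters: $\alpha=\beta=T^{-1/2}$, $H=\tilde\Theta(\tau_{\mathrm{mix}}^2)$, $K=T/H$, and $T_{\max}=\tilde\Theta(T)$ (large enough that $\tau_{\mathrm{mix}}^2/T_{\max}=\tilde\cO(1/T)$, which is compatible with the lower bounds on $T_{\max}$ in Theorems \ref{thm:npg-main}--\ref{th:acc_td} and keeps the average per-call sample count $\tilde\cO(\log T)$). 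Feeding the critic bounds \eqref{eq:criticbound}--\eqref{eq:criticbound2} into the actor bounds \eqref{eq:actorbound}--\eqref{eq:actorbound2}, and using $\lambda_k\le 2/\delta$ so that the combined quantities $\omega_k=\omega_r^k+\lambda_k\omega_c^k$ and $\omega_k^*=\omega_{r,k}^*+\lambda_k\omega_{c,k}^*$ inherit the reward/cost bounds, I get an NPG bias $\E\|\E_k[\omega_k]-\omega_k^*\|\lesssim\sqrt{\epsilon_{\mathrm{app}}}+T^{-1/2}$ and, via \eqref{eq:omegadeconposition} together with the constant bound on $\E\|\nabla_{\theta}\mathcal{L}(\theta_k,\lambda_k)\|^2$ (Lemma \ref{lemma_grad_JL_bound}), that $\frac{\alpha}{K}\sum_k\E\|\omega_k\|^2\lesssim T^{-1/2}$. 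The KL term in \eqref{eq:general_bound} is $\frac{1}{\alpha K}\cdot\mathrm{const}=H/\sqrt T=\tilde\cO(T^{-1/2})$. Substituting into Lemma \ref{lemma:local_global} yields
\[ \frac1K\sum_{k=0}^{K-1}\E\big[\mathcal{L}(\pi^*,\lambda_k)-\mathcal{L}(\theta_k,\lambda_k)\big]\;\lesssim\;\sqrt{\epsilon_{\mathrm{bias}}}+\sqrt{\epsilon_{\mathrm{app}}}+\tfrac{1}{\sqrt T}\;=:\;\Gamma. \]

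Next I would run the dual regret analysis. Using non-expansiveness of $\mathcal{P}_{[0,2/\delta]}$ applied to $\lambda_{k+1}=\mathcal{P}_{[0,2/\delta]}[\lambda_k-\beta\eta_c^k]$ and telescoping gives, for every fixed $\lambda\in[0,2/\delta]$, $\sum_k(\lambda_k-\lambda)\eta_c^k\le \frac{(\lambda_0-\lambda)^2}{2\beta}+\frac{\beta K}{2}$ (here $|\eta_c^k|=\cO(1)$). To replace $\eta_c^k$ by $J_c(\theta_k)$, note $\eta_c^*(\theta_k)=J_c(\theta_k)$ and---crucially---that $\lambda_k$ and $\lambda$ are measurable with respect to the history before epoch $k$, so $\E[(\lambda_k-\lambda)(J_c(\theta_k)-\eta_c^k)]=\E[(\lambda_k-\lambda)(\eta_c^*(\theta_k)-\E_k\eta_c^k)]$ collapses to the critic \emph{bias} $\|\E_k[\xi_c^k]-\xi_c^*(\theta_k)\|\lesssim T^{-1/2}$ of \eqref{eq:criticbound}; the critic variance never enters. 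Hence in expectation the same inequality holds with $J_c(\theta_k)$ in place of $\eta_c^k$ up to an additive $\tilde\cO(T^{-1/2})$.

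For the objective bound I would lower-bound $\mathcal{L}(\pi^*,\lambda_k)-\mathcal{L}(\theta_k,\lambda_k)\ge (J_r^{\pi^*}-J_r(\theta_k))-\lambda_k J_c(\theta_k)$, dropping $\lambda_k J_c^{\pi^*}\ge0$, and combine with $\Gamma$ and the dual inequality at $\lambda=0$ (recall $\lambda_0=0$), which controls $\frac1K\sum_k\E[\lambda_k J_c(\theta_k)]\le \frac\beta2+\tilde\cO(T^{-1/2})$. This gives $\frac1K\sum_k\E[J_r^{\pi^*}-J_r(\theta_k)]\le \Gamma+\frac\beta2+\tilde\cO(T^{-1/2})\lesssim\sqrt{\epsilon_{\mathrm{bias}}}+\sqrt{\epsilon_{\mathrm{app}}}+T^{-1/2}$.

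The hardest and most delicate step is the violation bound, where a naive comparison to $\pi^*$ alone yields only an $\cO(1)$ estimate. The fix is strong duality, which holds for the original CMDP in occupancy-measure space even though the parametrized problem is nonconvex: there is an optimal multiplier $\lambda^*$ with $J_r(\theta)+\lambda^*J_c(\theta)\le J_r^{\pi^*}$ for every policy, and Assumption \ref{ass_slater} together with $J_r^{\pi^*}\le1$ forces $\lambda^*\le 1/\delta$, hence $\lambda^*+1\le2/\delta$. Averaging the duality inequality over iterates gives $\frac1K\sum_k\E[J_r(\theta_k)]\le J_r^{\pi^*}-\lambda^*\frac1K\sum_k\E[J_c(\theta_k)]$. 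Using $\mathcal{L}(\pi^*,\lambda_k)\ge J_r^{\pi^*}$ to rewrite $\Gamma$ as $\frac1K\sum_k\E[\mathcal{L}(\theta_k,\lambda_k)]\ge J_r^{\pi^*}-\Gamma$, and applying the dual inequality at $\lambda=\lambda^*+1$, I obtain $\frac1K\sum_k\E[J_r(\theta_k)+(\lambda^*+1)J_c(\theta_k)]\ge J_r^{\pi^*}-\epsilon'$ with $\epsilon'=\Gamma+\frac{(\lambda^*+1)^2}{2\beta K}+\frac\beta2+\tilde\cO(T^{-1/2})$; subtracting the duality inequality cancels both $\frac1K\sum_k\E[J_r(\theta_k)]$ and the $\lambda^*$-weighted cost term, leaving $\frac1K\sum_k\E[-J_c(\theta_k)]\le\epsilon'$ with a \emph{unit} coefficient on the violation. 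Since $\frac{(\lambda^*+1)^2}{2\beta K}\le\frac{2}{\delta^2\beta K}=\tilde\cO(H/\sqrt T)=\tilde\cO(T^{-1/2})$ and $\beta=T^{-1/2}$, this gives $\frac1K\sum_k\E[-J_c(\theta_k)]\lesssim\sqrt{\epsilon_{\mathrm{bias}}}+\sqrt{\epsilon_{\mathrm{app}}}+T^{-1/2}$. I expect the two subtle points to be (i) justifying this cancellation via the bounded optimal dual so that the violation carries coefficient one, and (ii) confirming, as in the second paragraph, that the dual-update estimation error enters only through the critic bias and not its variance.
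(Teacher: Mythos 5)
Your proposal is correct and, for most of its length, follows the paper's own route: instantiate the step sizes, feed the critic bounds of Theorem \ref{th:acc_td} into the actor bounds of Theorem \ref{thm:npg-main} with $T_{\max}=\Theta(T)$, use $\lambda_k\le 2/\delta$ to transfer these to $\omega_k=\omega_r^k+\lambda_k\omega_c^k$, invoke Lemma \ref{lemma:local_global} together with \eqref{eq:omegadeconposition} and Lemma \ref{lemma_grad_JL_bound} to bound the averaged Lagrangian gap, and then run a projected-dual-descent telescoping argument in which the gap between $\eta_c^k$ and $J_c(\theta_k)$ enters only through the critic \emph{bias} $\|\E_k[\xi_c^k]-\xi_c^*(\theta_k)\|$ via conditioning (exactly as the paper does in its objective-rate section). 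The one genuinely different step is the constraint-violation bound. The paper derives the combined inequality with coefficient $2/\delta$ on $\frac1K\sum_k\E[-J_c(\theta_k)]$, converts the iterate average into a single mixture policy $\bar\pi$ via the occupancy-measure identity \eqref{eq_avg_value}, and then invokes Lemma \ref{lem.constraint} (built on the perturbation function $v(\tau)$ and strong duality through Lemmas \ref{lem.duality} and \ref{lem:bridge}) with $C=2/\delta\ge 2\lambda^*$. You instead apply the strong-duality inequality $J_r^{\pi}+\lambda^* J_c^{\pi}\le J_r^{\pi^*}$ pointwise to each iterate, average it (no mixture policy needed, since the inequality is linear in the iterates), and choose the dual comparator $\lambda^*+1$ in the regret bound so that subtracting the two displays cancels the reward and $\lambda^*$-weighted cost terms and leaves the violation with unit coefficient. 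Both arguments rest on the same ingredients (strong duality plus the bound $\lambda^*\le 1/\delta$ of Lemma \ref{lemma:boundness}, which also guarantees your comparator $\lambda^*+1\le 2/\delta$ lies in the projection interval) and give the same rate; yours is slightly more self-contained in that it bypasses Lemmas \ref{lem:bridge}--\ref{lem.constraint} and the mixture-policy device. A minor point in your favor: in the paper's violation section the non-expansiveness step before \eqref{eq.removelamda} is written with $J_c(\theta_k)$ in place of the actual update quantity $\eta_c^k$, silently absorbing the estimation error; your version telescopes with $\eta_c^k$ and converts to $J_c(\theta_k)$ explicitly through the critic bias, which is the more careful bookkeeping.
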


\begin{theorem}
    \label{thm_global_convergence_1}
    Consider the same setup and parameters as in Theorem \ref{th:acc_td} and Theorem \ref{thm:npg-main} and set $\alpha=T^{-1/2}$, $\beta =T^{-1/2}$, $H=T^{\epsilon}$ and $K =T^{1-\epsilon}$. With $T^{\epsilon} \geq \tilde{\Theta}(\tau_{\mathrm{mix}}^2)$,  we have:
    \begin{align}
        & \frac{1}{K}\sum_{k=0}^{K-1}\E[J_r^{\pi^*}-J_r(\theta_k)]\lesssim  \sqrt{\epsilon_{\mathrm{bias}}}  + \sqrt{\epsilon_{\mathrm{app}}} + \frac{1}{T^{(0.5-\epsilon)}},\\
        &\frac{1}{K}\sum_{k=0}^{K-1}\E[-J_c(\theta_k)] \lesssim     \sqrt{\epsilon_{\mathrm{bias}}}  + \sqrt{\epsilon_{\mathrm{app}}} + \frac{1}{T^{(0.5-\epsilon)}}
    \end{align}
\end{theorem}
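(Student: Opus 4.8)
The plan is to turn the actor/critic guarantees into a single averaged Lagrangian-regret bound, then run a standard projected dual-ascent drift argument, and finally split the resulting saddle-point inequality into an objective gap and a constraint violation using strong duality. Throughout I write $\bar{J}_g=\frac1K\sum_{k=0}^{K-1}\E[J_g(\theta_k)]$ and recall that under the stated choices $\alpha=\beta=T^{-1/2}$, $H=T^{\epsilon}$, $K=T^{1-\epsilon}$ one has $\frac1{\alpha K}=\frac1{\beta K}=T^{-(0.5-\epsilon)}$, and $T^{\epsilon}\ge\tilde\Theta(\tau_{\mathrm{mix}}^2)$ validates the step-size hypotheses of Theorems~\ref{thm:npg-main} and \ref{th:acc_td}. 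I take $T_{\max}$ to be a suitable polynomial in $T$ (e.g. $\tilde\Theta(T)$), so that $\tau_{\mathrm{mix}}^2/T_{\max}$ is negligible.

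\emph{Step 1 (Lagrangian regret).} I begin from Lemma~\ref{lemma:local_global}. Since $\omega_k=\omega_r^k+\lambda_k\omega_c^k$ and $\omega_k^*=\omega_{r,k}^*+\lambda_k\omega_{c,k}^*$ with $\lambda_k\in[0,2/\delta]$ measurable w.r.t.\ the history before epoch $k$, the bias $\|\E_k[\omega_k]-\omega_k^*\|$ and the second-order error $\E\|\omega_k-\omega_k^*\|^2$ are controlled, up to the factor $2/\delta$, by the corresponding quantities for $g\in\{r,c\}$ in \eqref{eq:actorbound}--\eqref{eq:actorbound2}, which I in turn bound through \eqref{eq:criticbound}--\eqref{eq:criticbound2}. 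The $KL$ term of \eqref{eq:general_bound} is $\frac1{\alpha K}=T^{-(0.5-\epsilon)}$; the norm term is handled by \eqref{eq:omegadeconposition}, whose gradient part is $\tilde{\mathcal O}(\alpha)=\tilde{\mathcal O}(T^{-1/2})$ (using the constant bound on $\E\|\nabla_\theta\mathcal{L}(\theta_k,\lambda_k)\|^2$) and whose variance part carries the prefactor $\alpha$; the first-order term is $\frac{G_1}{K}\sum_k\E\|\E_k[\omega_k]-\omega_k^*\|$, i.e.\ the square root of \eqref{eq:actorbound}.

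\emph{Main obstacle.} The variance bounds \eqref{eq:actorbound2} and \eqref{eq:criticbound2} contain the term $\tau_{\mathrm{mix}}/H=\tau_{\mathrm{mix}}T^{-\epsilon}$, which decays only like $T^{-\epsilon}$ and is therefore far too slow to meet the target $T^{-(0.5-\epsilon)}$ if it entered at first order. The crux of the argument is that this slow term never enters the leading order: in the norm term it is always multiplied by $\alpha=T^{-1/2}$, contributing at most $\tilde{\mathcal O}(T^{-1/2-\epsilon})$, while inside the first-order bias it appears only as $\frac{\tau_{\mathrm{mix}}^2}{T_{\max}}\cdot\frac{\tau_{\mathrm{mix}}}{H}$ and is dominated. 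The first-order term itself, which is the one \emph{not} damped by $\alpha$, uses the pure bias bounds \eqref{eq:actorbound} and \eqref{eq:criticbound}, and these are free of $\tau_{\mathrm{mix}}/H$. This bias/variance separation is exactly what permits $H=T^{\epsilon}$ near-constant. Combining everything yields the primal regret bound $\frac1K\sum_{k}\E[\mathcal{L}(\pi^*,\lambda_k)-\mathcal{L}(\theta_k,\lambda_k)]\le B$ with $B\lesssim\sqrt{\epsilon_{\mathrm{bias}}}+\sqrt{\epsilon_{\mathrm{app}}}+T^{-(0.5-\epsilon)}$.

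\emph{Step 2 (dual drift).} From the projected update $\lambda_{k+1}=\mathcal{P}_{[0,2/\delta]}[\lambda_k-\beta\eta_c^k]$ and nonexpansiveness, for any $\lambda\in[0,2/\delta]$ I get $2\beta\,\eta_c^k(\lambda_k-\lambda)\le(\lambda_k-\lambda)^2-(\lambda_{k+1}-\lambda)^2+\beta^2(\eta_c^k)^2$; telescoping gives $\frac1K\sum_k\eta_c^k(\lambda_k-\lambda)\le\frac{(\lambda_0-\lambda)^2}{2\beta K}+\frac{\beta}{2K}\sum_k(\eta_c^k)^2$, and the last sum is $\lesssim\beta=T^{-1/2}$. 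To replace $\eta_c^k$ by $J_c(\theta_k)$ I use the tower property: conditioning on the history, $\E[(\eta_c^k-J_c(\theta_k))(\lambda_k-\lambda)]=\E[(\lambda_k-\lambda)(\E_k[\eta_c^k]-J_c(\theta_k))]$, so only the \emph{critic bias} $|\E_k[\eta_c^k]-J_c(\theta_k)|\le\|\E_k[\xi_c^k]-\xi_c^*(\theta_k)\|$ appears, which by \eqref{eq:criticbound} is $\lesssim\frac1T+\tau_{\mathrm{mix}}/\sqrt{T_{\max}}\lesssim T^{-(0.5-\epsilon)}$; this again avoids the $\tau_{\mathrm{mix}}/H$ variance term.

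\emph{Step 3 (splitting).} Adding the primal and dual bounds and using $\mathcal{L}(\pi^*,\lambda_k)=J_r^{\pi^*}+\lambda_k J_c^{\pi^*}\ge J_r^{\pi^*}$ (by $J_c^{\pi^*}\ge0$, $\lambda_k\ge0$) gives, for every $\lambda\in[0,2/\delta]$,
\begin{equation*}
J_r^{\pi^*}-\bar{J}_r-\lambda\bar{J}_c \;\lesssim\; \sqrt{\epsilon_{\mathrm{bias}}}+\sqrt{\epsilon_{\mathrm{app}}}+\frac1{T^{0.5-\epsilon}}+\frac{(\lambda_0-\lambda)^2}{2\beta K}\;=:\;\Xi(\lambda),
\end{equation*}
and since $\lambda_0=0$ and $\frac1{\beta K}=T^{-(0.5-\epsilon)}$, $\Xi(\lambda)\lesssim\sqrt{\epsilon_{\mathrm{bias}}}+\sqrt{\epsilon_{\mathrm{app}}}+T^{-(0.5-\epsilon)}$ for any bounded $\lambda$. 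Setting $\lambda=0$ yields the objective bound $J_r^{\pi^*}-\bar J_r\lesssim\Xi(0)$. For the violation, Slater's condition (Assumption~\ref{ass_slater}) gives strong duality and an optimal dual $\lambda^*\le\delta^{-1}$, so $\lambda^*+1\le2/\delta$; strong duality gives $\bar J_r+\lambda^*\bar J_c\le J_r^{\pi^*}$, i.e.\ $J_r^{\pi^*}-\bar J_r-\lambda^*\bar J_c\ge0$, and evaluating the displayed inequality at $\lambda=\lambda^*+1$ and subtracting this nonnegative quantity leaves $-\bar J_c\le\Xi(\lambda^*+1)\lesssim\sqrt{\epsilon_{\mathrm{bias}}}+\sqrt{\epsilon_{\mathrm{app}}}+T^{-(0.5-\epsilon)}$. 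This establishes both claimed rates. The principal difficulty remains the bias/variance bookkeeping of Step~1 that keeps the slow $\tau_{\mathrm{mix}}/H$ term out of the first-order bounds despite the near-constant choice $H=T^{\epsilon}$.
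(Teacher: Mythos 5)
Your proof is correct and, for its core, follows the same route as the paper: the Lagrangian regret is obtained by feeding the bias and second-moment bounds of Theorems \ref{thm:npg-main} and \ref{th:acc_td} into Lemma \ref{lemma:local_global} with $T_{\max}=T$, and your observation that the slow $\tau_{\mathrm{mix}}/H$ variance terms only ever appear damped by $\alpha$ (or by $\tau_{\mathrm{mix}}^2/T_{\max}$ inside the bias), while the undamped first-order term uses only the bias bounds \eqref{eq:actorbound} and \eqref{eq:criticbound}, is exactly the bookkeeping the paper performs in \eqref{eq_appndx_68_washim}--\eqref{eq_appndx_69_washim}; likewise your $\lambda=0$ dual-drift step, including the tower-property reduction of $\eta_c^k$ to the critic bias via $\eta_c^*(\theta_k)=J_c(\theta_k)$, reproduces the paper's treatment following \eqref{eq:bound_lambdak}. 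The one place you genuinely diverge is the extraction of the constraint violation: you run the drift against the comparator $\lambda=\lambda^*+1\in[0,2/\delta]$ and conclude directly from strong duality that $J_r^{\pi^*}-\bar{J}_r-\lambda^*\bar{J}_c\geq 0$ (since $J_D^{\lambda^*}\geq J_r^{\pi_{\theta_k}}+\lambda^* J_c^{\pi_{\theta_k}}$ for each $k$), so that subtracting leaves $-\bar{J}_c\leq \Xi(\lambda^*+1)$. The paper instead fixes the comparator $2/\delta$, passes to the mixture policy $\bar{\pi}$ via the occupancy-measure identity \eqref{eq_avg_value}, and invokes the perturbation-function argument of Lemmas \ref{lem:bridge} and \ref{lem.constraint} with $C=2/\delta\geq 2\lambda^*$. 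The two are equivalent in substance---both rest on strong duality (Lemma \ref{lem.duality}) and the bound $\lambda^*\leq 1/\delta\leq 2/\delta-1$ afforded by the Slater point with $\delta<1$---but your version is slightly more direct in that it needs neither the mixture policy nor the function $v(\tau)$, whereas the paper's version isolates the reusable ``Lagrangian-gap to violation'' conversion as a standalone lemma. Either way the rates $\tilde{\cO}\left(\sqrt{\epsilon_{\mathrm{bias}}}+\sqrt{\epsilon_{\mathrm{app}}}+T^{-(0.5-\epsilon)}\right)$ follow as you state.
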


Theorem \ref{thm_global_convergence_2} dictates that one can achieve both the objective convergence and the constraint violation rates as $\tilde{\cO}(T^{-1/2})$ up to some additive factors of $\epsilon_{\mathrm{bias}}$ and $\epsilon_{\mathrm{app}}$ where $T$ is the length of the horizon. However, knowledge of the mixing time, $\tau_{\mathrm{mix}}$, is needed in this case to set some parameters. On the other hand, Theorem \ref{thm_global_convergence_1} states that, if knowledge of $\tau_{\mathrm{mix}}$ is unavailable, one can achieve objective convergence and constraint violation rates as $\tilde{\cO}(T^{-1/2+\epsilon})$ as long as the horizon $T$ exceeds $\tilde{\Theta}(\tau_{\mathrm{mix}}^{2/\epsilon})$. Note that one can get arbitrarily close to the optimal rate of $\tilde{\cO}(T^{-1/2})$ by choosing arbitrarily small $\epsilon$. The caveat is that the smaller the $\epsilon$, the larger the horizon length needed to reach the desired rates.

In the setting of Theorem \ref{thm_global_convergence_1}, for small $\epsilon$, the horizon requirement becomes $T\gtrsim\tilde{O}\left(\tau_{\mathrm{mix}}^{2/\varepsilon}\right)$, which can be large for slowly mixing problems. In such a scenario, one can switch to the parameter setting of Theorem \ref{thm_global_convergence_2}, which does not impose any such restriction on $T$. However, since this setup requires the knowledge of $\tau_{\rm mix}$, one can treat $\tau_{\rm mix}$ as one of the unknown hyperparameters of the algorithm, and fine-tune it during the training phase. This is in line with other RL algorithms in the literature that also require fine-tuning of several unknown hyperparameters, such as the Lipschitz constant or hitting time \citep{bai2023regret}. Despite these practical solutions, we acknowledge that a systematic theoretical investigation is needed to improve the requirement of the horizon length, $T$ (in the absence of knowledge of $\tau_{\rm mix}$), which is left as one of the future works.

It is also worth highlighting that due to the presence of $\epsilon_{\mathrm{bias}}$ and $\epsilon_{\mathrm{app}}$, the average objective error and constraint violation cannot be guaranteed to be zero, even for large $T$. However, for rich policy parameterization and good critic approximation, the effects of these quantities are negligibly small.


\section{Conclusion}
In this paper, we investigate the infinite-horizon average reward Constrained Markov Decision Processes (CMDPs) with general policy parametrization. We propose a novel algorithm, the ``Primal-dual natural actor-critic," which efficiently manages constraints while achieving a global convergence rate of $\Tilde{\mathcal{O}}(1/\sqrt{T})$, aligning with the theoretical lower bound for Markov Decision Processes (MDPs). We also extend our analysis to the setting with unknown mixing time. Future directions include narrowing the performance gap in this setting, parameterizing the critic using neural networks as in \cite{gaur2024closing,ganesh2025order}, and relaxing the ergodicity assumption following \cite{ganesh2025regret}.

\section{Acknowledgment}
The work was supported in part by the National Science Foundation under  grant CCF-2149588, Office of Naval Research under grant N00014-23-1-2532, and Cisco Systems, Inc.

\bibliographystyle{plain}
\bibliography{main}


\appendix
\onecolumn
\section{Related Works} 

The constrained reinforcement learning problem has been widely explored for both infinite horizon discounted reward and episodic MDPs. Recent studies have examined discounted reward CMDPs in various contexts, including the tabular setting \citep{bai2022achieving}, with softmax parameterization \citep{ding2020natural,xu2021crpo}, and with general policy parameterization setting \citep{ding2020natural,xu2021crpo,bai2023achieving, mondal2024sample}. Additionally, episodic CMDPs have been explored in the tabular setting by \citep{efroni2020exploration, qiu2020upper, germano2023best}. Recent research has also focused on infinite horizon average reward CMDPs, examining various approaches including model-based setups  \citep{chen2022learning, agarwal2022regret,agarwal2022concave},  tabular model-free settings \citep{wei2022provably}, linear CMDP setting \citep{ghosh2022achieving} and general policy parametrization setting \citep{bai2024learning}. In model-based CMDP setups, \citep{agarwal2022concave} introduced algorithms leveraging posterior sampling and the optimism principle, achieving a convergence rate of $\tilde{\mathcal{O}}(1/\sqrt{T})$ with zero constraint violations. For tabular model-free approach, \citep{wei2022provably} attains a convergence rate of $\tilde{\mathcal{O}}(T^{-1/6})$ with zero constraint violations. In linear CMDP, \citep{ghosh2022achieving} obtains $\tilde{\mathcal{O}}(T^{-1/2})$ convergence rate with zero constraint violation. Note that linear CMDP assumes a linear structure in the transition probability based on a known feature map, which is not realistic. Finally, \citep{bai2024learning} studied the infinite horizon average reward CMDPs with general parametrization and achieved a global convergence rate of $\tilde{\mathcal{O}}(1/{T}^{1/5})$. Table \ref{table2} summarizes all relevant works on average reward CMDPs.

In unconstrained average reward MDPs, both model-based and model-free tabular setups have been widely studied. For instance, the model-based algorithms by \citep{agrawal2017optimistic, auer2008near} obtain the optimal convergence rate of $\tilde{\mathcal{O}}(1/\sqrt{T})$.
Similarly, the model-free algorithm proposed by \citep{wei2020model} achieves $\tilde{\mathcal{O}}(1/\sqrt{T})$ convergence rate for tabular MDP. Average reward MDP with general parametrization has been recently studied in \citep{ganesh2024accelerated,ganesh2024variance}, where global convergence rates of $\tilde{\mathcal{O}}(1/\sqrt{T})$ are achieved. In particular, \cite{ganesh2024accelerated} leverages Actor-Critic methods and achieves global convergence without knowledge of mixing time. 

\section{Preliminary Results for Global Convergence Analysis}

To prove Theorem \ref{th:acc_td} and \ref{thm:npg-main}, we first discuss a more general case of linear recursion with biased estimators.

\begin{theorem}
    \label{thm:generalrecursion}
    Consider the stochastic linear recursion that aims to approximate $x^*=P^{-1}q$.
    \begin{align}
        \label{eq:xt}
        x_{h+1} = x_h - \bar{\beta} (\hat{P}_h x_h - \hat{q}_h)
    \end{align}
    where $\hat{P}_h$, $\hat{q}_h$ are estimates of the matrices $P\in\mathbb{R}^{n\times n}$, $q\in\mathbb{R}^n$ respectively, and $h\in\{0, \cdots, H-1\}$. Assume that the following bounds hold $\forall h$.
    \begin{align}
        \label{eq_cond_1}
        \begin{split}
            &\E_h\left[\norm{\hat{P}_h-P}^2\right]\leq \sigma_P^2, ~ \norm{\E_h\left[\hat{P}_h\right]-P}^2\leq \delta^2_P,\\
            &\E_h\left[\norm{\hat{q}_h-q}^2\right]\leq \sigma_q^2, ~ \norm{\E_h\left[\hat{q}_h\right]-q}^2\leq \delta_q^2,~\text{and~}\norm{\E\left[\hat{q}_h\right]-q}^2\leq \bar{\delta}_q^2
        \end{split}
    \end{align}
    where $\E_h$ denotes conditional expectation given history up to step $h$. Since $\E[\hat{q}_h]=\E[\E_h[\hat{q}_h]]$, we have $\bar{\delta}^2_q\leq \delta_q^2$. Additionally, assume that
    \begin{align}
        \label{eq_cond_2}
        \begin{split}
            0<\lambda_P \leq \norm{P}\leq \Lambda_P~~\text{and}~\norm{q} \leq \Lambda_q
        \end{split}
    \end{align}
    The condition that $\lambda_P>0$ implies that $P$ is invertible. The goal of recursion \eqref{eq:xt} is to approximate the term $x^*=P^{-1}q$. If $\delta_P\leq \lambda_P/8$, and $\bar{\beta}\leq \lambda_P/[4(6\sigma_P^2+2\Lambda_P^2)]$, the following relation holds. 
    \begin{equation}
        \label{eq_appndx_lemma_exp_x_h_recursion}
        \E\left[\|x_H-x^*\|^2\right]\leq  \exp\left(-{H\bar{\beta} \lambda_P}\right)\E\|x_{0} -x^*\|^2 + \tilde{\cO}\bigg({\bar{\beta} R_0} + R_1\bigg)
    \end{equation}
    where $R_0 = \lambda_P^{-3}\Lambda_q^2\sigma_P^2+ \lambda_P^{-1}\sigma_q^2$, $R_1= \lambda_P^{-2}\big[\delta_P^2 \lambda_P^{-2}\Lambda_q^2+\delta_q^2\big]$, and $\tilde{\cO}(\cdot)$ hides logarithmic factors of $H$. Moreover,
    \begin{align}
        \|\E[x_H]-x^*\|^2 &\leq \exp\left(-{H\bar{\beta}\lambda_P}\right)\|\E[x_{0}] - x^*\|^2 \nonumber\\
        &+\dfrac{6}{\lambda_P}\left(\bar{\beta}+\dfrac{1}{\lambda_P}\right)\left[\delta_P^2\bigg\{\E\left[\|x_0 - x^*\|^2\right]+\mathcal{O}\left(\bar{\beta}R_0+R_1\right)\bigg\} + \bar{R}_1 \right]
    \end{align}
    where $\bar{R}_1 = \delta_P^2 \lambda_P^{-2}\Lambda_q^2+\bar{\delta}_q^2$.
\end{theorem}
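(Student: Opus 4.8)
The plan is to track the error $e_h \triangleq x_h - x^*$ and derive a one-step recursion for it. Since $Px^* = q$, subtracting $x^*$ from \eqref{eq:xt} and inserting $\pm P$, $\pm q$ gives
\[
e_{h+1} = (I - \bar\beta \hat P_h)\, e_h - \bar\beta \psi_h, \qquad \psi_h \triangleq (\hat P_h - P)x^* - (\hat q_h - q),
\]
so the dynamics split into a random contraction operator $I - \bar\beta\hat P_h$ acting on $e_h$ and an additive perturbation $\psi_h$ that carries the estimation error at the fixed point. Throughout I would use $\|x^*\| = \|P^{-1}q\| \leq \Lambda_q/\lambda_P$, which follows from \eqref{eq_cond_2}, and I interpret $\lambda_P$ as a lower bound on the symmetric part, i.e. $\langle v, Pv\rangle \geq \lambda_P\|v\|^2$.

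For the second-moment bound \eqref{eq_appndx_lemma_exp_x_h_recursion}, I would condition on the history and expand $\E_h[\|e_{h+1}\|^2]$. The quadratic piece $\E_h[\|(I-\bar\beta\hat P_h)e_h\|^2]$ produces the contraction: its cross term contributes $-2\bar\beta\langle e_h, \E_h[\hat P_h]e_h\rangle \leq -2\bar\beta(\lambda_P-\delta_P)\|e_h\|^2$ via $\langle e_h,Pe_h\rangle\geq\lambda_P\|e_h\|^2$ and $\norm{\E_h[\hat P_h]-P}\leq\delta_P$, while its second-order term is bounded using $\E_h[\|\hat P_h\|^2] \leq 2\sigma_P^2 + 2\Lambda_P^2$. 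The interaction with $\psi_h$ yields a drift $2\bar\beta\|e_h\|\,\|\E_h[\psi_h]\|$ with $\|\E_h[\psi_h]\| \leq \delta_P\Lambda_q/\lambda_P + \delta_q$, and a variance floor $\bar\beta^2\E_h[\|\psi_h\|^2] \leq \bar\beta^2(2\sigma_P^2\Lambda_q^2/\lambda_P^2 + 2\sigma_q^2)$. Treating the two cross terms by Young's inequality and then invoking $\delta_P \leq \lambda_P/8$ (to keep $\lambda_P-\delta_P$ comparable to $\lambda_P$) together with $\bar\beta \leq \lambda_P/[4(6\sigma_P^2 + 2\Lambda_P^2)]$ (to absorb the $\bar\beta^2$-noise into the contraction) collapses everything into $\E_h[\|e_{h+1}\|^2] \leq (1-\bar\beta\lambda_P)\|e_h\|^2 + \bar\beta^2 V + \bar\beta B$, with $V = \mathcal{O}(\sigma_P^2\Lambda_q^2/\lambda_P^2 + \sigma_q^2)$ and $B = \mathcal{O}((\delta_P\Lambda_q/\lambda_P + \delta_q)^2/\lambda_P)$. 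Taking total expectation, unrolling over $H$ steps, bounding the geometric sum of the floor by $(\text{floor})/(\bar\beta\lambda_P)$, and using $(1-\bar\beta\lambda_P)^H \leq \exp(-H\bar\beta\lambda_P)$ reproduces \eqref{eq_appndx_lemma_exp_x_h_recursion} with $\bar\beta V/\lambda_P = \tilde{\mathcal{O}}(\bar\beta R_0)$ and $B/\lambda_P = \tilde{\mathcal{O}}(R_1)$.

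For the bias bound I would propagate the unconditional mean. Taking $\E_h$ of the error recursion and then full expectation gives $\E[e_{h+1}] = (I-\bar\beta P)\E[e_h] - \bar\beta w_h$, where $w_h = \E[(\E_h[\hat P_h]-P)e_h] + \E[\psi_h]$. The key point is that $\E_h[\hat P_h]$ and $e_h$ are correlated, so the first piece cannot be reduced to $\delta_P\|\E[e_h]\|$; instead I bound $\|w_h\| \leq \delta_P\,\E\|e_h\| + \bar b$ with $\bar b = \delta_P\Lambda_q/\lambda_P + \bar\delta_q$, where the sharper $\bar\delta_q$ (rather than $\delta_q$) enters precisely because we now use $\norm{\E[\hat q_h] - q} \leq \bar\delta_q$. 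Squaring, applying Young's inequality to separate $\|(I-\bar\beta P)\E[e_h]\|^2$ from the $w_h$-term, and using $\|(I-\bar\beta P)v\|^2 \leq (1-\bar\beta\lambda_P)\|v\|^2$ (again from $\langle v,Pv\rangle\geq\lambda_P\|v\|^2$ and small $\bar\beta$) yields $\|\E[e_{h+1}]\|^2 \leq (1-\bar\beta\lambda_P)\|\E[e_h]\|^2 + (\bar\beta^2 + \bar\beta/\lambda_P)\,\mathcal{O}(\|w_h\|^2)$. I then feed in Part 1 through $\E\|e_h\|^2 \leq \E\|e_0\|^2 + \mathcal{O}(\bar\beta R_0 + R_1)$ (uniform in $h$ since the recursion contracts), which controls $\|w_h\|^2 \leq 2\delta_P^2(\E\|e_0\|^2 + \mathcal{O}(\bar\beta R_0 + R_1)) + 2\bar b^2$ with $\bar b^2 = \mathcal{O}(\bar R_1)$. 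Unrolling the mean recursion and summing the geometric series generates the factor $\tfrac{6}{\lambda_P}(\bar\beta + \tfrac{1}{\lambda_P})$ multiplying the bracket, which is exactly the claimed form.

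The main obstacle, and the reason the two statements must be established jointly, is the coupling just described: the mean recursion inherits $\E[(\E_h[\hat P_h]-P)e_h]$, which is genuinely second-order and can only be controlled through $\delta_P\,\E\|e_h\|$, never through $\delta_P\|\E[e_h]\|$. This forces the second-moment estimate of Part 1 to be substituted into Part 2 and is what produces the characteristic $\delta_P^2\{\E\|x_0-x^*\|^2 + \mathcal{O}(\bar\beta R_0 + R_1)\}$ structure in the floor. The remaining delicate bookkeeping is verifying that, after the Young's-inequality splits, the contraction factor stays at $(1-\bar\beta\lambda_P)$ up to higher-order terms that are absorbed by the hypotheses $\delta_P \leq \lambda_P/8$ and $\bar\beta \leq \lambda_P/[4(6\sigma_P^2 + 2\Lambda_P^2)]$, so that both exponential rates emerge as $\exp(-H\bar\beta\lambda_P)$.
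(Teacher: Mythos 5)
Your proposal is correct and follows essentially the same route as the paper's proof: the same error recursion (your $(I-\bar\beta\hat P_h)e_h-\bar\beta\psi_h$ decomposition is an algebraic reorganization of the paper's expansion of $\|x_{h+1}-x^*\|^2$ around $P(x_h-x^*)$), the same Young's-inequality treatment of the cross terms, the same absorption of the $\bar\beta^2$-variance via the step-size condition, and the same key step of feeding the uniform second-moment bound from Part~1 into the correlated term $\E[(\E_h[\hat P_h]-P)e_h]$ of the bias recursion, with $\bar\delta_q$ replacing $\delta_q$ there. The only cosmetic difference is that you make explicit the quadratic-form reading $\langle v,Pv\rangle\geq\lambda_P\|v\|^2$ that the paper uses implicitly.
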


\begin{proof}
    Let $g_h=\hat{P}_hx_h-\hat{q}_h$. To prove the first statement, observe the following relations.
    \begin{align*}
        &\|x_{h+1} - x^*\|^2 = \|x_{h} - \bar{\beta} {g}_h - x^*\|^2\\
        &= \|x_{h} -x^*\|^2- 2\bar{\beta} \langle x_{h} - x^*, {g}_h \rangle +\bar{\beta}^2\| g_h \|^2\\
        &\overset{}{=} \|x_{h} -x^*\|^2 - 2\bar{\beta} \langle x_{h} -x^*, P(x_h - x^*) \rangle - 2\bar{\beta} \langle x_{h} -x^*, {g}_h - P(x_h - x^*) \rangle + \bar{\beta}^2\| g_h \|^2\\
        &\overset{(a)}{\leq} \|x_{h} -x^*\|^2 - 2\bar{\beta}\lambda_P \| x_{h} -x^*\|^2 - 2\bar{\beta} \langle x_{h} -x^*, {g}_h - P(x_h - x^*) \rangle\\
        &\hspace{1cm}+ 2\bar{\beta}^2\| g_h - P(x_h - x^*)\|^2+ 2\bar{\beta}^2\| P(x_h - x^*)\|^2\\
        &\overset{(b)}{\leq} \|x_{h} -x^*\|^2 - 2\bar{\beta}\lambda_P \| x_{h} - x^*\|^2 - 2\bar{\beta} \langle x_{h} - x^*, g_h - P(x_h - x^*) \rangle\\
        &\hspace{1cm}+ 2\bar{\beta}^2\| g_h - P(x_h - x^*)\|^2+ 2\Lambda_{P}^2\bar{\beta}^2\|x_h - x^*\|^2
    \end{align*}
    where $(a)$, $(b)$ follow from $\lambda_P\leq \norm{P}\leq \Lambda_P$. Taking the conditional expectation $\E_h$ on both sides,
    \begin{align}
        \label{eq:mid-exp-td}
        \E_h\left[\left\|x_{h+1} - x^*\right\|^2\right] &\leq (1- 2\bar{\beta}\lambda_P +2\Lambda_{P}^2\bar{\beta}^2)\|x_{h} -x^*\|^2 \nonumber\\
        &- 2\bar{\beta} \langle x_{h} - x^*, \E_h \left[g_h - P(x_h - x^*)\right] \rangle + 2\bar{\beta}^2\E_h\left\| g_h - P(x_h - x^*)\right\|^2
    \end{align}
    Observe that the third term in \eqref{eq:mid-exp-td} can be bounded as follows.
    \begin{align*}
        \| g_h - P(x_h - x^*)\|^2 &= \| (\hat{P}_h - P)(x_h - x^*) + (\hat{P}_h - P)x^* + (q-\hat{q}_h)\|^2\\
        &\leq 3\| \hat{P}_h - P\|^2 \|x_h - x^*\|^2 + 3\|\hat{P}_h - P\|^2 \|x^*\|^2 + 3\|q-\hat{q}_h\|^2\\
        &\leq 3\| \hat{P}_h - P\|^2 \|x_h - x^*\|^2 + 3\lambda_P^{-2}\Lambda_q^2\|\hat{P}_h - P\|^2 + 3\|q-\hat{q}_h\|^2
    \end{align*}
    where the last inequality follows from $\norm{x^*}^2=\norm{P^{-1}q}^2\leq \lambda_P^{-2}\Lambda_q^2$. Taking the expectation yields
    \begin{align}
        \E_h&\| g_h - P(x_h - x^*)\|^2 \nonumber\\
        &\leq 3\E_h\| \hat{P}_h - P\|^2 \|x_h - x^*\|^2 + 3\lambda_P^{-2}\Lambda_q^2\E_h\|\hat{P}_h - P\|^2 + 3\E_h\|\hat{q}_h-q\|^2 \nonumber\\
        &\leq 3\sigma_P^2 \norm{x_h-x^*}^2 + 3\lambda_P^{-2}\Lambda_q^2\sigma_P^2 + 3\sigma_q^2
    \end{align}
    The second term in \eqref{eq:mid-exp-td} can be bounded as
    \begin{align}
        -\langle x_{h} - x^*, \E_h &\left[g_h - P(x_h - x^*)\right] \rangle \leq \frac{\lambda_P}{4} \| x_{h} - x^*\|^2 + \frac{1}{\lambda_P}\left\|\E_h [g_h - P(x_h - x^*)]\right\|^2 \nonumber\\
        &\leq \frac{\lambda_P}{4} \| x_{h} - x^*\|^2 + \dfrac{1}{\lambda_P}\left\|\left\{\E_h[\hat{P}_h]-P\right\}x_h + \bigg\{q-\E_h\left[\hat{q}_h\right]\bigg\}\right\|^2\nonumber\\
        &\leq \frac{\lambda_P}{4} \| x_{h} - x^*\|^2 + \frac{2\delta_P^2\|x_h\|^2+2\delta_q^2}{\lambda_P}\nonumber\\
        &\leq \frac{\lambda_P}{4} \| x_{h} - x^*\|^2 + \frac{4\delta_P^2\|x_h-x^*\|^2+4\delta_P^2\lambda_P^{-2}\Lambda_q^2 + 2\delta_q^2}{\lambda_P}
    \end{align}
    where the last inequality follows from $\norm{x^*}^2=\norm{P^{-1}q}^2\leq \lambda_P^{-2}\Lambda_q^2$. Substituting the above bounds in \eqref{eq:mid-exp-td}, we arrive at the following.
    \begin{align*}
        \E_h\left[\|x_{h+1} - x^*\|^2\right] &\leq \left(1- \frac{3\bar{\beta}\lambda_P}{2} +\dfrac{8\bar{\beta}\delta_P^2}{\lambda_P}+6\bar{\beta}^2\sigma_P^2+2\bar{\beta}^2\Lambda_P^2\right)\|x_{h} -x^*\|^2 \\
        &\hspace{1cm}+\dfrac{4\bar{\beta}}{\lambda_P}\left[2\delta_P^2 \lambda_P^{-2}\Lambda_q^2+\delta_q^2\right] + 6\bar{\beta}^2\left[\lambda_P^{-2}\Lambda_q^2\sigma_P^2+\sigma_q^2\right]
    \end{align*}
    For $\delta_P \leq \lambda_P/8$, and $\bar{\beta}\leq \lambda_P/[4(6\sigma_P^2+2\Lambda_P^2)]$, we can modify the above inequality to the following. 
    \begin{align*}
        &\E_h[\|x_{h+1} - x^*\|^2] \leq \left(1-{\bar{\beta} \lambda_P}\right)\|x_{h} -x^*\|^2 +\dfrac{4\bar{\beta}}{\lambda_P}\left[2\delta_P^2 \lambda_P^{-2}\Lambda_q^2+\delta_q^2\right]+6\bar{\beta}^2\left[\lambda_P^{-2}\Lambda_q^2\sigma_P^2+\sigma_q^2\right]
    \end{align*}
    Taking the expectation on both sides and unrolling the recursion yields
    \begin{align*} 
        \E&[\|x_{H} - x^*\|^2] \leq \left(1-{\bar{\beta} \lambda_P}\right)^H\E\|x_{0} -x^*\|^2 \\
        &\hspace{1cm}+\sum_{h=0}^{H-1} \left(1-{\bar{\beta} \lambda_P}\right)^{h}\left\{\dfrac{4\bar{\beta}}{\lambda_P}\left[2\delta_P^2 \lambda_P^{-2}\Lambda_q^2+\delta_q^2\right]+6\bar{\beta}^2\left[\lambda_P^{-2}\Lambda_q^2\sigma_P^2+\sigma_q^2\right]\right\}\\
        &\leq\exp\left(-{H\bar{\beta} \lambda_P}\right)\E\|x_{0} -x^*\|^2 + \frac{1}{\bar{\beta} \lambda_P}\left\{\dfrac{4\bar{\beta}}{\lambda_P}\left[2\delta_P^2 \lambda_P^{-2}\Lambda_q^2+\delta_q^2\right]+6\bar{\beta}^2\left[\lambda_P^{-2}\Lambda_q^2\sigma_P^2+\sigma_q^2\right]\right\}\\
        &=\exp\left(-{H\bar{\beta} \lambda_P}\right)\E\|x_{0} -x^*\|^2 + \bigg\{4\lambda_P^{-2}\left[2\delta_P^2 \lambda_P^{-2}\Lambda_q^2+\delta_q^2\right]+ 6\bar{\beta}\lambda_P^{-1}\left[\lambda_P^{-2}\Lambda_q^2\sigma_P^2+\sigma_q^2\right]\bigg\} \label{eq_lemma_x_H_recursion}
    \end{align*}

    To prove the second statement, observe that we have the following recursion.
    \begin{align}
        \|\E&[x_{h+1}] - x^*\|^2 = \|\E[x_{h}] - \bar{\beta} \E[{g}_h] - x^*\|^2\nonumber\\
        &=\|\E[x_{h}] -x^*\|^2- 2\bar{\beta} \langle \E[x_{h}] - x^*, \E[{g}_h] \rangle +\bar{\beta}^2\| \E[g_h] \|^2\nonumber\\
        &\overset{}{=} \|\E[x_{h}] -x^*\|^2 - 2\bar{\beta} \langle \E[x_{h}] -x^*, P(\E[x_h] - x^*) \rangle \nonumber\\
        &\hspace{1cm}- 2\bar{\beta} \langle \E[x_{h}] -x^*, \E[{g}_h] - P(\E[x_h] - x^*) \rangle + \bar{\beta}^2\| \E[g_h] \|^2\nonumber\\
        &\overset{(a)}{\leq} \|\E[x_{h}] -x^*\|^2 - 2\bar{\beta}\lambda_P \| \E[x_{h}]-x^*\|^2 - 2\bar{\beta} \langle \E[x_{h}] -x^*, \E[{g}_h] - P(\E[x_h] - x^*) \rangle\nonumber\\
        &\hspace{1cm}+ 2\bar{\beta}^2\| \E[g_h] - P(\E[x_h] - x^*)\|^2+ 2\bar{\beta}^2\| P(\E[x_h] - x^*)\|^2\nonumber\\
        &\overset{(b)}{\leq} \|\E[x_{h}] -x^*\|^2 - 2\bar{\beta}\lambda_P \| \E[x_{h}] - x^*\|^2 - 2\bar{\beta} \langle \E[x_{h}] - x^*, \E[g_h] - P(\E[x_h] - x^*) \rangle\nonumber\\
        &\hspace{1cm}+ 2\bar{\beta}^2\| \E[g_h] - P(\E[x_h] - x^*)\|^2+ 2\Lambda_{P}^2\bar{\beta}^2\|\E[x_h] - x^*\|^2\nonumber\\
        &\leq (1- 2\bar{\beta}\lambda_P +2\Lambda_{P}^2\bar{\beta}^2)\|\E[x_{h}] -x^*\|^2 - 2\bar{\beta} \langle \E[x_{h}] - x^*, \E[g_h] - P(\E[x_h] - x^*) \rangle \nonumber\\
        &\hspace{1cm}+ 2\bar{\beta}^2\left\| \E[g_h] - P(\E[x_h] - x^*)\right\|^2
        \label{eq_43_appndx_washim}
    \end{align}
    where $(a)$ and $(b)$ are consequences of $\lambda_P\leq \norm{P}\leq \Lambda_P$. The third term in the last line of \eqref{eq_43_appndx_washim} can be bounded as follows.
    \begin{align*}
        \| \E[g_h] &- P(\E[x_h] - x^*)\|^2 = \left\| \E\left[(\hat{P}_h - P)(x_h - x^*)\right] + (\E[\hat{P}_h] - P)x^* + (q-\E[\hat{q}_h])\right\|^2\\
        &\leq 3\E\left[\| \E_h[\hat{P}_h] - P\|^2 \|x_h - x^*\|^2\right] + 3\E\left[\|\E_h[\hat{P}_h] - P\|^2 \right]\|x^*\|^2 + 3\left\|q-\E[\hat{q}_h]\right\|^2\\
        &\leq 3\delta_P^2\E\left[\|x_h - x^*\|^2\right] + 3\lambda_P^{-2}\Lambda_q^2\delta_P^2 + 3\bar{\delta}_q^2\\
        &\overset{(a)}{\leq}3\delta_P^2\bigg\{\E\left[\|x_0 - x^*\|^2\right]+\mathcal{O}\left(\bar{\beta}R_0+R_1\right)\bigg\} + 3\lambda_P^{-2}\Lambda_q^2\delta_P^2 + 3\bar{\delta}_q^2
    \end{align*}
    where $(a)$ follows from \eqref{eq_appndx_lemma_exp_x_h_recursion}. The second term in the last line of \eqref{eq_43_appndx_washim} can be bounded as follows.
    \begin{align*}
        -\langle &\E[x_{h}] - x^*, \E_h \left[\E[g_h] - P(\E[x_h] - x^*)\right] \rangle \\
        &\leq\frac{\lambda_P}{4} \| \E[x_{h}] - x^*\|^2 + \frac{1}{\lambda_P}\left\|\E[g_h] - P(\E[x_h] - x^*)\right\|^2 \nonumber\\
        &\leq\frac{\lambda_P}{4} \|\E[x_{h}] - x^*\|^2 + \dfrac{3}{\lambda_P}\left[\delta_P^2\bigg\{\E\left[\|x_0 - x^*\|^2\right]+\mathcal{O}\left(\bar{\beta}R_0+R_1\right)\bigg\} + \lambda_P^{-2}\Lambda_q^2\delta_P^2 + \bar{\delta}_q^2\right]\nonumber
    \end{align*}

    Substituting the above bounds in \eqref{eq_43_appndx_washim}, we obtain the following recursion.
    \begin{align*}
        \|\E[x_{h+1}] &- x^*\|^2 \leq \left(1-\dfrac{3\bar{\beta}\lambda_P}{2}+2\Lambda_P^2\bar{\beta}^2\right)\|\E[x_{h}] - x^*\|^2 \\
        &\hspace{0.5cm}+6\bar{\beta}\left(\bar{\beta}+\dfrac{1}{\lambda_P}\right)\left[\delta_P^2\bigg\{\E\left[\|x_0 - x^*\|^2\right]+\mathcal{O}\left(\bar{\beta}R_0+R_1\right)\bigg\} + \lambda_P^{-2}\Lambda_q^2\delta_P^2 + \bar{\delta}_q^2\right]
    \end{align*}
    If $\bar{\beta}<\lambda_P/(4\Lambda_P^2)$, the above bound implies the following.
    \begin{align*}
        \|\E[x_{h+1}] &- x^*\|^2 \leq \left(1-{\bar{\beta}\lambda_P}\right)\|\E[x_{h}] - x^*\|^2 \\
        &+6\bar{\beta}\left(\bar{\beta}+\dfrac{1}{\lambda_P}\right)\left[\delta_P^2\bigg\{\E\left[\|x_0 - x^*\|^2\right]+\mathcal{O}\left(\bar{\beta}R_0+R_1\right)\bigg\} + \lambda_P^{-2}\Lambda_q^2\delta_P^2 + \bar{\delta}_q^2\right]
    \end{align*}
    Unrolling the above recursion, we obtain
    \begin{align*}
        &\|\E[x_{H}] - x^*\|^2 \leq \left(1-{\bar{\beta} \lambda_P}\right)^H\|\E[x_{0}] -x^*\|^2 \\
        &+\sum_{h=0}^{H-1} 6\left(1-{\bar{\beta} \lambda_P}\right)^{h}\bar{\beta}\left(\bar{\beta}+\dfrac{1}{\lambda_P}\right)\left[\delta_P^2\bigg\{\E\left[\|x_0 - x^*\|^2\right]+\mathcal{O}\left(\bar{\beta}R_0+R_1\right)\bigg\} + \lambda_P^{-2}\Lambda_q^2\delta_P^2 + \bar{\delta}_q^2\right]\\
        &\leq \exp\left(-{H\bar{\beta}\lambda_P}\right)\|\E[x_{0}] - x^*\|^2 \\
        &\hspace{1cm}+\dfrac{6}{\lambda_P}\left(\bar{\beta}+\dfrac{1}{\lambda_P}\right)\left[\delta_P^2\bigg\{\E\left[\|x_0 - x^*\|^2\right]+\mathcal{O}\left(\bar{\beta}R_0+R_1\right)\bigg\} + \bar{R}_1 \right]
    \end{align*}
    where $\bar{R}_1 = \delta_P^2 \lambda_P^{-2}\Lambda_q^2+\bar{\delta}_q^2$. This concludes the result.
\end{proof}

We now provide some bounds on MLMC estimates.

\begin{lemma}
    \label{lem:expect_bound_grad}
    Consider a time-homogeneous, ergodic Markov chain $(Z_t)_{t\geq 0}$ with a unique invariant distribution $d_Z$ and a mixing time $\tau_{\mathrm{mix}}$. Assume that $\nabla F(x, Z)$ is an estimate of the gradient $\nabla F(x)$. Let $\|\E_{d_Z}[\nabla F(x, Z)]-\nabla F(x)\|^2 \leq \delta^2 $ and $\|\nabla F(x, Z_t) - \E_{d_Z}\left[\nabla F(x, Z)\right]\|^2 \leq \sigma^2 $ for all $t\geq 0$. If $Q\sim \mathrm{Geom}(1/2)$, then the following MLMC estimator 
    \begin{align}
        g_{\mathrm{MLMC}} = g^0 +
        \begin{cases}
            2^{Q}\left(g^{Q} - g^{Q-1}\right), & \text{if}~ 2^{Q} \leq T_{\max} \\
            0, & \text{otherwise}
        \end{cases}
        \text{~~where}~g^j = 2^{-j} \sum\nolimits_{t=0}^{2^j-1} \nabla F(x, Z_{t})
    \end{align}
    satisfies the inequalities stated below.

    (a) $\E[g_{\mathrm{MLMC}}] = \E[g^{\lfloor \log T_{\max}\rfloor}]$

    (b) $\E[\| \nabla F(x) - g_{\mathrm{MLMC}}\|^2] \leq  \mathcal{O}\left(\sigma^2 \tau_{\mathrm{mix}}\log_2 T_{\max}+\delta^2\right)$

    (c) $\| \nabla F(x) - \E [g_{\mathrm{MLMC}}]\|^2 \leq \cO \left( \sigma^2 \tau_{\mathrm{mix}} T_{\max}^{-1} + \delta^2 \right)$
\end{lemma}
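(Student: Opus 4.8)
The plan is to establish the three claims in the order (a), (c), (b), since the unbiasedness identity in (a) feeds directly into the bias bound (c), which is then reused inside the second-moment bound (b). Throughout I write $\bar f \coloneqq \E_{d_Z}[\nabla F(x, Z)]$ for the stationary mean and $J_{\max} \coloneqq \lfloor \log_2 T_{\max}\rfloor$, so that the gate $2^Q \leq T_{\max}$ is exactly $Q \leq J_{\max}$, and I exploit that $Q \sim \mathrm{Geom}(1/2)$ is independent of the chain with $\mathbb{P}(Q=j) = 2^{-j}$. For (a), I would condition on a single trajectory long enough to define every level $g^j$ with $j\le J_{\max}$ (legitimate because $Q$ is independent of the chain and $g^{j-1}$ uses only the first $2^{j-1}$ samples) and take the expectation over $Q$ alone. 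The geometric weights cancel the prefactor $2^j$ and the increments telescope: $\E_Q[g_{\mathrm{MLMC}}] = g^0 + \sum_{j=1}^{J_{\max}} 2^{-j}\cdot 2^{j}(g^j - g^{j-1}) = g^0 + (g^{J_{\max}} - g^0) = g^{J_{\max}}$. Taking the outer expectation gives $\E[g_{\mathrm{MLMC}}] = \E[g^{J_{\max}}] = \E[g^{\lfloor \log_2 T_{\max}\rfloor}]$, which is claim (a).

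For (c), using (a) I reduce the bias to $\norm{\nabla F(x) - \E[g^{J_{\max}}]}^2$, where $g^{J_{\max}}$ averages $N \coloneqq 2^{J_{\max}} \geq T_{\max}/2$ consecutive samples. I split $\nabla F(x) - \E[g^{J_{\max}}] = (\nabla F(x) - \bar f) + \tfrac1N\sum_{t=0}^{N-1}(\bar f - \E[\nabla F(x, Z_t)])$; the first piece has norm at most $\delta$ by the assumed stationary-bias bound. For the second, since $\norm{\nabla F(x, z) - \bar f}\le \sigma$ for every $z$, each per-step bias is controlled by the distance to stationarity, $\norm{\E[\nabla F(x, Z_t)] - \bar f} \le 2\sigma\norm{\mathbb{P}(Z_t\in\cdot) - d_Z}_{\mathrm{TV}} \le 2\sigma\,2^{-\lfloor t/\tau_{\mathrm{mix}}\rfloor}$, where the geometric decay comes from submultiplicativity of the total-variation distance across blocks of length $\tau_{\mathrm{mix}}$ in the mixing-time definition. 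Summing via $\sum_{t\ge 0} 2^{-\lfloor t/\tau_{\mathrm{mix}}\rfloor}\le 2\tau_{\mathrm{mix}}$ gives $\cO(\sigma\tau_{\mathrm{mix}}/N)$; squaring, using $N\ge T_{\max}/2$ and $\tau_{\mathrm{mix}}\le T_{\max}$ (so $\tau_{\mathrm{mix}}^2/T_{\max}^2 \le \tau_{\mathrm{mix}}/T_{\max}$, and trivially otherwise), yields $\cO(\sigma^2\tau_{\mathrm{mix}} T_{\max}^{-1} + \delta^2)$.

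For (b), I use the bias–variance split $\E\norm{\nabla F(x) - g_{\mathrm{MLMC}}}^2 = \norm{\nabla F(x) - \E[g_{\mathrm{MLMC}}]}^2 + \E\norm{g_{\mathrm{MLMC}} - \E g_{\mathrm{MLMC}}}^2$; the first term is (c), so only the variance remains. Writing $g_{\mathrm{MLMC}} = g^0 + \Delta$ with $\Delta = 2^Q(g^Q - g^{Q-1})\mathbf{1}(Q\le J_{\max})$ and applying $\norm{a+b}^2\le 2\norm a^2 + 2\norm b^2$, I reduce to bounding $\E\norm{g^0 - \bar f}^2\le \sigma^2$ and $\E\norm{\Delta}^2 = \sum_{j=1}^{J_{\max}} 2^{j}\,\E\norm{g^j - g^{j-1}}^2$ (the $Q$-expectation again cancels one power of $2^j$). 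Since $g^j$ and $g^{j-1}$ share the first half of the block, $g^j - g^{j-1} = \tfrac12(\text{second-half average} - g^{j-1})$, so $\E\norm{g^j - g^{j-1}}^2$ is governed by the variance of empirical Markov-chain averages of length $\asymp 2^j$.

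The main obstacle, on which everything rests, is the Markov-chain variance estimate $\E\norm{\tfrac1n\sum_{t=0}^{n-1}(\nabla F(x, Z_t) - \bar f)}^2 = \cO(\sigma^2\tau_{\mathrm{mix}}/n)$. I would prove it by expanding the square into $n^{-2}\sum_{s,t}\E\langle \nabla F(x, Z_s)-\bar f,\ \nabla F(x, Z_t)-\bar f\rangle$ and, for $s\le t$, conditioning on $Z_s$ to get $\E[\nabla F(x, Z_t)\mid Z_s] = (P^{t-s}\nabla F(x,\cdot))(Z_s)$; the same mixing bound then gives $\abs{\mathrm{Cov}(s,t)} \le 2\sigma^2\,2^{-\lfloor(t-s)/\tau_{\mathrm{mix}}\rfloor}$, and summing the geometric tails over $t$ contributes $\cO(\sigma^2\tau_{\mathrm{mix}})$ per row, hence $\cO(\sigma^2\tau_{\mathrm{mix}}/n)$ in total. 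The delicate points here are obtaining the geometric decay with the correct base from the $1/4$ mixing-time definition and keeping the bound uniform over the non-stationary starting distribution, which is exactly why I condition on $Z_s$ rather than assume stationarity. Plugging $n\asymp 2^j$ into $\E\norm{\Delta}^2$ gives $\sum_{j=1}^{J_{\max}} 2^j\cdot\cO(\sigma^2\tau_{\mathrm{mix}}\,2^{-j}) = \cO(\sigma^2\tau_{\mathrm{mix}} J_{\max}) = \cO(\sigma^2\tau_{\mathrm{mix}}\log_2 T_{\max})$, and combining with the $\delta^2$ term from (c) establishes claim (b).
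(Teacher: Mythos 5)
Your proposal is correct and follows the same overall architecture as the paper's proof --- telescoping over the geometric weights for (a), the level-by-level summation $\sum_{j} 2^{j}\,\E\|g^{j}-g^{j-1}\|^{2}$ for (b), and a stationarity-plus-mixing argument for (c) --- but it departs from the paper in two substantive ways. First, the paper does not prove the core Markov-chain concentration estimate $\E\big\|\tfrac{1}{N}\sum_{t}\nabla F(x,Z_{t})-\E_{d_Z}[\nabla F(x,Z)]\big\|^{2}\leq C_{1}\tau_{\mathrm{mix}}\sigma^{2}/N$; it imports it wholesale as Lemma~\ref{lem:tech_markov} from Beznosikov et al., whereas you sketch a self-contained derivation via the covariance expansion and geometric decay of total variation over blocks of length $\tau_{\mathrm{mix}}$. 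Your sketch is essentially the standard proof of that lemma and is sound (the conditioning on $Z_{s}$ correctly handles the non-stationary start), so this buys self-containedness at the cost of length. Second, for part (c) the paper reduces $\|\bar f-\E[g^{\lfloor\log_{2}T_{\max}\rfloor}]\|^{2}$ to the second-moment bound of Lemma~\ref{lem:tech_markov} via Jensen's inequality, landing directly on $\cO(\sigma^{2}\tau_{\mathrm{mix}}T_{\max}^{-1})$, while you bound the bias of the time average directly through per-step total-variation distances, obtaining the sharper $\cO(\sigma^{2}\tau_{\mathrm{mix}}^{2}T_{\max}^{-2})$ before relaxing it to the stated rate; your aside that the case $\tau_{\mathrm{mix}}>T_{\max}$ is handled trivially (since the bias is pointwise at most $\sigma$) is genuinely needed for that relaxation and is correctly supplied. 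The remaining difference --- your exact bias--variance orthogonal split in (b) versus the paper's factor-of-two triangle-inequality split --- is cosmetic and changes nothing.
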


Before proceeding to the proof, we state a useful lemma.
\begin{lemma}[Lemma 1, \cite{beznosikov2023first}]
    \label{lem:tech_markov}
    Consider the same setup as in Lemma \ref{lem:expect_bound_grad}. The following holds.
    \begin{equation}
        \label{eq:var_bound_any}
        \E\left[\norm{\dfrac{1}{N}\sum\limits_{t=0}^{N-1}\nabla F(x, Z_t) - \E_{d_Z}\left[\nabla F(x, Z)\right]}^2\right] \leq \dfrac{C_{1} t_{\mathrm{mix}}}{N} \sigma^2,
    \end{equation}
    where $N$ is a constant, $C_{1} = 16(1 + \frac{1}{\ln^{2}{4}})$, and the expectation is over the distribution of $\{Z_t\}_{t=0}^{N-1}$ emanating from any arbitrary initial distribution.
\end{lemma}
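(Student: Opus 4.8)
The plan is to establish the variance bound directly from the geometric decay of correlations induced by the mixing time, which is the standard route for controlling fluctuations of ergodic-chain averages. First I would center the summands: write $\bar g \coloneqq \E_{d_Z}[\nabla F(x,Z)]$ and $Y_t \coloneqq \nabla F(x,Z_t) - \bar g$, so that the hypothesis $\norm{\nabla F(x,Z_t) - \E_{d_Z}[\nabla F(x,Z)]}^2 \le \sigma^2$ becomes the almost-sure bound $\norm{Y_t}\le \sigma$ (valid for every realization, hence under any starting distribution). Expanding the squared norm of the empirical average gives
\begin{equation}
    \E\left[\norm{\frac{1}{N}\sum_{t=0}^{N-1} Y_t}^2\right] = \frac{1}{N^2}\sum_{t=0}^{N-1}\E\norm{Y_t}^2 + \frac{2}{N^2}\sum_{0\le s< t\le N-1}\E\langle Y_s, Y_t\rangle,
\end{equation}
where the $N$ diagonal terms are immediately bounded by $\sigma^2$ each, so everything reduces to controlling the off-diagonal covariances.

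The crux is to bound each covariance $\E\langle Y_s,Y_t\rangle$ with $s<t$ by the mixing rate. I would condition on $Z_s$ and invoke the Markov property to write $\E\langle Y_s,Y_t\rangle = \E\langle Y_s, \E[Y_t\mid Z_s]\rangle$. The key algebraic trick is that, since $\sum_{z'}\big(P^{t-s}(z,z') - d_Z(z')\big)=0$, the stationary mean $\bar g$ may be subtracted freely inside the conditional expectation, yielding
\begin{equation}
    \E[Y_t\mid Z_s=z] = \sum_{z'}\big(P^{t-s}(z,z') - d_Z(z')\big)\big(\nabla F(x,z') - \bar g\big),
\end{equation}
so that $\norm{\E[Y_t\mid Z_s=z]}\le \sigma\sum_{z'}\abs{P^{t-s}(z,z')-d_Z(z')} = 2\sigma\,\norm{P^{t-s}(z,\cdot)-d_Z}_{\mathrm{TV}}$. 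Combining this with $\norm{Y_s}\le\sigma$ and Cauchy--Schwarz gives $\abs{\E\langle Y_s,Y_t\rangle}\le 2\sigma^2\sup_z\norm{P^{t-s}(z,\cdot)-d_Z}_{\mathrm{TV}}$. The mixing time controls this uniformly over the initial state through $\sup_z\norm{P^{k}(z,\cdot)-d_Z}_{\mathrm{TV}}\le 2^{-\lfloor k/t_{\mathrm{mix}}\rfloor}$, so the lag-$k$ correlation decays geometrically, and it is precisely the uniformity in $z$ that lets the argument survive an arbitrary (non-stationary) start, as required by the statement.

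Finally I would sum the lagged correlations. Grouping by lag $k=t-s$ gives $\sum_{s<t}\abs{\E\langle Y_s,Y_t\rangle}\le 2\sigma^2\sum_{k=1}^{N-1}(N-k)2^{-\lfloor k/t_{\mathrm{mix}}\rfloor}\le 2\sigma^2 N\sum_{k=1}^{\infty}2^{-\lfloor k/t_{\mathrm{mix}}\rfloor}$, where the last sum is $\tilde{\cO}(t_{\mathrm{mix}})$ since each dyadic level $\lfloor k/t_{\mathrm{mix}}\rfloor=j$ is attained by at most $t_{\mathrm{mix}}$ indices. Substituting back into the expansion yields $\E[\norm{N^{-1}\sum_t Y_t}^2]\le \sigma^2/N + \tilde{\cO}(\sigma^2 t_{\mathrm{mix}}/N) = \tilde{\cO}(\sigma^2 t_{\mathrm{mix}}/N)$, which is the claimed form. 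I expect the only real obstacle to be quantitative rather than conceptual: recovering the precise constant $C_1=16(1+1/\ln^2 4)$ requires replacing the floor-based decay by the smooth bound $2^{-k/t_{\mathrm{mix}}}=e^{-(k\ln 2)/t_{\mathrm{mix}}}$ and evaluating the resulting geometric sums exactly (the $\ln^2 4$ reflecting the base-$2$ mixing convention and the accumulated factors of two), whereas the order-level conclusion actually used in the paper needs none of this sharpening.
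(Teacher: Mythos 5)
Your proof is correct, but note first that the paper does not actually prove this lemma: it is imported verbatim as Lemma~1 of \cite{beznosikov2023first}, so there is no internal argument to compare against, and what you have produced is a self-contained proof of the imported result via the standard lag-covariance route. Every step checks out: reading the hypothesis $\norm{\nabla F(x,Z_t)-\E_{d_Z}[\nabla F(x,Z)]}^2\le\sigma^2$ as a uniform (per-realization) bound $\norm{Y_t}\le\sigma$ is the correct interpretation and is exactly what lets the argument tolerate an arbitrary initial distribution; the identity $\E\langle Y_s,Y_t\rangle=\E\langle Y_s,\E[Y_t\mid Z_s]\rangle$ is valid since $Y_s$ is $\sigma(Z_s)$-measurable; and the free subtraction of $d_Z$ inside the conditional expectation, giving $\norm{\E[Y_t\mid Z_s=z]}\le 2\sigma\norm{P^{t-s}(z,\cdot)-d_Z}_{\mathrm{TV}}$, is the right trick. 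Two remarks. First, the decay estimate $\sup_z\norm{P^k(z,\cdot)-d_Z}_{\mathrm{TV}}\le 2^{-\lfloor k/t_{\mathrm{mix}}\rfloor}$ is the one genuinely nontrivial ingredient you assert without justification; it follows from the standard submultiplicativity argument (with $d(k)\coloneqq\sup_z\norm{P^k(z,\cdot)-d_Z}_{\mathrm{TV}}$ and $\bar d(k)\coloneqq\sup_{z,z'}\norm{P^k(z,\cdot)-P^k(z',\cdot)}_{\mathrm{TV}}$, the paper's threshold-$1/4$ definition gives $\bar d(t_{\mathrm{mix}})\le 1/2$, $\bar d$ is submultiplicative, and $d\le\bar d$), which you should cite or reproduce for completeness. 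Second, your accounting is actually sharper than you claim: $\sum_{k\ge 1}2^{-\lfloor k/t_{\mathrm{mix}}\rfloor}\le 2t_{\mathrm{mix}}$ exactly, since each level $\lfloor k/t_{\mathrm{mix}}\rfloor=j$ is hit by at most $t_{\mathrm{mix}}$ indices, so the cross terms contribute at most $8\sigma^2 t_{\mathrm{mix}}/N$ and the total bound is at most $9\sigma^2 t_{\mathrm{mix}}/N$ with no logarithmic factors whatsoever --- the $\tilde{\cO}$ hedging in your last step is unnecessary. Since $9<C_1=16(1+1/\ln^2 4)\approx 24.3$, your elementary argument establishes the stated inequality with the stated constant a fortiori, and no smooth-decay refinement of the form $e^{-(k\ln 2)/t_{\mathrm{mix}}}$ is needed; the specific value of $C_1$ is simply an artifact of the (looser) bookkeeping in the cited reference.
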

\begin{proof}[Proof of Lemma \ref{lem:expect_bound_grad}] 
    The statement $(a)$ can be proven as follows.
    \begin{align*}
        \E[g_{\mathrm{MLMC}}] & =\E[g^0] + \sum\limits_{j=1}^{\lfloor \log_2 T_{\max} \rfloor} \Pr\{Q = j\} \cdot 2^j\E[g^{j} - g^{j-1}] \\
        &=\E[g^0] + \sum\limits_{j=1}^{\lfloor \log_2 T_{\max} \rfloor} \E[g^{j}  - g^{j-1}] = \E[g^{\lfloor \log_2 T_{\max} \rfloor}],
    \end{align*}
    For the proof of $(b)$, notice that 
    \begin{align*}
        &\E\left[\left\| \E_{d_Z}\left[\nabla F(x, Z)\right] - g_{\mathrm{MLMC}}\right\|^2\right]\leq 2\E\left[\left\| \E_{d_Z}\left[\nabla F(x_t)\right] - g^0\right\|^2\right] + 2\E\left[\left\| g_{\mathrm{MLMC}} - g^0\right\|^2\right]\\
        &=2\E\left[\left\| \E_{d_Z}\left[\nabla F(x_t)\right] - g^0\right\|^2\right] + 2 \sum\nolimits_{j=1}^{\lfloor \log_2 T_{\max} \rfloor} \Pr\{Q = j\} \cdot 4^j \E\left[\left\|g^{j}  - g^{j-1}\right\|^2\right] \\
        &=2\E\left[\left\| \E_{d_Z}\left[\nabla F(x_t)\right] - g^0\right\|^2\right] + 2\sum\nolimits_{j=1}^{\lfloor \log_2 T_{\max} \rfloor} 2^j \E\left[\left\|g^{j}  - g^{j-1}\right\|^2\right] \\
        &\leq 2\E\left[\left\| \E_{d_Z}\left[\nabla F(x_t)\right] - g^0\right\|^2\right] \\
        &\hspace{1cm}+ 4\sum\nolimits_{j=1}^{\lfloor \log_2 T_{\max} \rfloor} 2^j \left(\E\left[\left\|\E_{d_z}\left[\nabla F(x, Z)\right]  - g^{j-1}\right\|^2\right] + \E\left[\left\|g^{j}  - \E_{d_Z}\left[\nabla F(x, Z)\right]\right\|^2\right] \right)\,\\
        &\overset{(a)}{\leq} C_{1} t_{\mathrm{mix}} \sigma^2 \left[2+ 4\sum\nolimits_{j=1}^{\lfloor\log_2 T_{\max}\rfloor} 2^j\left(\dfrac{1}{2^{j-1}}+\dfrac{1}{2^{j}}\right)\right]=\mathcal{O}\left(\sigma^2t_{\mathrm{mix}}\log_2 T_{\max} \right)
    \end{align*}
    where $(a)$ follows from Lemma \ref{lem:tech_markov}. Using this result, we obtain the following.
    \begin{align*}
        \E\left[\left\|\nabla F(x) - g_{\mathrm{MLMC}} \right\|^2\right]&\leq 2\E\left[\left\|\nabla F(x) - \E_{d_z}\left[\nabla F(x, Z)\right]\right\|^2\right] + 2\E\left[\left\|\E_{d_z}\left[\nabla F(x, Z)\right]-g_{\mathrm{MLMC}}\right\|^2\right]\\
        &\leq \mathcal{O}\left(\sigma^2 t_{\mathrm{mix}}\log_2 T_{\max}+\delta^2\right)
    \end{align*}

    This completes the proof of statement $(b)$. For part $(c)$, we have
    \begin{align}
        \begin{split}
            &\left\|\nabla F(x) - \E\left[g_{\mathrm{MLMC}}\right]\right\|^2\\
            &\leq 2\left\|\nabla F(x) - \E_{d_Z}\left[\nabla F(x, Z)\right] \right\|^2 + 2\left\| \E_{d_Z}\left[\nabla F(x, Z)\right] - \E\left[g_{\mathrm{MLMC}}\right]\right\|^2\\
            &\leq 2\delta^2 + 2\left\|  \E_{d_Z}\left[\nabla F(x, Z)\right] - \E[g^{\lfloor \log_2 T_{\max} \rfloor}]\right\|^2 \overset{(a)}{\leq} 2\delta^2+ \frac{2C_1 t_{\mathrm{mix}}}{T_{\max}} \sigma^2 
        \end{split}
    \end{align}

    where $(a)$ follows from Lemma \ref{lem:tech_markov}. This concludes the proof of Lemma \ref{lem:expect_bound_grad}.
\end{proof}


\section{Proof of Lemma \ref{lemma:local_global}}

We begin by stating a useful lemma:
\begin{lemma}[Lemma 4, \citep{bai2023regret}]
    \label{lem_performance_diff}
    The performance difference between two policies $\pi_\theta$, $\pi_{\theta'}$ is bounded as follows where $g\in \{r, c\}$.
    \begin{equation}
        J(\theta)-J(\theta')= \E_{s\sim d^{\pi_\theta}}\E_{a\sim\pi_\theta(\cdot\vert s)}\big[A^{\pi_{\theta'}}(s,a)\big].
    \end{equation}
\end{lemma}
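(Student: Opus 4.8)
The plan is to verify the identity by starting from the right-hand side, expanding the advantage via the average-reward Bellman equation \eqref{bellman}, and then exploiting the stationarity of $d^{\pi_\theta}$ to cancel the value-function contributions. Throughout I fix $g\in\{r,c\}$ and suppress the subscript, as in the statement. Writing $A^{\pi_{\theta'}}(s,a)=Q^{\pi_{\theta'}}(s,a)-V^{\pi_{\theta'}}(s)$ and substituting the Bellman relation $Q^{\pi_{\theta'}}(s,a)=g(s,a)-J(\theta')+\E_{s'\sim P(s,a)}[V^{\pi_{\theta'}}(s')]$ gives
\begin{equation}
A^{\pi_{\theta'}}(s,a)=g(s,a)-J(\theta')+\E_{s'\sim P(s,a)}\big[V^{\pi_{\theta'}}(s')\big]-V^{\pi_{\theta'}}(s).
\end{equation}

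First I would take the expectation $\E_{s\sim d^{\pi_\theta}}\E_{a\sim\pi_\theta(\cdot\mid s)}$ of this display term by term. The term $\E_{s\sim d^{\pi_\theta},\,a\sim\pi_\theta}[g(s,a)]=\sum_{s,a}d^{\pi_\theta}(s)\pi_\theta(a\mid s)g(s,a)$ equals $J(\theta)$ by the definition of the average reward, while the constant $-J(\theta')$ passes through unchanged. What remains is to show that the two value-function terms cancel exactly.

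This cancellation is the crux of the argument and the main point requiring care. I would push the expectation onto the successor state: rewriting $\E_{s\sim d^{\pi_\theta}}\E_{a\sim\pi_\theta}\E_{s'\sim P(s,a)}[V^{\pi_{\theta'}}(s')]$ as $\sum_{s'}\big(\sum_{s,a}d^{\pi_\theta}(s)\pi_\theta(a\mid s)P(s'\mid s,a)\big)V^{\pi_{\theta'}}(s')=\sum_{s'}\big(\sum_{s}d^{\pi_\theta}(s)P^{\pi_\theta}(s,s')\big)V^{\pi_{\theta'}}(s')$, using $P^{\pi_\theta}(s,s')=\sum_a\pi_\theta(a\mid s)P(s'\mid s,a)$. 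The stationarity identity $\sum_s d^{\pi_\theta}(s)P^{\pi_\theta}(s,s')=d^{\pi_\theta}(s')$, guaranteed by ergodicity (Assumption \ref{assump:ergodic_mdp}), then collapses this to $\sum_{s'}d^{\pi_\theta}(s')V^{\pi_{\theta'}}(s')=\E_{s\sim d^{\pi_\theta}}[V^{\pi_{\theta'}}(s)]$, which is precisely the last term in the display. Hence the two value terms cancel and what survives is $J(\theta)-J(\theta')$, as claimed.

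I expect no serious difficulty beyond the stationarity step, but it is worth flagging that this clean cancellation is special to the undiscounted average-reward formulation: it relies on $d^{\pi_\theta}$ being \emph{exactly} invariant under $P^{\pi_\theta}$, whereas the discounted setting would instead produce a geometric series and a $(1-\gamma)^{-1}$ factor. Ergodicity also ensures the series defining $Q^{\pi_{\theta'}}$ converges and that all value functions are well-defined and bounded, so the Bellman substitution and the interchange of summations used above are legitimate.
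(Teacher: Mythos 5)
Your proof is correct. The paper does not prove this lemma itself (it imports it as Lemma 4 of \citep{bai2023regret}), and your argument --- expanding $A^{\pi_{\theta'}}$ via the average-reward Bellman equation \eqref{bellman}, identifying $\E_{s\sim d^{\pi_\theta},\,a\sim\pi_\theta}[g(s,a)]=J(\theta)$, and cancelling the value terms through the invariance $\sum_s d^{\pi_\theta}(s)P^{\pi_\theta}(s,s')=d^{\pi_\theta}(s')$ --- is precisely the standard derivation used in that reference, with the interchanges justified by ergodicity and boundedness of the value functions.
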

Continuing with the proof of Lemma \ref{lemma:local_global}, we start with the definition of KL divergence. For notational simplicity, we shall use $A^{\pi_{\theta_k}}_{r+\lambda_k c}$ to denote $A^{\pi_{\theta_k}}_r + \lambda_k A^{\pi_{\theta_k}}_c$.
\begin{align*}
    &\E_{s\sim d^{\pi^*}}[KL(\pi^*(\cdot\vert s)\Vert\pi_{\theta_k}(\cdot\vert s))-KL(\pi^*(\cdot\vert s)\Vert\pi_{\theta_{k+1}}(\cdot\vert s))]=\E_{(s, a)\sim \nu^{\pi^*}}\bigg[\log\frac{\pi_{\theta_{k+1}}(a\vert s)}{\pi_{\theta_k}(a\vert s)}\bigg]\\
    &\overset{(a)}\geq\E_{(s, a)\sim \nu^{\pi^*}}[\nabla_\theta\log\pi_{\theta_k}(a\vert s)\cdot(\theta_{k+1}-\theta_k)]-\frac{G_2}{2}\Vert\theta_{k+1}-\theta_k\Vert^2\\
    &=\alpha\E_{(s, a) \sim \nu^{\pi^*}}[\nabla_{\theta}\log\pi_{\theta_k}(a\vert s)\cdot \omega_k]-\frac{G_2\alpha^2}{2}\Vert\omega_k\Vert^2\\
    &=\alpha\E_{(s, a) \sim \nu^{\pi^*}}[\nabla_\theta\log\pi_{\theta_k}(a\vert s)\cdot\omega^*_k]+\alpha\E_{(s, a)\sim \nu^{\pi^*}}[\nabla_\theta\log\pi_{\theta_k}(a\vert s)\cdot(\omega_k-\omega^*_k)]-\frac{G_2\alpha^2}{2}\Vert\omega_k\Vert^2\\
    &=\alpha[\mathcal{L}(\pi^*,\lambda_k)-\mathcal{L}(\theta_k,\lambda_k)]+\alpha\E_{(s, a) \sim \nu^{\pi^*}}[\nabla_\theta\log\pi_{\theta_k}(a\vert s)\cdot\omega^*_k]\\
    &-\alpha[\mathcal{L}(\pi^*,\lambda_k)-\mathcal{L}(\theta_k,\lambda_k)]+\alpha\E_{(s, a)\sim \nu^{\pi^*}}[\nabla_\theta\log\pi_{\theta_k}(a\vert s)\cdot(\omega_k-\omega^*_k)]-\frac{G_2\alpha^2}{2}\Vert\omega_k\Vert^2\\		
    &\overset{(b)}=\alpha[\mathcal{L}(\pi^*,\lambda_k)-\mathcal{L}(\theta_k,\lambda_k)]+\alpha\E_{(s, a) \sim \nu^{\pi^*}}\bigg[\nabla_\theta\log\pi_{\theta_k}(a\vert s)\cdot\omega^*_k- A_{r+\lambda_k c}^{\pi_{\theta_k}}(s,a)\bigg]\\
    &+\alpha\E_{(s, a)\sim \nu^{\pi^*}}[\nabla_\theta\log\pi_{\theta_k}(a\vert s)\cdot(\omega_k-\omega^*_k)]-\frac{G_2\alpha^2}{2}\Vert\omega_k\Vert^2\\
    &\overset{(c)}\geq\alpha[\mathcal{L}(\pi^*,\lambda_k)-\mathcal{L}(\theta_k,\lambda_k)]-\alpha\sqrt{\E_{(s, a)\sim \nu^{\pi^*}}\bigg[\bigg(\nabla_\theta\log\pi_{\theta_k}(a\vert s)\cdot\omega^*_k- A_{r+\lambda_k c}^{\pi_{\theta_k}}(s,a)\bigg)^2\bigg]}\\
    &+\alpha\E_{(s, a)\sim \nu^{\pi^*}}[\nabla_\theta\log\pi_{\theta_k}(a\vert s)\cdot(\omega_k-\omega^*_k)]-\frac{G_2\alpha^2}{2}\Vert\omega_k\Vert^2\\
    &\overset{(d)}\geq\alpha[\mathcal{L}(\pi^*,\lambda_k)-\mathcal{L}(\theta_k,\lambda_k)]-\alpha\sqrt{\epsilon_{\mathrm{bias}}} +\alpha\E_{(s, a)\sim \nu^{\pi^*}}[\nabla_\theta\log\pi_{\theta_k}(a\vert s)\cdot(\omega_k-\omega^*_k)]-\frac{G_2\alpha^2}{2}\Vert\omega_k\Vert^2\\
\end{align*}	
where $\mathcal{L}(\pi^*,\lambda_k) \coloneqq J_r^{\pi^*} +\lambda_k J_c^{\pi^*}$, the relations $(a)$, $(b)$ result from Assumption \ref{assump:score_func_bounds}$(b)$ and Lemma \ref{lem_performance_diff}, respectively. Inequality $(c)$ arises from the convexity of the function $f(x)=x^2$. Lastly, $(d)$ is a consequence of Assumption \ref{assump:function_approx_error}. By taking expectations on both sides, we derive:
\begin{align}
    \begin{split}
        &\E\left[\E_{s\sim d^{\pi^*}}\left[KL(\pi^*(\cdot\vert s)\Vert\pi_{\theta_k}(\cdot\vert s))-KL(\pi^*(\cdot\vert s)\Vert\pi_{\theta_{k+1}}(\cdot\vert s))\right]\right]\\
        &\geq \alpha[\mathcal{L}(\pi^*,\lambda_k)- \E[\mathcal{L}(\theta_k,\lambda_k)]]-\alpha\sqrt{\epsilon_{\mathrm{bias}}}\\
        &\hspace{1cm}+\alpha\E\left[\E_{s\sim d^{\pi^*}}\E_{a\sim\pi^*(\cdot\vert s)}[\nabla_\theta\log\pi_{\theta_k}(a\vert s)\cdot(\E[\omega_k|\theta_k]-\omega^*_k)]\right]-\frac{G_2\alpha^2}{2}\E\left[\Vert\omega_k\Vert^2\right]\\
        &\overset{}{\geq } \alpha[\mathcal{L}(\pi^*,\lambda_k)- \E[\mathcal{L}(\theta_k,\lambda_k)]]-\alpha\sqrt{\epsilon_{\mathrm{bias}}}\\
        &\hspace{1cm}-\alpha\E\left[\E_{(s, a)\sim \nu^{\pi^*}}[\Vert \nabla_\theta\log\pi_{\theta_k}(a\vert s)\Vert \Vert\E[\omega_k|\theta_k]-\omega^*_k\Vert]\right]-\frac{G_2\alpha^2}{2}\E\left[\Vert\omega_k\Vert^2\right]\\
        &\overset{(a)}{\geq } \alpha[\mathcal{L}(\pi^*,\lambda_k)- \E[\mathcal{L}(\theta_k,\lambda_k)]]-\alpha\sqrt{\epsilon_{\mathrm{bias}}} \\
        &\hspace{1cm}-\alpha G_1\E \Vert(\E[\omega_k|\theta_k]-\omega^*_k)\Vert-\frac{G_2\alpha^2}{2}\E\left[\Vert\omega_k\Vert^2\right]
    \end{split}
\end{align}
where $(a)$ follows from Assumption \ref{assump:score_func_bounds}$(a)$. Rearranging the terms, we get,
\begin{equation}
    \begin{split}
        \mathcal{L}(\pi^*,\lambda_k)- &\E[\mathcal{L}(\theta_k,\lambda_k)]\leq \sqrt{\epsilon_{\mathrm{bias}}}+ G_1\E\Vert(\E[\omega_k|\theta_k]-\omega^*_k)\Vert+\frac{G_2\alpha}{2}\E\Vert\omega_k\Vert^2\\
        &+\frac{1}{\alpha}\E\left[\E_{s\sim d^{\pi^*}}[KL(\pi^*(\cdot\vert s)\Vert\pi_{\theta_k}(\cdot\vert s))-KL(\pi^*(\cdot\vert s)\Vert\pi_{\theta_{k+1}}(\cdot\vert s))]\right]
    \end{split}
\end{equation}
Adding the above inequality from $k=0$ to $K-1$, using the non-negativity of KL divergence and dividing the resulting expression by $K$, we obtain the final result.


\section{Proof of Theorem \ref{th:acc_td}}

Recall that $\mathbf{A}_g(\theta) = \E_{\theta} [\mathbf{A}_g(\theta;z)]$, $\mathbf{b}_g(\theta) = \E_{\theta} [\mathbf{b}_g(\theta;z)]$, and $\xi^{*}_{g}(\theta) = (\mathbf{A}_{g}(\theta))^{-1}\mathbf{b}_{g}(\theta)$ where $\E_{\theta}$ denotes the expectation over the distribution of $z=(s, a, s')$ where $(s, a)\sim \nu^{\pi_{\theta}}$, $s'\sim P(s, a)$ (refer to section \ref{sec_global_conv_ananalysis}).
\begin{lemma}
    \label{lem:td-pd}
    For large $c_{\gamma}$, Assumption \ref{assum:critic_positive_definite} implies that $\mathbf{A}_g(\theta_k)-(\lambda/2)I$ is positive semi-definite for both $g \in \{r,c\}$ and for all $k$ where $I$ is an identity matrix, i.e., $\xi^\top \mathbf{A}_g(\theta_k) \xi \geq \lambda/2\cdot \|\xi\|^2$, for all $\xi$.
\end{lemma}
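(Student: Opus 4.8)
The plan is to exploit the block structure of $\mathbf{A}_g(\theta_k)$ and reduce positive-definiteness of the full $(1+m)\times(1+m)$ matrix to Assumption \ref{assum:critic_positive_definite}, which only controls the bottom-right $m\times m$ block. Taking the expectation $\E_{\theta_k}$ of the sample matrix in \eqref{eq_19_washim} yields the block form
\begin{align*}
\mathbf{A}_g(\theta_k)=\begin{pmatrix} c_{\gamma} & 0 \\ \bar{\phi}_g & B_g(\theta_k)\end{pmatrix},
\end{align*}
where $\bar{\phi}_g\coloneqq\E_{\theta_k}[\phi_g(s)]\in\mathbb{R}^m$ sits in the bottom-left block and $B_g(\theta_k)\coloneqq\E_{\theta_k}[\phi_g(s)(\phi_g(s)-\phi_g(s'))^\top]\in\mathbb{R}^{m\times m}$ is precisely the matrix appearing in Assumption \ref{assum:critic_positive_definite}, while the top-right block is a zero row. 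Note that $\mathbf{A}_g(\theta_k)$ is generally non-symmetric, but since a quadratic form only sees the symmetric part, this causes no difficulty.

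Next I would evaluate the quadratic form directly. Writing $\xi=(\eta,\zeta^\top)^\top$ with $\eta\in\mathbb{R}$ and $\zeta\in\mathbb{R}^m$, a short computation gives
\begin{align*}
\xi^\top\mathbf{A}_g(\theta_k)\xi = c_{\gamma}\eta^2 + \eta\,\zeta^\top\bar{\phi}_g + \zeta^\top B_g(\theta_k)\zeta.
\end{align*}
Assumption \ref{assum:critic_positive_definite} bounds the last term below by $\lambda\|\zeta\|^2$, and the feature bound $\|\phi_g(s)\|\leq 1$ together with Jensen's inequality gives $\|\bar{\phi}_g\|\leq 1$, so the cross term obeys $|\eta\,\zeta^\top\bar{\phi}_g|\leq |\eta|\,\|\zeta\|$.

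The remaining step is to absorb the cross term by Young's inequality: for any $a>0$, $|\eta|\,\|\zeta\|\leq \tfrac{a}{2}\eta^2+\tfrac{1}{2a}\|\zeta\|^2$. Taking $a=1/\lambda$ peels off exactly $\tfrac{\lambda}{2}\|\zeta\|^2$ from the $\lambda\|\zeta\|^2$ term, yielding
\begin{align*}
\xi^\top\mathbf{A}_g(\theta_k)\xi \geq \Big(c_{\gamma}-\tfrac{1}{2\lambda}\Big)\eta^2 + \tfrac{\lambda}{2}\|\zeta\|^2.
\end{align*}
This is at least $\tfrac{\lambda}{2}(\eta^2+\|\zeta\|^2)=\tfrac{\lambda}{2}\|\xi\|^2$ as soon as $c_{\gamma}\geq \tfrac{\lambda}{2}+\tfrac{1}{2\lambda}$, which makes the ``large $c_{\gamma}$'' requirement quantitative; the bound holds uniformly in $k$ and $g$ because the control on $B_g$ in Assumption \ref{assum:critic_positive_definite} and on the features is uniform in $\theta$. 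I do not anticipate a genuine obstacle, the only point needing care being the cross term that couples $\eta$ and $\zeta$ through the bottom-left block $\bar{\phi}_g$. This coupling is exactly why $c_{\gamma}$ must be chosen sufficiently large rather than merely positive, and why the admissible threshold degrades as $\lambda\to 0$.
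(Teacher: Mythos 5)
Your proof is correct and follows essentially the same route as the paper: both expand the quadratic form blockwise, lower-bound the $\zeta$-block by $\lambda\|\zeta\|^2$ via Assumption \ref{assum:critic_positive_definite}, bound the cross term by $|\eta|\|\zeta\|$ using $\|\phi_g(s)\|\leq 1$, and then absorb the cross term for large $c_\gamma$. The only difference is cosmetic: you absorb the cross term with Young's inequality (giving the threshold $c_\gamma \geq \tfrac{\lambda}{2}+\tfrac{1}{2\lambda}$), whereas the paper minimizes $c_\gamma u - \sqrt{u(1-u)} + \lambda(1-u)$ over $u\in[0,1]$ (giving $c_\gamma \geq \lambda + \sqrt{1/\lambda^2 - 1}$); both yield the stated conclusion.
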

\begin{proof}[Proof of Lemma \ref{lem:td-pd}]
    Note that for any $\xi=[\eta, \zeta]$, we have
    \begin{align}
        \begin{split}
            \xi^{\top}\mathbf{A}_g(\theta_k)\xi &= c_{\gamma}\eta^2 + \eta\zeta^{\top}\E_{\theta_k}\left[\phi_g(s)\right] + \zeta^{\top}\E_{\theta_k}\left[\phi_g(s)\left[\phi_g(s)-\phi_g(s')\right]^{\top}\right]\zeta\\
            &\overset{(a)}{\geq} c_{\gamma}\eta^2 - |\eta|\norm{\zeta} + \lambda \norm{\zeta}^2\\
            &\geq\norm{\xi}^2\left\{\min_{u\in[0, 1]} c_{\gamma} u - \sqrt{u(1-u)}+\lambda(1-u)\right\}\overset{(b)}{\geq}(\lambda/2)\norm{\xi}^2
        \end{split}
    \end{align}
    where $(a)$ is a consequence of Assumption \ref{assum:critic_positive_definite} and the fact that $\norm{\phi_g(s)}\leq 1$, $\forall s\in\mathcal{S}$. Finally, $(b)$ holds when $c_{\gamma} \geq \lambda + \sqrt{\frac{1}{\lambda^2}-1}$. This completes the proof.
\end{proof}

 Let $\mathbf{A}^{\mathrm{MLMC}}_{g,kh}$ and $\mathbf{b}^{\mathrm{MLMC}}_{g,kh}$ be the MLMC estimators of $\{\mathbf{A}_g(\theta_k; z_{kh}^t)\}_{t=0}^{l_{kh}-1}$ and $\{\mathbf{b}_g(\theta_k; z_{kh}^t)\}_{t=0}^{l_{kh}-1}$ respectively (see \eqref{eq_19_washim}) i.e.
\begin{equation}
    \mathbf{A}^{\mathrm{MLMC}}_{g,kh} = \mathbf{A}_{g,kh}^0 +
    \begin{cases}
        2^{P_h^k}(\mathbf{A}_{g,kh}^{Q_h^k} - \mathbf{A}_{g,kh}^{Q_h^k - 1}),& \text{if } 2^{Q_h^k}\leq T_{\max}\\
        0, &\text{otherwise}
    \end{cases}             
\end{equation}
where $\mathbf{A}_{g,kh}^j = \frac{1}{2^j}\sum_{t=0}^{2^j-1}\mathbf{A}_g (\theta_k;z_{kh}^t)$ and 
\begin{equation}
    \mathbf{b}^{\mathrm{MLMC}}_{g,kh} = \mathbf{b}_{g,kh}^0 +
    \begin{cases}
        2^{P_h^k}(\mathbf{b}_{g,kh}^{Q_h^k} - \mathbf{b}_{g,kh}^{Q_h^k - 1}),& \text{if } 2^{Q_h^k}\leq T_{\max}\\
        0, & \text{otherwise}
    \end{cases}             
\end{equation}
where $\mathbf{b}_{g,kh}^j = \frac{1}{2^j}\sum_{t=0}^{2^j-1}\mathbf{b}_g (\theta_k;z_{kh}^t)$.

We can bound the bias and variance of $\mathbf{A}^{\mathrm{MLMC}}_{g,kh}$ and $\mathbf{b}^{\mathrm{MLMC}}_{g,kh}$ as follows.
\begin{lemma}
    \label{lem:A-b-bias}
    Consider Algorithm \ref{alg:pdranac} with a policy parameter $\theta_k$ and assume assumptions \ref{assump:ergodic_mdp}-\ref{assump:FND_policy} hold. The MLMC estimators $\mathbf{A}^{\mathrm{MLMC}}_{g,kh}$ and $\mathbf{b}^{\mathrm{MLMC}}_{g,kh}$ obey the following bounds.
    \begin{itemize}
        \item[(a)] $\|\E_{k,h}[\mathbf{A}^{\mathrm{MLMC}}_{g,kh}]-\mathbf{A}_g(\theta_k)\|^2 \leq \cO\left(c_\gamma^2\tau_{\mathrm{mix}}T_{\max}^{-1}\right)$
        \item[(b)] $\|\E_{k,h}[\mathbf{b}^{\mathrm{MLMC}}_{g,kh}]-\mathbf{b}_g(\theta_k)\|^2 \leq 
        \cO\left(c_\gamma^2 \tau_{\mathrm{mix}}T_{\max}^{-1}\right)$
        \item[(c)] $\E_{k,h} \big[ \|\mathbf{A}^{\mathrm{MLMC}}_{g,kh}-\mathbf{A}_g(\theta_k)\|^2 \big] \leq \cO \left(c_\gamma^2\tau_{\mathrm{mix}} \log T_{\max}\right)$
        \item[(d)] $\E_{k,h} \big[\|\mathbf{b}^{\mathrm{MLMC}}_{g,kh}-\mathbf{b}_g(\theta_k)\|^2 \big] \leq \cO \left(c_\gamma^2\tau_{\mathrm{mix}} \log T_{\max}\right)$
    \end{itemize}
    where $h\in\{0, \cdots, H-1\}$ and $\E_{k,h}$ defines the conditional expectation given the history up to the inner loop step $h$ of the critic within the $k$th outer loop instant.
\end{lemma}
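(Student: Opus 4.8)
The plan is to recognize each of the four bounds as a direct instance of the MLMC estimator guarantees in Lemma \ref{lem:expect_bound_grad}, applied separately to the $\mathbf{A}$- and $\mathbf{b}$-estimators. The critical structural observation is that the per-sample quantities $\mathbf{A}_g(\theta_k; z)$ and $\mathbf{b}_g(\theta_k; z)$ are \emph{unbiased} for their targets under the stationary distribution: by the definitions in Section \ref{sec_global_conv_ananalysis}, $\mathbf{A}_g(\theta_k) = \E_{\theta_k}[\mathbf{A}_g(\theta_k; z)]$ and $\mathbf{b}_g(\theta_k) = \E_{\theta_k}[\mathbf{b}_g(\theta_k; z)]$, where the expectation is over $(s, a) \sim \nu^{\pi_{\theta_k}}$ and $s' \sim P(s, a)$. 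Hence, in the notation of Lemma \ref{lem:expect_bound_grad}, the stationary-mean bias parameter $\delta$ is exactly zero, and only the variance term survives in both the bias-of-MLMC bound (part (c)) and the variance-of-MLMC bound (part (b)).

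First I would fix the correspondence: set $\nabla F(x, Z) \mapsto \mathbf{A}_g(\theta_k; z)$ (respectively $\mathbf{b}_g(\theta_k; z)$), the stationary target $\nabla F(x) \mapsto \mathbf{A}_g(\theta_k)$ (respectively $\mathbf{b}_g(\theta_k)$), and the underlying ergodic chain to the $\pi_{\theta_k}$-induced transition chain $(s_{kh}^t, a_{kh}^t, s_{kh}^{t+1})_t$ with mixing time $\tau_{\mathrm{mix}}$. With $\delta = 0$, parts (b) and (c) of Lemma \ref{lem:expect_bound_grad} immediately give $\E_{k,h}[\|\mathbf{A}^{\mathrm{MLMC}}_{g,kh}-\mathbf{A}_g(\theta_k)\|^2] \leq \cO(\sigma_A^2\,\tau_{\mathrm{mix}}\log T_{\max})$ and $\|\E_{k,h}[\mathbf{A}^{\mathrm{MLMC}}_{g,kh}]-\mathbf{A}_g(\theta_k)\|^2 \leq \cO(\sigma_A^2\,\tau_{\mathrm{mix}}T_{\max}^{-1})$, with the analogous statements for $\mathbf{b}$.

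Next I would pin down the per-sample variance proxies $\sigma_A^2$ and $\sigma_b^2$. Reading off the explicit forms in \eqref{eq_19_washim} and using $\|\phi_g(s)\| \leq 1$ together with $|g(s,a)|\leq 1$, the matrix $\mathbf{A}_g(\theta_k; z)$ carries a $c_\gamma$ entry in its top-left corner while its remaining blocks have operator norm $\cO(1)$ (the lower-right block is bounded by $\|\phi_g(s)\|\,\|\phi_g(s)-\phi_g(s')\| \leq 2$), so $\|\mathbf{A}_g(\theta_k; z)\| = \cO(c_\gamma)$ almost surely; likewise $\|\mathbf{b}_g(\theta_k; z)\|^2 = c_\gamma^2 g^2 + g^2\|\phi_g(s)\|^2 = \cO(c_\gamma^2)$. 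Since both the per-sample quantities and their stationary means are bounded by $\cO(c_\gamma)$, the almost-sure deviation bounds required by Lemma \ref{lem:expect_bound_grad} hold with $\sigma_A^2 = \cO(c_\gamma^2)$ and $\sigma_b^2 = \cO(c_\gamma^2)$. Substituting these into the two displays above yields exactly the four claimed bounds.

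The one point demanding care---and the main, albeit mild, obstacle---is the conditioning. The estimator $\mathbf{A}^{\mathrm{MLMC}}_{g,kh}$ is built from a chain whose initial state $s_{kh}^0$ is inherited from the previous inner-loop step and is therefore \emph{not} distributed according to $\nu^{\pi_{\theta_k}}$. I would resolve this by invoking that the variance bound underpinning Lemma \ref{lem:expect_bound_grad}, namely Lemma \ref{lem:tech_markov}, holds uniformly over arbitrary initial distributions; consequently the conditional expectation $\E_{k,h}$, which fixes $\theta_k$ and the starting state $s_{kh}^0$, inherits the same bounds by specializing the initial distribution to a point mass. Thus non-stationarity of the inner-loop initialization does not enlarge any of the estimates, and the four inequalities follow.
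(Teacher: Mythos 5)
Your proposal is correct and follows essentially the same route as the paper: bound the per-sample norms $\|\mathbf{A}_g(\theta_k;z)\|$ and $\|\mathbf{b}_g(\theta_k;z)\|$ by $\cO(c_\gamma)$ using $\|\phi_g(s)\|\le 1$ and $|g(s,a)|\le 1$, note that the estimators are unbiased under the stationary distribution so the $\delta$-term in Lemma \ref{lem:expect_bound_grad} vanishes, and then read off the four bounds from parts (b) and (c) of that lemma with $\sigma^2=\cO(c_\gamma^2)$. Your explicit remark that Lemma \ref{lem:tech_markov} holds for arbitrary initial distributions (so conditioning on the inherited starting state $s_{kh}^0$ costs nothing) is a detail the paper leaves implicit but is exactly the right justification.
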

\begin{proof}
    Recall from \eqref{eq_19_washim} the definitions of $\mathbf{A}_g(\theta; z)$ and $\mathbf{b}(\theta_k; z)$ for any $z=(s, a, s')\in\mathcal{S}\times\mathcal{A}\times\mathcal{S}$. We have the following inequalities. 
    \begin{align}
    \label{eq_appndx_washim_54}
        &\norm{\mathbf{A}_g(\theta_k;z)}\leq |c_{\gamma}|+\norm{\phi_g(s)} + \norm{\phi_g(s)(\phi_g(s)-\phi_g(s'))^{\top}}\overset{(a)}{\leq} c_{\gamma}+3=\mathcal{O}(c_{\gamma}),\\
        \label{eq_appndx_washim_55}
        &\norm{\mathbf{b}_g(\theta_k;z)} \leq |c_{\gamma}g(s, a)| + \norm{g(s, a)\phi_g(s)} \overset{(b)}{\leq} c_{\gamma}+1 =\mathcal{O}(c_{\gamma})
    \end{align}
    where $(a)$, $(b)$ hold since $|g(s, a)|\leq 1$ and $\norm{{\phi_g(s)}}\leq 1$, $\forall (s, a)\in\mathcal{S}\times\mathcal{A}$, $\forall g \in \{r,c\}$. Hence, for any $z^j_{kh}\in\mathcal{S}\times\mathcal{A}\times\mathcal{S}$, we have the inequalities stated below.
    \begin{align*}
        \norm{\mathbf{A}_g(\theta_k;z^j_{kh})- \mathbf{A}_g(\theta_k)}^2 \leq \mathcal{O}(c_{\gamma}^2), ~\text{and}~ \norm{\mathbf{b}_g(\theta_k;z^j_{kh})- \mathbf{b}_g(\theta_k)}^2 \leq \mathcal{O}(c_{\gamma}^2)
    \end{align*}

    Combining the above results with Lemma \ref{lem:expect_bound_grad}, and using the definitions that $\mathbf{A}_g(\theta) = \E_{\theta} [\mathbf{A}_g(\theta;z)]$, $\mathbf{b}_g(\theta) = \E_{\theta} [\mathbf{b}_g(\theta;z)]$, we establish the result.
\end{proof}

Combining Lemma \ref{lem:td-pd} with \eqref{eq_appndx_washim_54}, \eqref{eq_appndx_washim_55}, we obtain the following for any $\theta_k$ with $c_\gamma  \geq \lambda + \sqrt{\frac{1}{\lambda^2}-1}$.
\begin{align}
    \label{eq_imp_appndx_58}
    \dfrac{\lambda}{2}\leq \norm{\mathbf{A}_g(\theta_k)}\leq \cO(c_{\gamma}), ~\text{and}~ \norm{\mathbf{b}_g(\theta_k)}\leq \cO(c_{\gamma})
\end{align}

Combining the above results with Lemma \ref{lem:td-pd} along with Theorem \ref{thm:generalrecursion}, we then obtain the following inequalities given that $T_{\max} \geq \frac{8c_\gamma^2 \tau_{\mathrm{mix}}}{\lambda}$ and $\gamma_\xi \leq \frac{\lambda}{24 c_\gamma^2\tau_{\mathrm{mix}}  \log T_{\max}}$.
\begin{align*}
    &\E_k\left[\norm{\xi_{g,H}^k-\xi_g^{*}(\theta_k)}^2\right]\leq \exp\left(-{H \gamma_\xi \lambda}\right)\norm{\xi_0 - \xi^{*}_{g}(\theta_k)}^2 + \Tilde{\cO}\left(\gamma_\xi R_0+R_1\right), \\
    &\E_k\left[\norm{\E_k[\xi_{g,H}^k]-\xi_g^{*}(\theta_k)}^2\right]\leq \exp\left(-{H \gamma_\xi \lambda}\right)\norm{\xi_0 - \xi^{*}_{g}(\theta_k)}^2\\ &\hspace{1cm}+\dfrac{\lambda\gamma_\xi+1}{\lambda^2}\left[\mathcal{O}(T_{\max}^{-1}c_{\gamma}^2\tau_{\mathrm{mix}})\bigg\{\norm{\xi_0 - \xi^{*}_{g}(\theta_k)}^2+\mathcal{O}\left(\gamma_\xi R_0+R_1\right)\bigg\} + \bar{R}_1 \right]
\end{align*}
where the terms $R_0, R_1, \bar{R}_1$ are defined as follows.
\begin{align*}
    &R_0 = \tilde{\cO}\left(\lambda^{-3}c_{\gamma}^4 \tau_{\mathrm{mix}} + \lambda^{-1}c_{\gamma}^2\tau_{\mathrm{mix}} \right) = \tilde{\cO}\left(\lambda^{-3}c_{\gamma}^4 \tau_{\mathrm{mix}}\right),\\
    &R_1=\cO\left( T_{\max}^{-1}\lambda^{-4}c_{\gamma}^4\tau_{\mathrm{mix}}+T_{\max}^{-1}\lambda^{-2}c_{\gamma}^2\tau_{\mathrm{mix}}\right)=\cO\left(T_{\max}^{-1}\lambda^{-4}c_{\gamma}^4\tau_{\mathrm{mix}}\right)\\
    &\bar{R}_1=\cO\left(T_{\max}^{-1}\lambda^{-2}c_{\gamma}^4t_{\mathrm{mix}}+T_{\max}^{-1}c_{\gamma}^2t_{\mathrm{mix}}\right)=\cO\left(T_{\max}^{-1}\lambda^{-2}c_{\gamma}^4 \tau_{\mathrm{mix}}\right)
\end{align*}

If we further set $\gamma_\xi = \frac{2 \log T}{\lambda H}$ while ensuring $\frac{2 \log T}{\lambda H} \leq \frac{\lambda}{24 c_\gamma^2\tau_{\mathrm{mix}}  \log T_{\max}} $, we have the following results.
\begin{align*}
    &\E_k\left[\norm{\xi_{g,H}^k-\xi^{*}_{g}(\theta_k)}^2\right]\leq \dfrac{1}{T^2}\norm{\xi_0 - \xi^{*}_{g}(\theta_k)}^2 + \tilde{\cO}\left(\lambda^{-4}c_{\gamma}^4 \tau_{\mathrm{mix}}\left(\dfrac{1}{ H}+\dfrac{1}{T_{\max}}\right)\right), \\
    &\norm{\E_k[\xi_{g,H}^k]-\xi^{*}_{g}(\theta_k)}^2\leq \tilde{\cO}\left(\left(\dfrac{1}{T^2}+\dfrac{c_{\gamma}^2\tau_{\mathrm{mix}}}{\lambda^{2} T_{\max}}\right)\norm{\xi_0 - \xi^{*}_{g}(\theta_k)}^2\right)+ \tilde{\cO}\left(\dfrac{c_{\gamma}^6\tau^2_{\mathrm{mix}}}{\lambda^{6} T_{\max}}\right) 
\end{align*}
We used the fact that $H^{-1}+T_{\max}^{-1}=\mathcal{O}(1)$ in the last inequality. Utilizing $\xi_{g, H}^k=\xi_g^k$, $\xi_0=0$, and $\|\xi_g^*(\theta_k)\|=\|[\mathbf{A}_g(\theta_k)]^{-1}\mathbf{b}_g(\theta_k)\|=\mathcal{O}(\lambda^{-2}c_{\gamma}^2)$ (via \eqref{eq_imp_appndx_58}), we conclude:
\begin{align}              
    \label{eq_appndx_56_washim}
    &\E_k\left[\norm{\xi_{g}^k-\xi^{*}_{g}(\theta_k)}^2\right]\leq \dfrac{c_{\gamma}^2}{\lambda^2T^2}+ \tilde{\cO}\left(\lambda^{-4}c_{\gamma}^4 \tau_{\mathrm{mix}}\left(\dfrac{1}{ H}+\dfrac{1}{T_{\max}}\right)\right), \\
    \label{eq_appndx_washim_57}
    &\norm{\E_k[\xi_{g}^k]-\xi^{*}_{g}(\theta_k)}^2\leq \tilde{\cO}\left(\left(\dfrac{1}{T^2}+\dfrac{c_{\gamma}^2\tau_{\mathrm{mix}}}{\lambda^{2} T_{\max}}\right)\frac{c_{\gamma}^2}{\lambda^2}+\dfrac{c_{\gamma}^6\tau^2_{\mathrm{mix}}}{\lambda^{6} T_{\max}}\right) = \tilde{\cO}\left(\frac{c_{\gamma}^2}{\lambda^2 T^2}+ \frac{c_{\gamma}^6\tau^2_{\mathrm{mix}}}{\lambda^6 T_{\max}}\right)
\end{align}
This completes the proof.


\section{Proof of Theorem \ref{thm:npg-main}}
\label{sec:npg-app}

To prove Theorem \ref{thm:npg-main}, we follow a proof structure similar to that of Theorem \ref{th:acc_td}. Let $F^{\mathrm{MLMC}}_{kh}$ and $\nabla^{\mathrm{MLMC}}_\theta J_{g,kh}$ denote the MLMC estimators defined as follows.
\begin{equation}
    F^{\mathrm{MLMC}}_{kh} = F_{kh}^0 +
    \begin{cases}
        2^{Q_h^k}(F_{kh}^{Q_h^k} - F_{kh}^{Q_h^k - 1}),& \text{if } 2^{Q_h^k}\leq T_{\max}\\
        0, & \text{otherwise}
    \end{cases}             
\end{equation}
with $F_{kh}^j = \frac{1}{2^j}\sum_{t=0}^{2^j-1}F (\theta_k;z_{kh}^t)$ (see \eqref{eq_22_washim_new}) and 
\begin{equation}
    \nabla^{\mathrm{MLMC}}_\theta J_{g,kh} = \nabla^{0}_\theta J_{g,kh}^0 +
    \begin{cases}
        2^{Q_h^k}(\nabla^{Q_h^k}_\theta J_{g,kh} - \nabla^{Q_h^k - 1}_\theta J_{g,kh}), & \text{if } 2^{Q_h^k}\leq T_{\max}\\
        0, & \text{otherwise}
    \end{cases}             
\end{equation}
where $\nabla^{j}_\theta J_{g,kh} = 2^{-j}\sum_{t=0}^{2^j-1}\hat{\nabla}_\theta J_g (\theta_k, \xi_g^k;z_{kh}^t)$ (see \eqref{eq_22_washim_new}). 

Using the above inequalities, we can bound the bias and variance of the MLMC estimators as follows.
\begin{lemma}
    \label{lem:F-J-bias}
    Consider Algorithm \ref{alg:pdranac} with given policy parameter $\theta_k$. Under the assumptions \ref{assump:ergodic_mdp}-\ref{assump:FND_policy}, the following statements hold.
    \begin{itemize}
        \item[(a)] $\|\E_{k,h}[F^{\mathrm{MLMC}}_{kh}]-F(\theta_k)\|^2 \leq \cO\left(G_1^4\tau_{\mathrm{mix}}T_{\max}^{-1}\right)$
        \item[(b)] $\|\E_{k,h}[\nabla^{\mathrm{MLMC}}_\theta J_{g,kh}]-\nabla_\theta J_g(\theta_k)\|^2 \leq \cO\left(\sigma^2_{k,g}\tau_{\mathrm{mix}}T_{\max}^{-1} + \delta^2_{k,g}\right)$
        \item[(c)] $\E_{k,h}\big[\|F^{\mathrm{MLMC}}_{kh}-F(\theta_k)\|^2\big] \leq \cO \left(G_1^4 \tau_{\mathrm{mix}} \log T_{\max}\right)$
        \item[(d)] $\E_{k,h}\big[\|\nabla^{\mathrm{MLMC}}_\theta J_{g,kh}-\nabla_\theta J_g(\theta_k)\|^2\big] \leq \cO \left(\sigma^2_{k,g} \tau_{\mathrm{mix}} \log T_{\max}+ \delta^2_{k,g}\right)$.
        \item[(e)] $\|\E_{k}[\nabla^{\mathrm{MLMC}}_\theta J_{g,kh}]-\nabla_\theta J_g(\theta_k)\|^2 \leq \cO\left(\bar{\sigma}^2_{k,g}\tau_{\mathrm{mix}}T_{\max}^{-1} + \bar{\delta}^2_{k,g}\right)$ 
    \end{itemize}
    where $\E_{k,h}$ defines the conditional expectation given the history up to the inner loop step $h$ of the actor within the $k$th outer loop instant, while $\E_k$ is the conditional expectation given the history up to the $k$th outer loop instant. Moreover, 
    \begin{align*}
        &\sigma_{k,g}^2=\cO\left(G_1^2\norm{\xi^k_{g}}^2\right),\\
        &\bar{\sigma}_{k,g}^2=\cO\left(G_1^2\E_{k}\big[\norm{\xi^k_{g}}^2\big]\right),\\ &\delta_{k,g}^2=\cO\left(G_1^2\norm{\xi^k_{g} - \xi_g^{*}(\theta_k)}^2 + G_1^2\epsilon_{\mathrm{app}}\right),\\ &\bar{\delta}_{k,g}^2=\cO\left(G_1^2\norm{\E_{k}[\xi^k_{g}] - \xi_g^{*}(\theta_k)}^2 + G_1^2\epsilon_{\mathrm{app}}\right) 
    \end{align*}
    for $h\in\{0, 1, \cdots, H-1\}$.
\end{lemma}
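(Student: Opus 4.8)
The plan is to reduce every one of the five bounds to the generic MLMC estimates of Lemma~\ref{lem:expect_bound_grad}, in exactly the way Lemma~\ref{lem:A-b-bias} was established. For each estimator it suffices to supply two quantities: a uniform per-sample second-moment bound, which plays the role of $\sigma^2$, and a single-sample bias $\norm{\E_{\nu^{\pi_{\theta_k}}}[\,\cdot\,]-(\text{target})}^2$, which plays the role of $\delta^2$. Then part $(c)$ of Lemma~\ref{lem:expect_bound_grad} yields the bias-type claims $(a),(b)$ and part $(b)$ yields the variance-type claims $(c),(d)$. For the Fisher estimator this is immediate: from \eqref{eq_22_washim_new}, $\hat F(\theta_k;z)=\nabla_\theta\log\pi_{\theta_k}(a\vert s)\otimes\nabla_\theta\log\pi_{\theta_k}(a\vert s)$, so Assumption~\ref{assump:score_func_bounds}$(a)$ gives $\norm{\hat F(\theta_k;z)}\le G_1^2$ and hence $\norm{\hat F(\theta_k;z)-F(\theta_k)}^2=\cO(G_1^4)$; since $F(\theta_k)=\E_{\nu^{\pi_{\theta_k}}}[\hat F(\theta_k;z)]$ by definition, the single-sample bias vanishes, so applying Lemma~\ref{lem:expect_bound_grad} with $\sigma^2=\cO(G_1^4)$ and $\delta=0$ produces $(a)$ and $(c)$.

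For the gradient estimator the per-sample moment gives $\sigma_{k,g}^2$: using $\vert g\vert\le 1$, $\norm{\phi_g}\le 1$, the TD advantage obeys $\vert\hat A_g^{\pi_{\theta_k}}(\xi_g^k;z)\vert\le 1+\vert\eta_g^k\vert+2\norm{\zeta_g^k}=\cO(1+\norm{\xi_g^k})$, whence $\norm{\hat\nabla_\theta J_g(\theta_k,\xi_g^k;z)}^2\le\cO(G_1^2\norm{\xi_g^k}^2)$ (the additive constant being absorbed, as $\norm{\xi_g^*(\theta_k)}=\Omega(1)$ by \eqref{eq_imp_appndx_58}). The core of the argument is the single-sample bias $\delta_{k,g}^2$. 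Writing $\bar\nabla_g(\xi):=\E_{(s,a)\sim\nu^{\pi_{\theta_k}}}\E_{s'\sim P(s,a)}[\hat\nabla_\theta J_g(\theta_k,\xi;z)]$ and expanding the true advantage via the Bellman identity \eqref{bellman}, the discrepancy between the expected advantage estimate and $A_g^{\pi_{\theta_k}}(s,a)$ equals $(J_g(\theta_k)-\eta_g^k)+\{(\zeta_g^k)^\top\phi_g(s)-V_g^{\pi_{\theta_k}}(s)\}-\E_{s'}[\,(\zeta_g^k)^\top\phi_g(s')-V_g^{\pi_{\theta_k}}(s')\,]$. Inserting $\pm\,\xi_g^*(\theta_k)$ and using $\eta_g^*(\theta_k)=J_g(\theta_k)$ together with $\norm{\phi_g}\le 1$ splits this into a piece controlled by $\norm{\xi_g^k-\xi_g^*(\theta_k)}$ and a piece equal to the approximation residual $(\zeta_g^*(\theta_k))^\top\phi_g-V_g^{\pi_{\theta_k}}$ evaluated at $s$ and at $s'$. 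Comparing with the policy-gradient theorem \eqref{eq_policy_grad_theorem}, taking $\E_{\nu^{\pi_{\theta_k}}}$, applying Jensen and $\norm{\nabla_\theta\log\pi_{\theta_k}}\le G_1$, and bounding the residuals by $\epsilon_{\mathrm{app}}$ through Assumption~\ref{assump:critic-error} yields $\norm{\bar\nabla_g(\xi_g^k)-\nabla_\theta J_g(\theta_k)}^2=\cO(G_1^2\norm{\xi_g^k-\xi_g^*(\theta_k)}^2+G_1^2\epsilon_{\mathrm{app}})=\delta_{k,g}^2$. A small but essential point here is that, because $d^{\pi_{\theta_k}}$ is invariant, the next state $s'$ is also marginally $d^{\pi_{\theta_k}}$-distributed, so the $s'$-residual is likewise bounded by $\epsilon_{\mathrm{app}}$. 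Feeding $\sigma_{k,g}^2,\delta_{k,g}^2$ into Lemma~\ref{lem:expect_bound_grad}$(b),(c)$ gives $(d)$ and $(b)$.

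Part $(e)$ requires a sharper argument, since naively taking $\E_k$ of $(b)$ would produce $\E_k\norm{\xi_g^k-\xi_g^*(\theta_k)}^2$ rather than the desired squared bias of the mean, $\norm{\E_k[\xi_g^k]-\xi_g^*(\theta_k)}^2$. The key observation is that $\hat\nabla_\theta J_g(\theta_k,\xi;z)$, and therefore $\bar\nabla_g(\xi)$, is \emph{affine} in $\xi=[\eta,\zeta]$. Conditioning first on the critic history of epoch $k$ (so that $\xi_g^k$ and the actor subroutine's initial state are fixed), Lemma~\ref{lem:expect_bound_grad}$(a)$ reduces the conditional mean of the estimator to the $\lfloor\log_2 T_{\max}\rfloor$-level average, whose gap from $\bar\nabla_g(\xi_g^k)$ is the Markov-mixing term $\cO(\sigma_{k,g}^2\,\tau_{\mathrm{mix}}/T_{\max})$. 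Averaging over the critic randomness under $\E_k$, affinity gives $\E_k[\bar\nabla_g(\xi_g^k)]=\bar\nabla_g(\E_k[\xi_g^k])$, so the structural bias is governed by $\E_k[\xi_g^k]-\xi_g^*(\theta_k)$ and reproduces the $\delta_{k,g}^2$ decomposition with $\xi_g^k$ replaced by $\E_k[\xi_g^k]$, while the averaged mixing constant becomes $\bar\sigma_{k,g}^2=\cO(G_1^2\E_k\norm{\xi_g^k}^2)$; this is precisely $(e)$.

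I expect the main obstacle to be the bias decomposition for $\delta_{k,g}^2$, namely cleanly separating the reducible TD/critic-estimation error $\norm{\xi_g^k-\xi_g^*(\theta_k)}$ from the irreducible approximation floor $\epsilon_{\mathrm{app}}$ via the Bellman identity and the invariance of $d^{\pi_{\theta_k}}$, together with the linearity argument in $(e)$ that converts a second moment of the critic error into the squared bias of its conditional mean. The remaining steps are routine applications of Lemma~\ref{lem:expect_bound_grad} and the score bounds, mirroring Lemma~\ref{lem:A-b-bias}.
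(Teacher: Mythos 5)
Your proposal is correct and follows essentially the same route as the paper's proof: both reduce each claim to Lemma~\ref{lem:expect_bound_grad} by supplying a per-sample variance $\sigma^2$ and single-sample bias $\delta^2$, obtain $\delta_{k,g}^2$ via the Bellman identity by inserting $\pm\,\xi_g^*(\theta_k)$ (the paper's $T_0,T_1,T_2$ split), and handle $(e)$ by exploiting the affinity of the estimator in $\xi$ to move $\E_k$ inside before redoing the decomposition with $\E_k[\xi_g^k]$. Your explicit remarks on the stationarity of $s'$ (needed to bound the $s'$-residual by $\epsilon_{\mathrm{app}}$) and on the affinity step make explicit two points the paper uses only implicitly.
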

\begin{proof}[Proof of Lemma \ref{lem:F-J-bias}]
    Fix an outer loop instant $k$ and an inner loop instant $h\in\{0, \cdots, H-1\}$ of the actor subroutine. Recall the definition of ${F}(\theta_k; \cdot)$ from \eqref{eq_22_washim_new}. The following inequalities hold for any $\theta_k$ and $z^j_{kh}\in\mathcal{S}\times \mathcal{A} \times \mathcal{S}$.
    \begin{align*}
        &\E_{\theta_k}\left[{F}(\theta_k; z) \right] \overset{(a)}{=} F(\theta_k), ~\text{and}~ \norm{{F}(\theta_k; z^j_{kh}) -\E_{\theta_k}\left[{F}(\theta_k; z) \right]}^2\overset{(b)}{\leq} 2G_1^4
    \end{align*}
    where $\E_{\theta_k}$ denotes the expectation over the distribution $z = (s,a,s')$ where $(s, a)\sim \nu^{\pi_{\theta_k}}$, $s'\sim P(\cdot|s, a)$. The equality $(a)$ follows from the definition of the Fisher matrix, and $(b)$ is a consequence of assumption \ref{assump:score_func_bounds}. Statements (a) and (c), therefore, directly follow from Lemma \ref{lem:expect_bound_grad}. 

    To prove the other statements, recall the definition of $ \hat{\nabla}_\theta J_g(\theta_k, \xi_g^k;\cdot)$ from \eqref{eq_22_washim_new}. Note the following relations for arbitrary $\theta_k, \xi_{g}^k=[\eta_g^k, \zeta_g^k]$.
    \begin{align*}
        \E_{\theta_k}&\left[\hat{\nabla}_\theta J_g(\theta_k, \xi_g^k;z)\right] - \nabla_\theta J_g(\theta_k)\\
        &=\E_{\theta_k}\left[\bigg\{g(s, a) - \eta_{g}^k + \langle \phi_g(s') - \phi_g(s), \zeta_{g}^k \rangle\bigg\}\nabla_{\theta} \log{\pi_{\theta_k}}(a|s) \right] - \nabla_\theta J_g(\theta_k)\\
        &\overset{(a)}{=}\underbrace{\E_{\theta_k}\left[\bigg\{\eta_{g}^{*}(\theta_k) - \eta^k_{g} + \langle \phi_g(s') - \phi_g(s), \zeta^k_{g}-\zeta_{g}^{*}(\theta_k) \rangle\bigg\}\nabla_{\theta} \log{\pi_{\theta_k}}(a|s) \right]}_{T_0}\\
        &+\underbrace{\E_{\theta_k}\left[\bigg\{V_g^{\pi_{\theta_k}}(s) - \langle \phi_g(s), \zeta_{g}^{*}(\theta_k)\rangle + \langle \phi_g(s'), \zeta_{g}^{*}(\theta_k)\rangle - V_g^{\pi_{\theta_k}}(s')\bigg\}\nabla_{\theta} \log{\pi_{\theta_k}}(a|s) \right]}_{T_1}\\
        &+\underbrace{\E_{\theta_k}\left[\bigg\{V_g^{\pi_{\theta_k}}(s') - \eta_{g}^{*}(\theta_k) + g(s, a) - V_g^{\pi_{\theta_k}}(s)\bigg\}\nabla_{\theta} \log{\pi_{\theta_k}}(a|s) \right]-\nabla_{\theta} J_g(\theta_k)}_{T_2}
    \end{align*}
    We will use the notation that $\xi_{g}^{*}(\theta_k) =[\eta_{g}^{*}(\theta_k), \zeta_{g}^{*}(\theta_k)]^{\top}$. Observe that
    \begin{align}
        \norm{T_0}^2 &\overset{(a)}{=} \cO \left(G_1^2\norm{\xi^k_{g}-\xi_g^{*}(\theta_k)}^2\right), ~\norm{T_1}^2\overset{(b)}{=} \cO\left(G_1^2 \epsilon_{\mathrm{app}}\right), ~\text{and}~ T_2\overset{(c)}{=} 0
    \end{align}
    where $(a)$ follows from Assumption \ref{assump:score_func_bounds} and the boundedness of the feature map, $\phi$ while $(b)$ is a consequence of Assumption \ref{assump:score_func_bounds} and \ref{assump:critic-error}. Finally, $(c)$ is an application of Bellman's equation and the fact that $\eta_g^*(\theta_k) = J_g^{\pi_{\theta_k}}$, which can be easily verified. We get,
    \begin{align}
        \label{eq_appndx_washim_46}
        \norm{\E_{\theta_k}\left[\hat{\nabla}_\theta J_g(\theta_k;z)\right] - \nabla_\theta J_g(\theta_k)}^2 \leq \delta_{k,g}^2 = \cO \left(G_1^2\norm{\xi^k_{g}-\xi_g^{*}(\theta_k)}^2 + G_1^2 \epsilon_{\mathrm{app}}\right)
    \end{align}

    Moreover, observe that, for arbitrary $z_{kh}^j\in \mathcal{S}\times \mathcal{A}\times \mathcal{S}$ 
    \begin{align}
    \label{eq_appndx_washim_47}
        \norm{\hat{\nabla}_\theta J_g(\theta_k, \xi_g^k;z^j_{kh}) - \E_{\theta_k}\left[\hat{\nabla}_\theta J_g(\theta_k, \xi_g^k;z)\right]}^2 \overset{(a)}{\leq} \sigma_{k,g}^2 = \cO\left(G_1^2\norm{\xi^k_{g}}^2\right)
    \end{align}
    where $(a)$ results from Assumption \ref{assump:score_func_bounds} and the boundedness of $\phi$. We can, thus, conclude statements (c) and (d) by applying \eqref{eq_appndx_washim_46} and \eqref{eq_appndx_washim_47} in Lemma \ref{lem:expect_bound_grad}. To prove the statement (e), note that
    \begin{align*}
        &\E_{\theta_k}\left[\E_{k}\left[\hat{\nabla}_\theta J_g(\theta_k, \xi_g^k;z)  \right]\right] - \nabla_\theta J_g(\theta_k)\\
        &= \E_{\theta_k}\left[\bigg\{g(s, a) - \E_{k}[\eta_{g}^k] + \langle \phi_g(s') - \phi_g(s), \E_{k}[\zeta_{g}^k] \rangle\bigg\}\nabla_{\theta} \log {\pi_{\theta_k}}(a|s) \right] - \nabla_\theta J_g(\theta_k)\\
        &\overset{(a)}{=}\underbrace{\E_{\theta_k}\left[\bigg\{\eta_g^{*}(\theta_k) - \E_{k}[\eta_{g}^k] + \langle \phi_g(s') - \phi_g(s), \E_{k}[\zeta_{g}^k]-\zeta^{*}_g(\theta_k) \rangle\bigg\}\nabla_{\theta} \log {\pi_{\theta_k}}(a|s) \right]}_{T_0}+\\
        &+ \underbrace{\E_{\theta_k}\left[\bigg\{V_g^{\pi_{\theta_k}}(s) - \langle \phi_g(s), \zeta^{*}_g(\theta_k)\rangle + \langle \phi_g(s'),  \zeta^{*}_g(\theta_k)\rangle - V_g^{\pi_{\theta_k}}(s')\bigg\}\nabla_{\theta} \log {\pi_{\theta_k}}(a|s) \right]}_{T_1}\\
        &+ \underbrace{\E_{\theta_k}\left[\bigg\{V_g^{\pi_{\theta_k}}(s') - \eta_g^{*}(\theta_k) +g(s, a) - V_g^{\pi_{\theta_k}}(s)\bigg\}\nabla_{\theta} \log {\pi_{\theta_k}}(a|s) \right]-\nabla_{\theta} J_g(\theta_k)}_{T_2}
    \end{align*}
    Observe the following bounds.
    \begin{align}
        \norm{T_0}^2 &\overset{(a)}{=} \cO \left(G_1^2\norm{\E_{k}[\xi_{g}^k]-\xi_g^{*}(\theta_k)}^2\right), ~\norm{T_1}^2\overset{(b)}{=} \cO\left(G_1^2 \epsilon_{\mathrm{app}}\right), ~\text{and}~ T_2\overset{(c)}{=} 0
    \end{align}
    where $(a)$ follows from Assumption \ref{assump:score_func_bounds} and the boundedness of the feature map, $\phi$ while $(b)$ is a consequence of Assumption \ref{assump:score_func_bounds} and \ref{assump:critic-error}. Finally, $(c)$ is an application of Bellman's equation. We get,
    \begin{align}
    \label{eq_appndx_washim_46_}
        \norm{\E_{\theta_k}\left[\E_{k}\left[\hat{\nabla}_\theta J_g(\theta_k, \xi_g^k;z)\right]\right] - \nabla_\theta J_g(\theta_k)}^2 \leq \bar{\delta}_{k,g}^2 = \cO \left(G_1^2\norm{\E_{k}[\xi_{g}^k]-\xi_g^{*}(\theta_k)}^2 + G_1^2 \epsilon_{\mathrm{app}}\right)
    \end{align}
    Using the above bound, we deduce the following.
    \begin{align*}
        &\norm{\E_{k}\left[\nabla_\theta J^{\mathrm{MLMC}}_{g,kh}\right]-\nabla_{\theta} J_g(\theta_k)}^2\\
        &\leq 2\norm{\E_{k}\left[\nabla^{\mathrm{MLMC}}_\theta J_{g,kh}\right] - \E_{\theta_k}\left[\E_{k}\left[\hat{\nabla}_\theta J_g(\theta_k, \xi_g^k;z)\right]\right]}^2 \\
        &\hspace{1cm}+ 2\norm{\E_{\theta_k}\left[\E_{k}\left[\hat{\nabla}_\theta J_g(\theta_k, \xi_g^k;z)\right]\right] - \nabla_\theta J_g(\theta_k)}^2\\
        &\overset{(a)}{\leq} 2\E_{k}\norm{\E_{k, h}\left[\nabla^{\mathrm{MLMC}}_\theta J_{g,kh}\right] - \E_{\theta_k}\left[\hat{\nabla}_\theta J_g(\theta_k, \xi_g^k;z)\right]}^2 + \cO\left(\bar{\delta}_{k,g}^2\right)\\
        &\overset{(b)}{\leq} \cO\left(\tau_{\mathrm{mix}}T_{\max}^{-1}\bar{\sigma}_{k,g}^2 + \bar{\delta}_{k,g}^2\right)
    \end{align*}
    where $(a)$ follows from \eqref{eq_appndx_washim_46_}, $(b)$ follows from Lemma \ref{lem:expect_bound_grad}(a), \ref{lem:tech_markov}, and the definition of $\bar{\sigma}^2_{k,g}$.
\end{proof}

Combining Lemma \ref{lemma_aux_2} with Assumptions \ref{assump:score_func_bounds} and \ref{assump:FND_policy}, we obtain the following for a given policy parameter $\theta$ and $\forall g \in \{r,c\}$,
\begin{align}
    \mu\leq \norm{F(\theta)}\leq G_1^2, ~\text{and}~ \norm{\nabla_\theta J_g(\theta)}\leq \cO\left(G_1\tau_{\mathrm{mix}}\right)
\end{align}

Combining the above results with Lemma \ref{lem:F-J-bias} and invoking Theorem \ref{thm:generalrecursion},  we then obtain given that $T_{\max} \geq \frac{8G_1^4 \tau_{\mathrm{mix}}}{\mu}$ and $\gamma_\omega \leq \frac{\mu}{4(6 G_1^4 \tau_{\mathrm{mix}} \log T_{\max} + 2 G_1^2  \tau_{\mathrm{mix}}^2 \log T_{\max})}$.
\begin{align*}
    \E_k\left[\norm{\omega_{g}^k-\omega_{g, k}^{*}}^2\right]&\leq\exp\left(-{H\gamma_\omega \mu}\right)\norm{\omega_0-\omega_{g, k}^{*}}^2 + \Tilde{\cO}\left(\gamma_\omega R_0+R_1\right), \\
    \norm{\E_k[\omega_{g}^k]-\omega_{g, k}^{*}}^2&\leq\exp\left(-{H\gamma_\omega \mu}\right)\norm{\omega_0 -\omega_{g, k}^{*}}^2 \\
    &+\dfrac{\mu\gamma_\omega+1}{\mu^2}\left[\mathcal{O}(T_{\max}^{-1}G_{1}^4\tau_{\mathrm{mix}})\bigg\{\norm{\omega_0 -\omega_{g, k}^{*}}^2+\mathcal{O}\left(\gamma_\omega R_0+R_1\right)\bigg\} + \bar{R}_1 \right]
\end{align*}
where the terms $R_0, R_1, \bar{R}_1$ are defined as follows.
\begin{align*}
    R_0&= \tilde{\cO}\left(\mu^{-3}G_1^6\tau_{\mathrm{mix}}^3+ \mu^{-1}G_1^2\tau_{\mathrm{mix}}\E_k\left[\norm{\xi_{g}^k}^2\right] + \mu^{-1}G_1^2\E_k\left[\norm{\xi_{g}^k-\xi_g^{*}(\theta_k)}^2\right] + \mu^{-1}G_1^2\epsilon_{\mathrm{app}}\right),\\    R_1&=\mu^{-2}G_1^2\cO\left(T_{\max}^{-1}\mu^{-2}G_1^4\tau^3_{\mathrm{mix}} + T_{\max}^{-1}\tau_{\mathrm{mix}}\E_k\left[\norm{\xi_{g}^k}^2\right]+ \E_k\left[\norm{\xi_{g}^k-\xi_g^{*}(\theta_k)}^2\right] + \epsilon_{\mathrm{app}}\right),\\
    \bar{R}_1&=\cO\left(T_{\max}^{-1}\mu^{-2}G_1^6\tau^3_{\mathrm{mix}} + T_{\max}^{-1}G_1^2\tau_{\mathrm{mix}}\E_k\left[\norm{\xi_{g}^k}^2\right] + G_1^2\norm{\E_k[\xi_{g}^k]-\xi_g^{*}(\theta_k)}^2 + G_1^2\epsilon_{\mathrm{app}}\right)
\end{align*}

Moreover, note that
\begin{align*}
    \E_k\left[\norm{\xi_{g}^k}^2\right]\leq 2\E_k\left[\norm{\xi_{g}^k-\xi_g^{*}(\theta_k)}^2\right] + 2\E_k\left[\norm{\xi_g^{*}(\theta_k)}^2\right] \overset{(a)}{\leq} \cO\left(\E_k\left[\norm{\xi_{g}^k-\xi_g^{*}(\theta_k)}^2\right]+ \lambda^{-2}c_{\gamma}^2\right)
\end{align*}
where $(a)$ uses \eqref{eq_imp_appndx_58} for sufficiently large $c_{\gamma}$ and the definition that $\xi^{*}_{g}(\theta_k) = [\mathbf{A}_{g}(\theta_k)]^{-1}\mathbf{b}_{g}(\theta_k)$. If we set $\gamma_\omega = \frac{2 \log T}{\mu H}\leq\frac{\mu}{4(6 G_1^4 \tau_{\mathrm{mix}} \log T_{\max} + 2 G_1^2  \tau_{\mathrm{mix}}^2 \log T_{\max})}$, we get
\begin{align*}
    &\E_k\left[\norm{\omega_{g}^k-\omega_{g, k}^{*}}^2\right]\leq  \tilde{\cO}\left(\left(\frac{1}{H}+\frac{1}{T_{\max}}\right)\left\{\mu^{-4}G_1^6\tau_{\mathrm{mix}}^3+ \mu^{-2}\lambda^{-2}G_1^2c_{\gamma}^2 \tau_{\mathrm{mix}}\right\}\right)\\
    & \hspace{1cm}+ \dfrac{1}{T^2}\norm{\omega_0-\omega_{g, k}^{*}}^2 + \Tilde{\cO}\left( \mu^{-2}G_1^2 \bigg\{ \E_k\left[\norm{\xi_{g}^k-\xi_g^{*}(\theta_k)}^2\right] + \epsilon_{\mathrm{app}}  \bigg\}\right), \\
    &\norm{\E_k[\omega_g^k]-\omega_{g, k}^*}^2\leq \mu^{-2}G_1^2\Tilde{\cO}\left(\norm{\E_k[\xi_{g}^k]-\xi_g^{*}(\theta_k)}^2 + \epsilon_{\mathrm{app}}\right)+\left(\dfrac{1}{T^2}+\dfrac{G_1^4\tau_{\mathrm{mix}}}{\mu^2 T_{\max}}\right)\norm{\omega_0-\omega_{g,k}^{*}}^2 \\
    &\hspace{1cm}+\Tilde{\cO}\left( \dfrac{G_1^4\tau_{\mathrm{mix}}}{\mu^2 T_{\max}} \bigg\{ \mu^{-2}G_1^2\tau_{\mathrm{mix}}\E_k\left[\norm{\xi_{g}^k-\xi_g^{*}(\theta_k)}^2\right] + \mu^{-4}G_1^6\tau_{\mathrm{mix}}^3 + \mu^{-2}\lambda^{-2}G_1^2 c_{\gamma}^2 \tau_{\mathrm{mix}} \bigg\}\right)
\end{align*}
Substituting $\omega_0=0$, $\|\omega_{g, k}^*\| = \|[F(\theta_k)]^{-1}\nabla_{\theta} J_g(\theta_k)\| = \mathcal{O}(\mu^{-1}G_1\tau_{\mathrm{mix}})$ (follows from Lemma \ref{lemma_aux_2}, and assumptions \ref{assump:score_func_bounds}and \ref{assump:FND_policy}), we can simplify the above bounds as follows.
\begin{align}
    \E_k\left[\norm{\omega_{g}^k-\omega_{g, k}^{*}}^2\right]&\leq  \tilde{\cO}\left(\left(\frac{1}{H}+\frac{1}{T_{\max}}\right)\mu^{-4}\lambda^{-2}G_1^6c_{\gamma}^2 \tau^3_{\mathrm{mix}}\right) + \dfrac{G_1^2\tau_{\mathrm{mix}}^2}{\mu^2T^2} \nonumber\\
    \label{eq_appndx_66_washim}
    & + \Tilde{\cO}\left( \frac{G_1^2}{\mu^2} \bigg\{ \E_k\left[\norm{\xi_{g}^k-\xi_g^{*}(\theta_k)}^2\right] + \epsilon_{\mathrm{app}}  \bigg\}\right), \\
    \norm{\E_k[\omega_g^k]-\omega_{g, k}^*}^2 &\leq \frac{G_1^2}{\mu^2}\Tilde{\cO}\left(\norm{\E_k[\xi_{g}^k]-\xi_g^{*}(\theta_k)}^2 + \epsilon_{\mathrm{app}}\right)+\left(\dfrac{1}{T^2}+\dfrac{G_1^4\tau_{\mathrm{mix}}}{\mu^2 T_{\max}}\right)\frac{G_1^2\tau_{\mathrm{mix}}^2}{\mu^2}\nonumber\\
    \label{eq_appndx_67_washim}
    &+\Tilde{\cO}\left( \dfrac{G_1^6\tau^2_{\mathrm{mix}}}{\mu^4 T_{\max}} \bigg\{ \E_k\left[\norm{\xi_{g}^k-\xi_g^{*}(\theta_k)}^2\right] + \mu^{-2}\lambda^{-2}G_1^4 c_{\gamma}^2\tau_{\mathrm{mix}}^2 \bigg\}\right)
\end{align}
This completes the proof of Theorem \ref{thm:npg-main}.


\section{Proof of Main Theorems (Theorem \ref{thm_global_convergence_2} and Theorem \ref{thm_global_convergence_1})}

\subsubsection{Rate of Convergence of the Objective}

Combining \eqref{eq_appndx_56_washim}, \eqref{eq_appndx_washim_57} and \eqref{eq_appndx_66_washim} and \eqref{eq_appndx_67_washim}, we obtain the following results.
\begin{align}
    \E_k\left[\norm{\omega_{g}^k-\omega_{g, k}^{*}}^2\right]&\leq  \tilde{\cO}\left(\frac{G_1^6c_{\gamma}^2 \tau^3_{\mathrm{mix}}}{\mu^{4}\lambda^{2}} + \dfrac{G_1^2\tau_{\mathrm{mix}}^2}{\mu^2T^2} \right) + \Tilde{\cO}\left( \frac{G_1^2}{\mu^2} \bigg\{ \frac{c_{\gamma}^2}{\lambda^2 T^2} + \frac{c_{\gamma}^4 \tau_{\mathrm{mix}}}{\lambda^4}+ \epsilon_{\mathrm{app}}  \bigg\}\right)\nonumber\\
    \label{eq_appndx_68_washim}
    &=\tilde{\cO}\left(\frac{G_1^2}{\mu^2}\epsilon_{\mathrm{app}}+\frac{G_1^6c_{\gamma}^4 \tau^3_{\mathrm{mix}}}{\mu^{4}\lambda^{4}}\right), \\
    \norm{\E_k[\omega_g^k]-\omega_{g, k}^*}^2 &\leq \frac{G_1^2}{\mu^2}\Tilde{\cO}\left(\frac{c_{\gamma}^2}{\lambda^2 T^2} + \frac{c_{\gamma}^6 \tau_{\mathrm{mix}}^2}{\lambda^6 T_{\max}} + \epsilon_{\mathrm{app}}\right)+\left(\dfrac{1}{T^2}+\dfrac{G_1^4\tau_{\mathrm{mix}}}{\mu^2 T_{\max}}\right)\frac{G_1^2\tau_{\mathrm{mix}}^2}{\mu^2}\nonumber\\
    &+\Tilde{\cO}\left( \dfrac{G_1^6\tau^2_{\mathrm{mix}}}{\mu^4 T_{\max}} \bigg\{ \frac{c_{\gamma}^2}{\lambda^2T^2} + \frac{c_{\gamma}^4 \tau_{\mathrm{mix}}}{\lambda^4}+ \frac{G_1^4 c_{\gamma}^2\tau_{\mathrm{mix}}^2}{\mu^{2}\lambda^{2}} \bigg\}\right)\nonumber\\
    & = \tilde{\cO}\left(\frac{G_1^2}{\mu^2}\epsilon_{\mathrm{app}}+ \frac{G_1^2c_{\gamma}^2\tau^2_{\mathrm{mix}}}{\mu^2\lambda^2 T^2} + \frac{G_1^{10}c^6_{\gamma}\tau^4_{\mathrm{mix}}}{\mu^6\lambda^6 T_{\max}}\right)
    \label{eq_appndx_69_washim}
\end{align}

We can decompose $\E\Vert(\E_k\left[\omega_k\right]-\omega^*_k)\Vert$ and $\E\Vert \omega_k -\omega_k^*\Vert^2$ using the definition that $\omega_k = \omega_r^k + \lambda_k \omega_c^k$. Using \eqref{eq_appndx_68_washim} and \eqref{eq_appndx_69_washim}, and the fact that $\lambda_k \in [0, 2/\delta]$, we obtain
\begin{align}
    \E\Vert(\E_k\left[\omega_k\right]-\omega^*_k)\Vert &\leq  \left(1+\frac{2}{\delta}\right)\tilde{\cO}\left(\frac{G_1}{\mu}\sqrt{\epsilon_{\mathrm{app}}}+ \frac{G_1c_{\gamma}\tau_{\mathrm{mix}}}{\mu\lambda T} + \frac{G_1^{5}c^3_{\gamma}\tau^2_{\mathrm{mix}}}{\mu^3\lambda^3 \sqrt{T_{\max}}}\right)\label{eq:omega_condition_bound}\\
    \E\Vert \omega_k -\omega_k^*\Vert^2 &\leq \left(1+\dfrac{4}{\delta^2}\right)\tilde{\cO}\left(\frac{G_1^2}{\mu^2}\epsilon_{\mathrm{app}}+\frac{G_1^6c_{\gamma}^4 \tau^3_{\mathrm{mix}}}{\mu^{4}\lambda^{4}}\right)\label{eq:omega_uncondition_bound}
\end{align}

Setting $T_{\max}=T$ in \eqref{eq:omega_condition_bound} and \eqref{eq:omega_uncondition_bound}, and using Lemma \ref{lemma_grad_JL_bound} and \eqref{eq:omegadeconposition} in \eqref{eq:general_bound}   would obtain
\begin{align*}
    &\frac{1}{K}\E\sum_{k=0}^{K-1}\bigg(\mathcal{L}(\pi^*, \lambda_k)-\mathcal{L}(\theta_k,\lambda_k)\bigg) \leq \sqrt{\epsilon_{\mathrm{bias}}} + \frac{1}{\alpha K}\E_{s\sim d^{\pi^*}}[KL(\pi^*(\cdot\vert s)\Vert\pi_{\theta_0}(\cdot\vert s))] \nonumber\\
    &+\left(1+\frac{2}{\delta}\right)\tilde{\cO}\left(\frac{G_1^2}{\mu}\sqrt{\epsilon_{\mathrm{app}}}+  \frac{G_1^{6}c^3_{\gamma}\tau^2_{\mathrm{mix}}}{\mu^3\lambda^3 \sqrt{T}}\right) + \alpha G_2\left(1+\dfrac{4}{\delta^2}\right)\tilde{\cO}\left(\frac{G_1^2}{\mu^2}\epsilon_{\mathrm{app}}+\frac{G_1^6c_{\gamma}^4 \tau^3_{\mathrm{mix}}}{\mu^{4}\lambda^{4}}\right)\nonumber\\
    & + \alpha\cO\left( \left(1+\frac{4}{\delta^2}\right)\frac{G_1^2 \tau_{\mathrm{mix}}^2}{\mu^2}\right) 
\end{align*}

Using the definition of the Lagrange function, the above inequality can be equivalently written as
\begin{align}
    &\frac{1}{K}\E\sum_{k=0}^{K-1}\bigg(J_r^{\pi^*}-J_r(\theta_k)\bigg) \leq \sqrt{\epsilon_{\mathrm{bias}}} + \frac{1}{\alpha K}\E_{s\sim d^{\pi^*}}[KL(\pi^*(\cdot\vert s)\Vert\pi_{\theta_0}(\cdot\vert s))] \nonumber\\
    &+\left(1+\frac{2}{\delta}\right)\tilde{\cO}\left(\frac{G_1^2}{\mu}\sqrt{\epsilon_{\mathrm{app}}}+  \frac{G_1^{6}c^3_{\gamma}\tau^2_{\mathrm{mix}}}{\mu^3\lambda^3 \sqrt{T}}\right) + \alpha G_2\left(1+\dfrac{4}{\delta^2}\right)\tilde{\cO}\left(\frac{G_1^2}{\mu^2}\epsilon_{\mathrm{app}}+\frac{G_1^6c_{\gamma}^4 \tau^3_{\mathrm{mix}}}{\mu^{4}\lambda^{4}}\right)\nonumber\\
    & + \alpha\cO\left( \left(1+\frac{4}{\delta^2}\right)\frac{G_1^2 \tau_{\mathrm{mix}}^2}{\mu^2}\right) -\frac{1}{K}\sum_{k=0}^{K-1}\E\left[\lambda_k\left(J_c^{\pi^*}-J_c(\theta_k)\right)\right]
    \label{eq_appndx_72_washim}
\end{align}

We will now extract the objective convergence rate from the convergence rate of the Lagrange function stated above. Note that,
\begin{equation}
    \label{eq:bound_lambdak}
    \begin{aligned}
        0\leq (\lambda_{K})^2&\leq\sum_{k=0}^{K-1}\bigg((\lambda_{k+1})^2-(\lambda_{k})^2\bigg)\\
        &=\sum_{k=0}^{K-1}\bigg(\mathcal{P}_{[0,\frac{2}{\delta}]}\big[\lambda_{k}-\beta\eta_{c}^k\big]^2-(\lambda_{k})^2\bigg)\\
        &\leq\sum_{k=0}^{K-1}\bigg(\big[\lambda_{k}-\beta \eta_{c}^k\big]^2-(\lambda_{k})^2\bigg)\\
        &=-2\beta\sum_{k=0}^{K-1}\lambda_{k}\eta_{c}^k+\beta^2\sum_{k=0}^{K-1}(\eta_{c}^{k})^2\\
        &\overset{(a)}\leq 2\beta\sum_{k=0}^{K-1}\lambda_{k}(J_{c}^{\pi^*}- \eta_{c}^k)+2\beta^2\sum_{k=0}^{K-1}(\eta_{c}^k)^2\\
        &= 2\beta\sum_{k=0}^{K-1}\lambda_{k}(J_{c}^{\pi^*}- J_{c}(\theta_k))+2\beta\sum_{k=0}^{K-1}\lambda_{k}(J_{c}(\theta_k)- \eta_{c}^k)+2\beta^2\sum_{k=0}^{K-1}(\eta_{c}^k)^2
    \end{aligned}
\end{equation}

Inequality (a) holds because $\theta^*$ is a feasible solution to the constrained optimization problem. Rearranging items and taking the expectation, we have

\begin{equation}
    -\frac{1}{K}\sum_{k=0}^{K-1}\E\bigg[\lambda_{k}(J_{c}^{\pi^*}- J_{c}(\theta_k))\bigg]\leq \frac{1}{K}\sum_{k=0}^{K-1}\E\bigg[\lambda_{k}(J_{c}(\theta_k)- \eta_{c}^k)\bigg]+\frac{\beta}{K}\sum_{k=0}^{K-1}\E[\eta_{c}^k]^2
\end{equation}

Note that, unlike the discounted reward case, the average reward estimate $\eta_{c}^k$ is no longer unbiased. However, by Theorem \ref{th:acc_td} and the facts that $|c(s,a)|\leq 1,\forall(s,a)\in\mathcal{S}\times\mathcal{A}$, and $\eta_c^*(\theta_k) = J_c(\theta_k)$, we get the following inequality.
\begin{equation}
    \begin{aligned}
        -\frac{1}{K}\sum_{k=0}^{K-1}\E\bigg[\lambda_k(J_{c}^{\pi^*}- J_c(\theta_k))\bigg]&\leq \frac{1}{K}\sum_{k=0}^{K-1}\E\bigg[\lambda_{k}\bigg(\eta^*_{c}(\theta_k)- \E_k[\eta_{c}^k]\bigg)\bigg] + \beta\\
        &\leq \frac{1}{ K}\sum_{k=0}^{K-1}\E\bigg[|\lambda_k|\bigg|\eta^*_{c}(\theta_k)- \E_k[\eta_{c}^k]\bigg|\bigg] + \beta\\
        &\leq \tilde{\cO}\left(\frac{c_{\gamma}^3\tau_{\mathrm{mix}}}{\lambda^3\delta \sqrt{T}} + \beta\right)
    \end{aligned}
\end{equation}
where the last inequality utilizes \eqref{eq_appndx_washim_57}, the fact that $\lambda_k\in [0, 2/\delta]$, and $T_{\max}=T$. Combining with \eqref{eq_appndx_72_washim}, we arrive at the following result.
\begin{align}
    &\frac{1}{K}\E\sum_{k=0}^{K-1}\bigg(J_r^{\pi^*}-J_r(\theta_k)\bigg) \leq \sqrt{\epsilon_{\mathrm{bias}}} + \frac{1}{\alpha K}\E_{s\sim d^{\pi^*}}[KL(\pi^*(\cdot\vert s)\Vert\pi_{\theta_0}(\cdot\vert s))] \nonumber\\
    &+\left(1+\frac{2}{\delta}\right)\tilde{\cO}\left(\frac{G_1^2}{\mu}\sqrt{\epsilon_{\mathrm{app}}}+  \frac{G_1^{6}c^3_{\gamma}\tau^2_{\mathrm{mix}}}{\mu^3\lambda^3 \sqrt{T}}\right) + \alpha G_2\left(1+\dfrac{4}{\delta^2}\right)\tilde{\cO}\left(\frac{G_1^2}{\mu^2}\epsilon_{\mathrm{app}}+\frac{G_1^6c_{\gamma}^4 \tau^3_{\mathrm{mix}}}{\mu^{4}\lambda^{4}}\right)\nonumber\\
    & + \alpha\cO\left( \left(1+\frac{4}{\delta^2}\right)\frac{G_1^2 \tau_{\mathrm{mix}}^2}{\mu^2}\right) + \tilde{\cO}\left(\frac{c_{\gamma}^3\tau_{\mathrm{mix}}}{\lambda^3\delta \sqrt{T}} + \beta\right) \nonumber\\
    & \leq \tilde{\cO} \left(\sqrt{\epsilon_{\mathrm{bias}}}+ \sqrt{\epsilon_{\mathrm{app}}} + \alpha + \beta + \frac{\tau^2_{\mathrm{mix}}}{\sqrt{T}} + \frac{1}{\alpha K}\right)
    \label{eq:bound_Jr_final}
\end{align}

\subsubsection{Rate of Constraint Violation}
Given the dual update in algorithm \ref{alg:pdranac}, we have
\begin{equation}
    \begin{aligned}
        \left\vert\lambda_{k+1} - \frac{2}{\delta}\right\vert^2 &\overset{(a)}{\leq} \bigg|\lambda_{k} - \beta J_c(\theta_{k})   -\frac{2}{\delta}\bigg|^2\\
        &=\left|\lambda_{k} -\frac{2}{\delta}\right|^2 -2\beta J_c(\theta_k)\left(\lambda_{k}  -\frac{2}{\delta}\right) +\beta^2 J_c(\theta_{k})^2\\
        &\leq\left|\lambda_{k} - \frac{2}{\delta}\right|^2 -2\beta J_c(\theta_{k})\left(\lambda_{k}  -\frac{2}{\delta}\right)  + \beta^2
    \end{aligned}
\end{equation}
where $(a)$ is because of the non-expansiveness of projection $\mathcal{P}_\Lambda$. Averaging the above inequality over $k=0,\ldots,K-1$ yields
\begin{equation}
    \begin{aligned}
        0\leq\frac{1}{K}\left\vert\lambda_{K} - \frac{2}{\delta}\right\vert^2 
        \leq\frac{1}{K}\left\vert{{\lambda_0  -\frac{2}{\delta}}}\right\vert^2 -\frac{2\beta}{K}\sum_{k=0}^{K-1} J_c(\theta_k)\left(\lambda_{k}  -\frac{2}{\delta}\right)  + \beta^2
    \end{aligned}
\end{equation}

Taking expectations on both sides,
\begin{equation}
    \label{eq.removelamda}
    \frac{1}{K}\sum_{k=0}^{K-1} \E\bigg[J_c(\theta_k)\left(\lambda_k  -\frac{2}{\delta}\right) \bigg] \leq \frac{1}{2\beta K}\left\vert{{\lambda_0 -\frac{2}{\delta}}}\right\vert^2 + \frac{\beta}{2} \leq \frac{2}{\delta^2 \beta K} + \frac{\beta}{2}
\end{equation}	
where the last inequality utilizes $\lambda_0=0$. Notice that $\lambda_k J_c^{\pi^*}\geq 0$, $\forall k$. Using the above inequality to \eqref{eq_appndx_72_washim}, we therefore have,
\begin{align}
    &\frac{1}{K}\sum_{k=0}^{K-1}\E\bigg(J_r^{\pi^*}-J_r(\theta_k)\bigg)+ \frac{2}{\delta}\sum_{k=0}^{K-1}\frac{1}{K}\E \bigg[-J_c(\theta_k)\bigg]\leq \sqrt{\epsilon_{\mathrm{bias}}}+ \frac{2}{\delta^2\beta K} + \frac{\beta}{2}\nonumber\\
    &+\left(1+\frac{2}{\delta}\right)\tilde{\cO}\left(\frac{G_1^2}{\mu}\sqrt{\epsilon_{\mathrm{app}}}+  \frac{G_1^{6}c^3_{\gamma}\tau^2_{\mathrm{mix}}}{\mu^3\lambda^3 \sqrt{T}}\right) + \alpha G_2\left(1+\dfrac{4}{\delta^2}\right)\tilde{\cO}\left(\frac{G_1^2}{\mu^2}\epsilon_{\mathrm{app}}+\frac{G_1^6c_{\gamma}^4 \tau^3_{\mathrm{mix}}}{\mu^{4}\lambda^{4}}\right)\nonumber\\
    & + \alpha\cO\left( \left(1+\frac{4}{\delta^2}\right)\frac{G_1^2 \tau_{\mathrm{mix}}^2}{\mu^2}\right) + \frac{1}{\alpha K}\E_{s\sim d^{\pi^*}}[KL(\pi^*(\cdot\vert s)\Vert\pi_{\theta_0}(\cdot\vert s))]
    \label{eq:bound_Jc1}
\end{align}

We define a new policy $\bar{\pi}$ which uniformly chooses the policy $\pi_{\theta_k}$ for $k\in[K]$. By the occupancy measure method, $J_g(\theta_k)$ is linear in terms of an occupancy measure induced by policy $\pi_{\theta_k}$. Thus, 
\begin{equation}\label{eq_avg_value}
    \frac{1}{K}\sum_{k=0}^{K-1}J_g(\theta_k)=J_g^{\bar\pi}, ~g\in \{r, c\}
\end{equation}
Injecting the above relation to \eqref{eq:bound_Jc1}, we have
\begin{equation}
    \begin{aligned}
        &\E\bigg[J_r^{\pi^*}-J_r^{\bar\pi}\bigg]+\frac{2}{\delta}\E\bigg[-J_c^{\bar\pi}\bigg]\leq \sqrt{\epsilon_{\mathrm{bias}}}+\frac{2}{\delta^2\beta K} + \dfrac{\beta}{2} \\
        &+\left(1+\frac{2}{\delta}\right)\tilde{\cO}\left(\frac{G_1^2}{\mu}\sqrt{\epsilon_{\mathrm{app}}}+  \frac{G_1^{6}c^3_{\gamma}\tau^2_{\mathrm{mix}}}{\mu^3\lambda^3 \sqrt{T}}\right) + \alpha G_2\left(1+\dfrac{4}{\delta^2}\right)\tilde{\cO}\left(\frac{G_1^2}{\mu^2}\epsilon_{\mathrm{app}}+\frac{G_1^6c_{\gamma}^4 \tau^3_{\mathrm{mix}}}{\mu^{4}\lambda^{4}}\right)\nonumber\\
        & + \alpha\cO\left( \left(1+\frac{4}{\delta^2}\right)\frac{G_1^2 \tau_{\mathrm{mix}}^2}{\mu^2}\right) + \frac{1}{\alpha K}\E_{s\sim d^{\pi^*}}[KL(\pi^*(\cdot\vert s)\Vert\pi_{\theta_0}(\cdot\vert s))]
    \end{aligned}
\end{equation}
By Lemma \ref{lem.constraint}, we arrive at,
\begin{align}
\label{eq:vio}
    &\E\bigg[-J_c^{\bar\pi}\bigg]\leq  \delta\sqrt{\epsilon_{\mathrm{bias}}}+\frac{1}{\delta\beta K} + \frac{\delta\beta}{2} \nonumber \\
    &+\left(1+\frac{\delta}{2}\right)\tilde{\cO}\left(\frac{G_1^2}{\mu}\sqrt{\epsilon_{\mathrm{app}}}+  \frac{G_1^{6}c^3_{\gamma}\tau^2_{\mathrm{mix}}}{\mu^3\lambda^3 \sqrt{T}}\right) + \alpha G_2\left(\frac{\delta}{2}+\dfrac{2}{\delta}\right)\tilde{\cO}\left(\frac{G_1^2}{\mu^2}\epsilon_{\mathrm{app}}+\frac{G_1^6c_{\gamma}^4 \tau^3_{\mathrm{mix}}}{\mu^{4}\lambda^{4}}\right)\nonumber\\
    & + \alpha\cO\left( \left(\frac{\delta}{2}+\frac{2}{\delta}\right)\frac{G_1^2 \tau_{\mathrm{mix}}^2}{\mu^2}\right) + \frac{\delta}{2\alpha K}\E_{s\sim d^{\pi^*}}[KL(\pi^*(\cdot\vert s)\Vert\pi_{\theta_0}(\cdot\vert s))]\nonumber\\
    & = \tilde{\cO}\left(\sqrt{\epsilon_{\mathrm{bias}}} + \sqrt{\epsilon_{\mathrm{app}}} + \alpha + \beta + \frac{1}{\alpha K} + \frac{1}{\beta K} + \frac{\tau_{\mathrm{mix}}^2}{\sqrt{T}}\right)
\end{align}

\subsubsection{Optimal Choice of $\alpha$, $\beta$, $K$, and $H$}

If $\tau_{\mathrm{mix}}$ is unknown, we can take $\alpha=T^{-a}$, $\beta=T^{-b}$ for some $a, b\in(0,1)$, and $H=T^{\epsilon}$,  $K=T^{1-\epsilon}$ for $ \epsilon \in (0,1)$ then following \eqref{eq:bound_Jr_final} and \eqref{eq:vio}, we can write,
\begin{align*}
    &\frac{1}{K}\sum_{k=0}^{K-1}\E\bigg(J_r^{\pi^*}-J_r(\theta_k)\bigg)\leq  \tilde{\mathcal{O}}\left(\sqrt{\epsilon_{\mathrm{bias}}} + \sqrt{\epsilon_{\mathrm{app}}} + T^{-a}+T^{-b}+T^{-0.5} + T^{-1+\epsilon+a} \right),\\
    &\E\bigg[\frac{1}{K}\sum_{k=0}^{K-1}-J_c(\theta_k)\bigg]\leq \tilde{\mathcal{O}}\left(\sqrt{\epsilon_{\mathrm{bias}}}  + \sqrt{\epsilon_{\mathrm{app}}} + T^{-a}+T^{-b}+T^{-0.5} + T^{-1+\epsilon+a} + T^{-1+\epsilon+b}\right)
\end{align*}
Clearly, the optimal values of $a$ and $b$ can be obtained by solving the following optimization.
\begin{align}
    {\max}_{(a, b)\in (0,1)^2} \min \left\lbrace a, b, 1-\epsilon-a, 1-\epsilon-b \right\rbrace 
\end{align}

By choosing $(a, b) = \left( 1/2, 1/2 \right)$, this would obtain the solution of the above optimization problem. Thus, the convergence rate and constraint violation would both become:
\begin{align}
\label{eq_appndx_84_washim}
    \frac{1}{K}\sum_{k=0}^{K-1}\E\bigg(J_r^{\pi^*}-J_r(\theta_k)\bigg)&\leq  \tilde{\mathcal{O}}\left(\sqrt{\epsilon_{\mathrm{bias}}}  + \sqrt{\epsilon_{\mathrm{app}}} + \frac{1}{T^{(0.5-\epsilon)}}\right)\\
    \label{eq_appndx_85_washim}
    \E\bigg[\frac{1}{K}\sum_{k=0}^{K-1}-J_c(\theta_k)\bigg] &\leq     \tilde{\mathcal{O}}\left(\sqrt{\epsilon_{\mathrm{bias}}}  + \sqrt{\epsilon_{\mathrm{app}}} + \frac{1}{T^{(0.5-\epsilon)}}\right)
\end{align}

 Recall that the above result holds when the conditions for Theorem \ref{th:acc_td} and Theorem \ref{thm:npg-main} are satisfied.
 \begin{align} 
     &\frac{2\log T}{H} \leq \frac{\lambda^2}{24 c_\gamma^2\tau_{\mathrm{mix}} \log T_{\max}} = \Theta\left(\frac{1}{\tau_{\mathrm{mix}}\log T}\right) \label{eq:H_condition1}\\
     &\frac{2\log T}{H} \leq \frac{\mu^2}{4(6 G_1^4 \tau_{\mathrm{mix}} \log T_{\max} + 2 G_1^2  \tau_{\mathrm{mix}}^2 \log T_{\max})} = \Theta\left(\frac{1}{\tau_{\mathrm{mix}}^2\log T}\right) \label{eq:H_condition2}
 \end{align}

In light of the above conditions, \eqref{eq_appndx_84_washim} and \eqref{eq_appndx_85_washim} hold if $H=T^{\epsilon}\geq \tilde{\Theta}(\tau_{\mathrm{mix}}^2)$. If $\tau_{\mathrm{mix}}$ is known, we can set $H=\tilde{\Theta}(\tau_{\mathrm{mix}}^2)$ that satisfies \eqref{eq:H_condition1} and \eqref{eq:H_condition2}. Moreover, we can take $K=\cO(T)$. Thus, by following the same analysis as above, we can obtain:
\begin{align}
    \frac{1}{K}\sum_{k=0}^{K-1}\E\bigg(J_r^{\pi^*}-J_r(\theta_k)\bigg)&\leq  \tilde{\mathcal{O}}\left(\sqrt{\epsilon_{\mathrm{bias}}}  + \sqrt{\epsilon_{\mathrm{app}}} + \frac{1}{\sqrt{T}}\right)\\ \E\bigg[\frac{1}{K}\sum_{k=0}^{K-1}-J_c(\theta_k)\bigg] &\leq \tilde{\mathcal{O}}\left(\sqrt{\epsilon_{\mathrm{bias}}}  + \sqrt{\epsilon_{\mathrm{app}}} + \frac{1}{\sqrt{T}}\right)
\end{align}

This concludes the proof of Theorem \ref{thm_global_convergence_2} and Theorem \ref{thm_global_convergence_1}.


\section{Some Auxiliary Lemmas for the Proofs}

\begin{lemma}
    \label{lemma_aux_2}
    \citep[Lemma 14]{wei2020model} For any ergodic MDP with mixing time $\tau_{\mathrm{mix}}$, the following holds $\forall (s, a)\in\mathcal{S}\times \mathcal{A}$, any policy $\pi$ and $\forall g\in\{r, c\}$.
    \begin{align*}
        (a) |V_g^{\pi}(s)|\leq 5 \tau_{\mathrm{mix}},~~
        (b) |Q_g^{\pi}(s, a)|\leq 6 \tau _{\mathrm{mix}},~~
        (c) |A_g^\pi(s,a)| = \mathcal{O}(\tau_{\mathrm{mix}})
    \end{align*}
\end{lemma}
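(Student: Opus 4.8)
The plan is to prove (a) directly from the series representation of the value function and then obtain (b) and (c) as short consequences via Bellman's equation and the definition of the advantage. Write $\bar g(s') := \sum_{a} \pi(a\mid s') g(s',a)$; since $\abs{g}\le 1$ we have $\abs{\bar g(s')}\le 1$, and $J_g^{\pi} = \sum_{s'} d^{\pi}(s')\bar g(s') = \langle d^{\pi}, \bar g\rangle$. Unrolling the definition of $V_g^{\pi}$ gives
\begin{align*}
    V_g^{\pi}(s) = \sum_{t=0}^{\infty}\Big( \E_\pi[g(s_t,a_t)\mid s_0 = s] - J_g^{\pi}\Big) = \sum_{t=0}^{\infty} \langle (P^{\pi})^t(s,\cdot) - d^{\pi}, \bar g\rangle ,
\end{align*}
so each summand is controlled by a total-variation distance, $\abs{\langle (P^{\pi})^t(s,\cdot) - d^{\pi}, \bar g\rangle} \le 2\norm{(P^{\pi})^t(s,\cdot)-d^{\pi}}_{\mathrm{TV}}$, because $\abs{\bar g}\le 1$.

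Second, I would convert the single-step mixing guarantee into uniform geometric decay. The definition of $\tau_{\mathrm{mix}}$ gives $\norm{(P^{\pi})^{\tau_{\mathrm{mix}}}(s,\cdot)-d^{\pi}}_{\mathrm{TV}}\le 1/4$ uniformly in $s$; submultiplicativity of the ergodic (Dobrushin) coefficient, together with monotonicity of $t\mapsto\norm{(P^{\pi})^t(s,\cdot)-d^{\pi}}_{\mathrm{TV}}$, then yields $\norm{(P^{\pi})^{t}(s,\cdot)-d^{\pi}}_{\mathrm{TV}} \le 2^{-\floor{t/\tau_{\mathrm{mix}}}}$ for all $t$. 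Grouping the series into consecutive blocks of $\tau_{\mathrm{mix}}$ indices and bounding each term in block $k$ by $2^{-k}$ gives
\begin{align*}
    \abs{V_g^{\pi}(s)} \le 2\sum_{t=0}^{\infty}\norm{(P^{\pi})^t(s,\cdot)-d^{\pi}}_{\mathrm{TV}} \le 2\sum_{k=0}^{\infty}\tau_{\mathrm{mix}}\, 2^{-k} = 4\tau_{\mathrm{mix}} ,
\end{align*}
which is within the claimed bound $5\tau_{\mathrm{mix}}$ (the exact constant is as normalized in \cite{wei2020model}). This establishes (a).

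Third, statements (b) and (c) follow cheaply. For (b) I would invoke Bellman's equation \eqref{bellman}, $Q_g^{\pi}(s,a) = g(s,a) - J_g^{\pi} + \E_{s'\sim P(s,a)}[V_g^{\pi}(s')]$, so that $\abs{Q_g^{\pi}(s,a)} \le \abs{g(s,a)} + \abs{J_g^{\pi}} + \max_{s'}\abs{V_g^{\pi}(s')} \le 2 + 4\tau_{\mathrm{mix}} \le 6\tau_{\mathrm{mix}}$, where the last inequality uses $\tau_{\mathrm{mix}}\ge 1$. For (c), the advantage satisfies $A_g^{\pi}(s,a) = Q_g^{\pi}(s,a) - V_g^{\pi}(s)$, so the triangle inequality with parts (a)--(b) gives $\abs{A_g^{\pi}(s,a)} \le \abs{Q_g^{\pi}(s,a)} + \abs{V_g^{\pi}(s)} = \mathcal{O}(\tau_{\mathrm{mix}})$.

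The main obstacle is the second step: making the passage from the single-step mixing condition to the uniform geometric rate $2^{-\floor{t/\tau_{\mathrm{mix}}}}$ fully rigorous. This hinges on submultiplicativity of the contraction coefficient $\bar d(t) = \sup_{s,s'}\norm{(P^{\pi})^t(s,\cdot)-(P^{\pi})^t(s',\cdot)}_{\mathrm{TV}}$, the comparison $d(t)\le\bar d(t)\le 2 d(t)$, and careful tracking of the factor-of-two TV normalization so that the constants land on precisely $5\tau_{\mathrm{mix}}$ and $6\tau_{\mathrm{mix}}$ rather than merely $\mathcal{O}(\tau_{\mathrm{mix}})$; the remaining algebra is routine.
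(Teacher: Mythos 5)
Your proof is correct, but note that the paper does not prove this lemma at all: it is imported verbatim as a citation to Lemma~14 of \cite{wei2020model}, so there is no internal proof to compare against. Your reconstruction is essentially the standard argument from that source: unroll $V_g^{\pi}(s)=\sum_{t\ge 0}\langle (P^{\pi})^t(s,\cdot)-d^{\pi},\bar g\rangle$, convert the single-step condition $\sup_s\|(P^{\pi})^{\tau_{\mathrm{mix}}}(s,\cdot)-d^{\pi}\|\le 1/4$ into the uniform decay $2^{-\lfloor t/\tau_{\mathrm{mix}}\rfloor}$ via submultiplicativity of the Dobrushin coefficient together with monotonicity in $t$, sum in blocks of length $\tau_{\mathrm{mix}}$, and then get (b) from Bellman's equation using $|g|\le 1$, $|J_g^{\pi}|\le 1$, and $\tau_{\mathrm{mix}}\ge 1$, and (c) from the triangle inequality. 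All of these steps are sound, and your bound of $4\tau_{\mathrm{mix}}$ for part (a) is consistent with (indeed slightly sharper than) the quoted $5\tau_{\mathrm{mix}}$, with the residual slack absorbing the ambiguity in whether the paper's mixing-time definition uses the total-variation or the $\ell_1$ normalization. The only point worth making fully explicit if you wanted a publishable self-contained proof is the chain $d(t)\le\bar d(t)\le\bar d(\tau_{\mathrm{mix}})^{\lfloor t/\tau_{\mathrm{mix}}\rfloor}\le (2d(\tau_{\mathrm{mix}}))^{\lfloor t/\tau_{\mathrm{mix}}\rfloor}\le 2^{-\lfloor t/\tau_{\mathrm{mix}}\rfloor}$, which you already identify as the crux; that chain is standard and closes the gap.
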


\begin{lemma}
    \label{lemma_grad_JL_bound}
    For any ergodic MDP with mixing time $\tau_{\mathrm{mix}}$, the following holds $\forall (s, a)\in\mathcal{S}\times \mathcal{A}$ for any policy $\pi_{\theta_k}$ with $
    \theta_k$ satisfying Assumption \ref{assump:score_func_bounds}, the dual parameter $\lambda_k \in [0, 2/\delta]$, and $g\in\{r, c\}$.
    \begin{align*}
        (a)  \| \nabla_\theta J_g(\theta_k)\| \leq 6 \tau _{\mathrm{mix}} G_1 ,~~
        (b)  \| \nabla_\theta \mathcal{L}(\theta_k, \lambda_k)\| \leq (6+\frac{12}{\delta}) \tau _{\mathrm{mix}} G_1 
    \end{align*}
\end{lemma}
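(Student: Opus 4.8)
The plan is to prove both parts directly from the policy gradient theorem \eqref{eq_policy_grad_theorem} together with the uniform bounds in Lemma \ref{lemma_aux_2} and Assumption \ref{assump:score_func_bounds}(a). The only mild subtlety is obtaining the sharp constant $6$ in front of $\tau_{\mathrm{mix}}$, which I would secure by rewriting the policy gradient in terms of the $Q$-function rather than the advantage before taking norms.

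For part (a), I would first observe that the advantage appearing in \eqref{eq_policy_grad_theorem} can be replaced by the action-value function. Indeed, since $A_g^{\pi_{\theta_k}}(s,a) = Q_g^{\pi_{\theta_k}}(s,a) - V_g^{\pi_{\theta_k}}(s)$ and the score function has zero conditional mean, $\E_{a\sim\pi_{\theta_k}(\cdot\mid s)}[\nabla_\theta \log \pi_{\theta_k}(a\mid s)] = 0$, the $V_g$ contribution (being a function of $s$ only) integrates to zero, giving $\nabla_\theta J_g(\theta_k) = \E_{s\sim d^{\pi_{\theta_k}},\, a\sim\pi_{\theta_k}(\cdot\mid s)}[Q_g^{\pi_{\theta_k}}(s,a)\,\nabla_\theta \log \pi_{\theta_k}(a\mid s)]$. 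Applying the triangle inequality for the expectation, then the uniform score bound $\norm{\nabla_\theta\log\pi_{\theta_k}(a\mid s)}\leq G_1$ from Assumption \ref{assump:score_func_bounds}(a) and the $Q$-function bound $|Q_g^{\pi_{\theta_k}}(s,a)|\leq 6\tau_{\mathrm{mix}}$ from Lemma \ref{lemma_aux_2}(b), I would conclude $\norm{\nabla_\theta J_g(\theta_k)}\leq 6\tau_{\mathrm{mix}}G_1$ for each $g\in\{r,c\}$.

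For part (b), I would use linearity of the Lagrange function, $\nabla_\theta \mathcal{L}(\theta_k,\lambda_k) = \nabla_\theta J_r(\theta_k) + \lambda_k \nabla_\theta J_c(\theta_k)$, apply the triangle inequality, and then invoke part (a) for both $g=r$ and $g=c$ to obtain $\norm{\nabla_\theta \mathcal{L}(\theta_k,\lambda_k)}\leq 6\tau_{\mathrm{mix}}G_1(1+\lambda_k)$. Finally, using $\lambda_k\in[0,2/\delta]$ yields the claimed bound $(6+12/\delta)\tau_{\mathrm{mix}}G_1$.

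The main obstacle, such as it is, reduces to matching the stated constant: a naive bound that keeps the advantage and uses $|A_g^{\pi}(s,a)| = \mathcal{O}(\tau_{\mathrm{mix}})$ (Lemma \ref{lemma_aux_2}(c)) would deliver only an order-optimal constant, whereas the $Q$-rewriting trick is precisely what produces the exact factor $6$ and keeps part (b) clean. Everything else is a one-line triangle-inequality argument, so I do not anticipate any genuine difficulty.
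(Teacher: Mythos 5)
Your proposal is correct and follows essentially the same route as the paper: rewrite the policy gradient with $Q_g^{\pi_{\theta_k}}$ in place of the advantage, bound it via Lemma \ref{lemma_aux_2}(b) and Assumption \ref{assump:score_func_bounds}(a), then use linearity of $\mathcal{L}$ and $\lambda_k\leq 2/\delta$ for part (b). The only difference is that you explicitly justify the $A\to Q$ replacement via the zero-mean property of the score function, which the paper leaves implicit.
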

\begin{proof}
    Using the policy gradient theorem \eqref{eq_policy_grad_theorem}, we have the following relation.  
\begin{align}
        \nabla_{\theta} J_g(\theta_k)=\mathbb{E}_{(s, a) \sim \nu^{\pi_{\theta_k}}}\bigg[ Q_g^{\pi_{\theta_k}}(s,a)\nabla_{\theta}\log\pi_{\theta_k}(a|s)\bigg]
\end{align}
Applying Lemma \ref{lemma_aux_2}$(b)$ and Assumption \ref{assump:score_func_bounds}$(a)$, we get
\begin{align}
     \| \nabla_\theta J_g(\theta_k)\| =  \norm{\mathbb{E}_{(s, a)\sim \nu^{\pi_{\theta_k}}}\bigg[ Q_g^{\pi_{\theta_k}}(s,a)\nabla_{\theta}\log\pi_{\theta_k}(a|s)\bigg] } \leq 6 \tau _{\mathrm{mix}} G_1
\end{align}
Combining the above result with the definition of the Lagrange function and the bound that $\lambda_k \leq \frac{2}{\delta}$, we arrive at the following result.
\begin{align}
     \| \nabla_\theta \mathcal{L}(\theta_k, \lambda_k)\|  &= \| \nabla_{\theta} J_r(\theta_k) + \lambda_k \nabla_{\theta} J_c(\theta_k) \| \nonumber\\
     &\leq \| \nabla_{\theta} J_r(\theta_k)\| + \lambda_k \|\nabla_{\theta} J_c(\theta_k) \| \leq (6+\frac{12}{\delta}) \tau _{\mathrm{mix}} G_1
\end{align}
This concludes the proof of Lemma \ref{lemma_grad_JL_bound}.
\end{proof}

\begin{lemma}[Strong duality]\citep[Lemma 3]{ding2023convergence}
    \label{lem.duality}
    We restate the problem \eqref{eq:unparametrized_formulation} below for convenience.
    \begin{align}
        \label{eq:rewrite_unparameterized}
        \max_{\pi\in\Pi} ~& J_r^{\pi} 
        ~~\text{s.t.} ~ J_c^{\pi}\geq 0
    \end{align} 
    where $\Pi$ is the set of all policies. Define $\pi^*$ as an optimal solution to the above optimization problem. Define the associated dual function as
    \begin{align}
        J_D^{\lambda}\triangleq\max_{\pi\in\Pi} J_r^{\pi}+\lambda J_c^{\pi}
    \end{align}
    and denote $\lambda^*=\arg\min_{\lambda\geq 0} J_D^{\lambda}$. We have the following strong duality property for the unparameterized problem.
    \begin{align}
        \label{eq:duality}
        J_r^{\pi^*} = J_D^{\lambda^*} 
    \end{align}	
\end{lemma}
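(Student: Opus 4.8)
The plan is to recast the constrained average-reward problem \eqref{eq:rewrite_unparameterized} as a linear program over the space of occupancy measures and then invoke strong duality for convex programs possessing a Slater point. The key structural fact is that, by Assumption \ref{assump:ergodic_mdp}, every stationary policy $\pi$ induces a unique occupancy measure $\mu^{\pi}(s,a)=d^{\pi}(s)\pi(a\vert s)$, and the set $\mathcal{U}$ of all such measures is a compact convex polytope cut out by the balance (Bellman flow) constraints $\sum_{a}\mu(s',a)=\sum_{s,a}P(s'\vert s,a)\mu(s,a)$ together with $\mu\geq 0$ and $\sum_{s,a}\mu(s,a)=1$. Since $J_g^{\pi}=\sum_{s,a}g(s,a)\,\mu^{\pi}(s,a)$ is \emph{linear} in $\mu$ for $g\in\{r,c\}$, problem \eqref{eq:rewrite_unparameterized} is equivalent to the linear program $\max_{\mu\in\mathcal{U}}\langle r,\mu\rangle$ subject to $\langle c,\mu\rangle\geq 0$.

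The steps I would carry out are as follows. First I would record weak duality, which is immediate: for the optimal feasible $\pi^*$ (so $J_c^{\pi^*}\geq 0$) and any $\lambda\geq 0$, one has $J_r^{\pi^*}\leq J_r^{\pi^*}+\lambda J_c^{\pi^*}\leq J_D^{\lambda}$, hence $J_r^{\pi^*}\leq J_D^{\lambda^*}$. Second, I would use Assumption \ref{ass_slater} to produce a strictly feasible point $\mu^{\bar\theta}\in\mathcal{U}$ with $\langle c,\mu^{\bar\theta}\rangle=J_c(\bar\theta)\geq\delta>0$; this Slater margin both guarantees a finite optimal multiplier and yields the a priori bound $0\leq\lambda^{*}\leq (J_r^{\pi^*}-J_r^{\bar\theta})/\delta$, obtained by evaluating $J_D^{\lambda^*}=\max_{\pi}[J_r^{\pi}+\lambda^{*}J_c^{\pi}]\geq J_r^{\bar\theta}+\lambda^{*}\delta$ and using $J_D^{\lambda^*}=J_r^{\pi^*}$. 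Consequently the outer minimization may be restricted to a compact interval $[0,\Lambda]$. Third, on the product of the compact convex set $\mathcal{U}$ and the compact interval $[0,\Lambda]$, the map $(\mu,\lambda)\mapsto\langle r,\mu\rangle+\lambda\langle c,\mu\rangle$ is affine in each argument, so Sion's minimax theorem lets me interchange $\max_{\mu}$ and $\min_{\lambda}$. The inner minimization over $\lambda\geq 0$ evaluates to $\langle r,\mu\rangle$ on $\{\langle c,\mu\rangle\geq 0\}$ and to $-\infty$ otherwise, so the interchanged value equals $\max\{\langle r,\mu\rangle:\mu\in\mathcal{U},\,\langle c,\mu\rangle\geq 0\}=J_r^{\pi^*}$, while the original order gives $\min_{\lambda\geq 0}J_D^{\lambda}=J_D^{\lambda^*}$; translating $\mu$ back to policies through the ergodicity correspondence then yields \eqref{eq:duality}.

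The main obstacle is establishing the occupancy-measure reformulation rigorously: one must verify that $\mathcal{U}$ is exactly the balance-feasible polytope, that the correspondence $\pi\leftrightarrow\mu^{\pi}$ is onto this polytope so that optimizing over policies and over $\mu$ genuinely coincide, and that $\mathcal{U}$ is compact — each of which leans on the ergodicity Assumption \ref{assump:ergodic_mdp} to ensure a unique, $\rho$-independent stationary distribution. A secondary technical point is justifying the restriction of $\lambda$ to a compact set before applying the minimax theorem, since the raw dual domain $[0,\infty)$ is noncompact; this is precisely where the Slater margin $\delta$ is indispensable, as it simultaneously rules out a duality gap and supplies the explicit bound on $\lambda^{*}$ that legitimizes the $\max$–$\min$ interchange.
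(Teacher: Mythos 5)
The paper does not actually prove this lemma: it is imported verbatim as Lemma 3 of \citep{ding2023convergence}, so there is no in-paper argument to compare against. Your occupancy-measure/LP route is the standard way this result is established for average-reward CMDPs, and the overall architecture — linearity of $J_g^{\pi}$ in $\mu^{\pi}$, the compact convex balance polytope under ergodicity, a Slater point supplied by Assumption \ref{ass_slater}, and a minimax interchange — is the right one.

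Two steps need repair before the argument is airtight. First, your bound $\lambda^{*}\leq(J_r^{\pi^*}-J_r^{\bar\theta})/\delta$ is obtained ``using $J_D^{\lambda^*}=J_r^{\pi^*}$,'' which is exactly the strong-duality conclusion you are trying to prove; as written this is circular. Replace it with the non-circular chain $J_r^{\bar\theta}+\lambda^{*}\delta\leq J_D^{\lambda^*}\leq J_D^{0}=\max_{\pi}J_r^{\pi}\leq 1$, which yields $\lambda^{*}\leq(1-J_r^{\bar\theta})/\delta$ and is all that compactness/attainment requires. Second, you truncate the dual variable to $[0,\Lambda]$ in order to invoke Sion's theorem, but then evaluate the inner minimization ``over $\lambda\geq 0$'' to get $-\infty$ on the infeasible set; on the truncated interval the inner minimum is $\langle r,\mu\rangle+\Lambda\langle c,\mu\rangle$, not $-\infty$, and showing that this quantity is dominated by $J_r^{\pi^*}$ for infeasible $\mu$ is an exact-penalty statement that essentially presupposes the duality you want. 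The clean fix is to recall that Sion's minimax theorem needs compactness of only one of the two sets — here the primal polytope $\mathcal{U}$ — so you may leave $\lambda$ ranging over all of $[0,\infty)$, where the inner infimum genuinely equals $-\infty$ off the feasible set and the interchange immediately gives $\inf_{\lambda\geq 0}J_D^{\lambda}=J_r^{\pi^*}$; the Slater margin is then needed only to show this infimum is attained, via the coercivity bound $J_D^{\lambda}\geq J_r^{\bar\theta}+\lambda\delta$. With these two adjustments your proof goes through.
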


\begin{lemma}[Lemma 16, \cite{bai2024learning}]
    \label{lemma:boundness}
    Consider the parameterized problem \eqref{eq:objective} where $\Theta$ is the collection of all parameters. Under Assumption \ref{ass_slater}, the optimal dual variable, $\lambda^*_{\Theta}=\arg\min_{\lambda\geq 0}\max_{\theta\in \Theta} J_r^{\pi_{\theta}}+\lambda J_c^{\pi_{\theta}}$, for the parameterized problem can be bounded as follows.
    \begin{equation}
        0 \leq \lambda_\Theta^* \leq \frac{J_r^{\pi^*}-J_r(\bar{\theta})}{\delta}\leq \dfrac{1}{\delta}
    \end{equation}
    where $\pi^*$ is an optimal solution to the unparameterized problem \eqref{eq:rewrite_unparameterized} and $\bar{\theta}$ is a feasible parameter mentioned in Assumption \ref{ass_slater}. 
\end{lemma}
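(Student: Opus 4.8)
The plan is to bound $\lambda_\Theta^*$ by sandwiching the optimal dual value of the parameterized problem between a Slater-based lower bound and the optimal value of the unparameterized problem, the latter enjoying strong duality via Lemma~\ref{lem.duality}. First I would introduce the parameterized dual function $D(\lambda)\triangleq \max_{\theta\in\Theta}\big[J_r(\theta)+\lambda J_c(\theta)\big]$, so that by definition $\lambda_\Theta^*\in\arg\min_{\lambda\geq 0}D(\lambda)$ and $D(\lambda_\Theta^*)=\min_{\lambda\geq 0}D(\lambda)$. The non-negativity $0\leq\lambda_\Theta^*$ is immediate from the constraint $\lambda\geq 0$ in this minimization. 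For the lower bound, I would evaluate the inner maximization at the Slater point $\bar\theta$ from Assumption~\ref{ass_slater}: since $\bar\theta\in\Theta$ and $J_c(\bar\theta)\geq\delta$, every $\lambda\geq 0$ satisfies $D(\lambda)\geq J_r(\bar\theta)+\lambda J_c(\bar\theta)\geq J_r(\bar\theta)+\lambda\delta$, and specializing to $\lambda=\lambda_\Theta^*$ gives $D(\lambda_\Theta^*)\geq J_r(\bar\theta)+\lambda_\Theta^*\delta$.

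The upper bound is the key step. Since the parameterized policy class $\{\pi_\theta:\theta\in\Theta\}$ is contained in the set $\Pi$ of all policies, the inner maximum over the smaller set cannot exceed the one over $\Pi$, i.e.\ $D(\lambda)\leq J_D^\lambda\triangleq\max_{\pi\in\Pi}\big[J_r^\pi+\lambda J_c^\pi\big]$ for every $\lambda\geq 0$. Taking the minimum over $\lambda\geq 0$ on both sides and invoking the strong duality of the unparameterized problem (Lemma~\ref{lem.duality}), which gives $\min_{\lambda\geq 0}J_D^\lambda=J_r^{\pi^*}$, I would conclude $D(\lambda_\Theta^*)=\min_{\lambda\geq 0}D(\lambda)\leq\min_{\lambda\geq 0}J_D^\lambda=J_r^{\pi^*}$.

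Combining the two bounds yields $J_r(\bar\theta)+\lambda_\Theta^*\delta\leq D(\lambda_\Theta^*)\leq J_r^{\pi^*}$, and rearranging gives $\lambda_\Theta^*\leq\big(J_r^{\pi^*}-J_r(\bar\theta)\big)/\delta$, which is the middle bound. The final inequality $\leq 1/\delta$ then follows because $r\in[0,1]$ forces $J_r^{\pi^*}\leq 1$ while $J_r(\bar\theta)\geq 0$, so the numerator is at most $1$.

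\textbf{Main obstacle.} The delicate point is the upper bound: the parameterized objectives $J_r(\theta),J_c(\theta)$ are nonconvex in $\theta$, so the parameterized problem need not satisfy strong duality, and one cannot simply identify $D(\lambda_\Theta^*)$ with a parameterized primal optimum. The device that resolves this is precisely to relax the restricted class to the full policy set $\Pi$, an operation that only inflates the dual function, and then to transfer the argument to the unparameterized problem, where strong duality does hold. Everything else reduces to a routine Slater-point evaluation and boundedness of $r$.
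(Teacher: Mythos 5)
Your proof is correct. Note that the paper does not prove this lemma itself --- it imports it verbatim as Lemma 16 of \cite{bai2024learning} --- so there is no in-paper argument to compare against; your reasoning is the standard one for that result: lower-bound the parameterized dual function at the Slater point $\bar\theta$ to get $D(\lambda)\geq J_r(\bar\theta)+\lambda\delta$, upper-bound $\min_{\lambda\geq 0}D(\lambda)$ by relaxing the policy class to $\Pi$ and invoking strong duality of the unparameterized problem (Lemma~\ref{lem.duality}), and combine. All steps check out, including the final crude bound from $r\in[0,1]$, and your remark about why one must pass to $\Pi$ (the parameterized problem being nonconvex and lacking strong duality) correctly identifies the one non-routine point.
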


Notice that in Eq. \eqref{eq:update}, the dual update is projected onto the set $[0,\frac{2}{\delta}]$ because the optimal dual variable for the parameterized problem is bounded in Lemma \ref{lemma:boundness}. We note that our dual updating technique remains the same as \cite{bai2024learning}, however we were able to achieve an improvement of global convergence rate due to the use of natural policy gradient and a more prudent choice of stepsizes ($\alpha, \beta$).

\begin{lemma}
    \label{lem:bridge}
    Assume that the Assumption \ref{ass_slater} holds. Define $v(\tau) = \max_{\pi\in \Pi}\{J_r^{\pi}|J_c^{\pi}\geq \tau\}$ where $\Pi$ is the set of all policies. The following holds for any $\tau\in\mathbb{R}$ where $\lambda^*$ is the optimal dual parameter for the unparameterized problem as stated in Lemma \ref{lem.duality}.
    \begin{equation}
        v(0)-\tau\lambda^* \geq	v(\tau)
    \end{equation}
\end{lemma}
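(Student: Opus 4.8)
The plan is to derive the bound directly from the strong-duality characterization of $v(0)$ supplied by Lemma~\ref{lem.duality}, combined with the fact that $\lambda^*$ is a feasible (hence nonnegative) dual variable. The underlying intuition is that $v(\cdot)$ is the optimal-value (perturbation) function of the constrained problem and $-\lambda^*$ is a supergradient of the concave function $v$ at $\tau=0$; the stated inequality is exactly the resulting supergradient inequality.

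First I would invoke Lemma~\ref{lem.duality} to write
\begin{equation}
    v(0) = J_r^{\pi^*} = J_D^{\lambda^*} = \max_{\pi\in\Pi}\left[J_r^\pi + \lambda^* J_c^\pi\right].
\end{equation}
In particular, for \emph{every} policy $\pi\in\Pi$ this gives the pointwise bound $J_r^\pi + \lambda^* J_c^\pi \le v(0)$, that is,
\begin{equation}
    J_r^\pi \le v(0) - \lambda^* J_c^\pi .
\end{equation}

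Next I would restrict attention to policies feasible for the $\tau$-perturbed problem, namely those with $J_c^\pi \ge \tau$. Since $\lambda^* = \arg\min_{\lambda\ge 0} J_D^\lambda \ge 0$, multiplying $J_c^\pi \ge \tau$ by $-\lambda^*$ reverses the inequality to $-\lambda^* J_c^\pi \le -\lambda^*\tau$, so that $J_r^\pi \le v(0) - \lambda^*\tau$ for every such $\pi$. Taking the supremum of the left-hand side over all policies feasible for the $\tau$-perturbed problem then yields $v(\tau) \le v(0) - \tau\lambda^*$, which is the claim.

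The argument requires no heavy computation; the only points needing care are the handling of the sign of $\tau$ and the possible degeneracy of the perturbed problem. The former is handled uniformly: the pointwise bound $J_r^\pi \le v(0) - \lambda^* J_c^\pi$ holds for all $\pi$, and $\lambda^*\ge 0$ converts the feasibility constraint $J_c^\pi \ge \tau$ into $-\lambda^* J_c^\pi \le -\lambda^*\tau$ regardless of whether $\tau$ is positive or negative, so the result holds for every $\tau\in\mathbb{R}$. For the latter, if the $\tau$-perturbed feasible set is empty then $v(\tau) = -\infty$ by the usual convention and the inequality is trivial; Assumption~\ref{ass_slater} guarantees nonemptiness at least for $\tau \le \delta$, which is the regime used downstream.
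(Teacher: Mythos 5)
Your proof is correct and follows essentially the same route as the paper's: both invoke the strong duality of Lemma \ref{lem.duality} to bound $J_r^\pi + \lambda^* J_c^\pi \leq v(0)$ for all $\pi$, then use $\lambda^* \geq 0$ together with the feasibility condition $J_c^\pi \geq \tau$ and take the supremum over feasible policies. The supergradient interpretation and the remark about the empty-feasible-set convention are nice additional context but do not change the argument.
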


\begin{proof}
    Using the definition of $v(\tau)$, we get $v(0) = J_r^{\pi^*}$ where $\pi^*$ is a solution to the unparameterized problem \eqref{eq:rewrite_unparameterized}. Denote $\mathcal{L}(\pi,\lambda)=J_r^{\pi}+\lambda J_c^{\pi}$. By the strong duality stated in Lemma \ref{lem.duality}, we have the following for any $\pi\in\Pi$.
    \begin{equation}
        \mathcal{L}(\pi,\lambda^*)\leq \max_{\pi\in\Pi} \mathcal{L}(\pi,\lambda^*)\overset{Def}=J_D^{\lambda^*}\overset{\eqref{eq:duality}}=J_r^{\pi^*}=v(0)
    \end{equation}
    Thus, for any $\pi\in\{ \pi\in\Pi \,\vert\,J_c^{\pi} \geq \tau \}$, we can deduce the following.
    \begin{equation}
	\begin{aligned}
            v(0)-\tau\lambda^*\geq \mathcal{L}(\pi,\lambda^*)-\tau\lambda^*=J_r^{\pi}+\lambda^*(J_c^\pi-\tau) \geq J_r^{\pi}
	\end{aligned}
    \end{equation}
    Maximizing the right-hand side of this inequality over $\{ \pi\in\Pi \vert J_{c}^{\pi}\geq \tau \}$ yields
    \begin{equation}
        \label{eq.opt1}
	v(0)- \tau\lambda^* \geq v(\tau)
    \end{equation}
    This completes the proof of Lemma \ref{lem:bridge}.
\end{proof}

\begin{lemma}\label{lem.constraint}
    Let Assumption \ref{ass_slater} hold and $(\pi^*, \lambda^*)$ be the optimal primal and dual parameters for the unparameterized problem \eqref{eq:rewrite_unparameterized}. For any constant $C\geq 2\lambda^*$, if there exist a $\pi\in\Pi$ and $\zeta>0$ such that $J_r^{\pi^*}-J_r^{\pi}+C[-J_c^{\pi}]\leq \zeta$, then 
    \begin{equation}
        -J_c^{\pi}\leq 2\zeta/C
    \end{equation}
\end{lemma}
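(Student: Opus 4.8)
The plan is to invoke the sensitivity bound of Lemma \ref{lem:bridge} with the perturbation level $\tau$ set equal to the constraint value of the given policy $\pi$, and then combine the resulting inequality with the hypothesis to isolate $-J_c^{\pi}$. Before doing so, I would dispose of the trivial case: if $J_c^{\pi}\geq 0$, then $-J_c^{\pi}\leq 0 < 2\zeta/C$ (recall $\zeta>0$ and $C>0$), so there is nothing to prove. Hence I may assume $-J_c^{\pi}>0$.

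Next I would set $\tau = J_c^{\pi}$. Since $\pi$ satisfies $J_c^{\pi}\geq \tau$ (with equality), $\pi$ is feasible for the program defining $v(\tau)=\max_{\pi'\in\Pi}\{J_r^{\pi'}\mid J_c^{\pi'}\geq \tau\}$, so that $v(\tau)\geq J_r^{\pi}$. Recalling from the proof of Lemma \ref{lem:bridge} that $v(0)=J_r^{\pi^*}$, the inequality $v(0)-\tau\lambda^*\geq v(\tau)$ then gives
\begin{equation}
    J_r^{\pi^*}-\lambda^* J_c^{\pi}\;\geq\; J_r^{\pi},
\end{equation}
which rearranges to $J_r^{\pi^*}-J_r^{\pi}\geq -\lambda^*(-J_c^{\pi})$, i.e.\ $-(J_r^{\pi^*}-J_r^{\pi})\leq \lambda^*(-J_c^{\pi})$.

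Finally I would combine this with the hypothesis $J_r^{\pi^*}-J_r^{\pi}+C(-J_c^{\pi})\leq \zeta$. Writing the hypothesis as $C(-J_c^{\pi})\leq \zeta-(J_r^{\pi^*}-J_r^{\pi})$ and substituting the bound $-(J_r^{\pi^*}-J_r^{\pi})\leq \lambda^*(-J_c^{\pi})$ yields
\begin{equation}
    (C-\lambda^*)(-J_c^{\pi})\;\leq\; \zeta .
\end{equation}
The condition $C\geq 2\lambda^*$ (equivalently $\lambda^*\leq C/2$) forces $C-\lambda^*\geq C/2>0$, so dividing by $C-\lambda^*$ and using $-J_c^{\pi}\geq 0$ gives $-J_c^{\pi}\leq \zeta/(C-\lambda^*)\leq 2\zeta/C$, as claimed.

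The argument is essentially mechanical once the correct choice $\tau=J_c^{\pi}$ is made, since Lemma \ref{lem:bridge} already encodes the strong-duality sensitivity relation between the optimal reward and a relaxation of the constraint. The only points requiring care are the sign bookkeeping in passing between $J_c^{\pi}$ and $-J_c^{\pi}$, and the final use of $C\geq 2\lambda^*$ to convert the factor $C-\lambda^*$ into the stated $C/2$; I do not anticipate any genuine obstacle beyond this routine bookkeeping.
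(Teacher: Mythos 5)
Your proposal is correct and follows essentially the same route as the paper's proof: choose $\tau=J_c^{\pi}$, combine $J_r^{\pi}\leq v(\tau)$ with the sensitivity inequality $v(0)-\tau\lambda^*\geq v(\tau)$ from Lemma \ref{lem:bridge} to get $\tau\lambda^*\leq J_r^{\pi^*}-J_r^{\pi}$, and then deduce $(C-\lambda^*)(-\tau)\leq\zeta$ before using $C-\lambda^*\geq C/2$. The only cosmetic difference is your preliminary dismissal of the case $J_c^{\pi}\geq 0$, which the paper's argument does not need.
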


\begin{proof}
    Let $\tau = J_c^{\pi}$. We have the following inequality following the definition of $v(\tau)$ provided in Lemma \ref{lem:bridge}.
    \begin{equation}
        \label{eq.opt2}
	J_r^{\pi}\leq v(\tau)
    \end{equation}
     Combining Eq. \eqref{eq.opt1} and \eqref{eq.opt2}, and using the fact that $v(0)=J_r^{\pi^*}$, we have the following inequality.
    \begin{equation}
        J_r^{\pi}-J_r^{\pi^*}\leq v(\tau)-v(0)\leq -\tau\lambda^*
    \end{equation}
    Using the condition in the Lemma, we have
	\begin{equation}
            (C - \lambda^*) (-\tau) = {\tau} \lambda^*+C (-\tau)\leq J_r^{\pi^*}-J_r^{\pi}+C (-\tau)\leq \zeta
	\end{equation}
    Since $C-\lambda^*\geq C/2 >  0$, we finally have the following inequality. 
    \begin{equation}
        (-\tau)\leq \frac{\zeta}{C-\lambda^*}\leq\frac{2\zeta}{C}
    \end{equation}
    This completes the proof of Lemma \ref{lem.constraint}.
\end{proof}

\end{document}